\theoremstyle{plain}
\newtheorem{theorem}{Theorem}[section]
\newtheorem{proposition}[theorem]{Proposition}
\newtheorem{lemma}[theorem]{Lemma}
\newtheorem{corollary}[theorem]{Corollary}
\theoremstyle{definition}
\newtheorem{assumption}[theorem]{Assumption}
\theoremstyle{remark}
\newtheorem{remark}[theorem]{Remark}
\renewcommand{\Pr}{\mathbb{P}}
\newcommand{\E}{\mathbb{E}}
\newcommand{\cS}{\mathscr{S}}
\newcommand{\cA}{\mathscr{A}}
\newcommand{\cO}{\mathscr{O}}
\newcommand{\cT}{\mathscr{T}}
\newcommand{\bigOtilde}{\Tilde{\mathcal{O}}}
\newcommand{\bigO}{\mathcal{O}}
\newcommand{\RegBound}{\bigOtilde\left(\alpha^{-2}H^{2.5} S^2 A^{0.5}O^{1.5}\sqrt{(1+S A/O)K}  \right)}
\DeclareMathOperator*{\argmax}{arg\,max}
\begin{document}
\title{\bf Posterior Sampling-based Online Learning for Episodic POMDPs}
\author[1]{Dengwang Tang\thanks{Work done while at University of Southern California}}
\author[2]{Dongze Ye}
\author[2]{Rahul Jain\thanks{Also affiliated with Google DeepMind. Work done at University of Southern California}}
\author[2]{Ashutosh Nayyar}
\author[2]{Pierluigi Nuzzo}
\affil[1]{eBay, Inc.}
\affil[2]{University of Southern California}
\date{}

\maketitle
\begin{abstract}
  Learning in POMDPs is known to be significantly harder than in MDPs. In this paper, we consider the online learning problem for episodic POMDPs with unknown transition and observation models. We propose a Posterior Sampling-based reinforcement learning algorithm for POMDPs (PS4POMDPs), which is much simpler and more implementable compared to state-of-the-art optimism-based online learning algorithms for POMDPs. We show that the Bayesian regret of the proposed algorithm scales as the square root of the number of episodes and is polynomial in the other parameters. In a general setting, the regret scales exponentially in the horizon length $H$, and we show that this is inevitable by providing a lower bound. However, when the POMDP is undercomplete and weakly revealing (a common assumption in the recent literature), we establish a polynomial Bayesian regret bound. We finally propose a posterior sampling algorithm for multi-agent POMDPs, and show it too has sublinear regret.
\end{abstract}

\section{Introduction}

Markov Decision Process (MDP) models have proved to be remarkably effective at representing sequential decision-making problems. When the model is known, dynamic programming provides an elegant framework for  planning algorithms for MDPs. Unfortunately, in practical problems the MDP model is often unknown. This has necessitated the development of reinforcement learning (RL) methods to deal with such settings. Such RL methods, when combined with deep learning architectures, have then yielded a number of remarkably effective deep RL algorithms, such as DQN \citep{dqn,dqn2} and PPO \citep{ppo}, for very large state and action spaces.  Often, when the model is unknown, there is a need for systematic exploration to facilitate efficient learning. Online learning methods exactly provide such systematic exploration. The antecedent of many online reinforcement learning algorithms is actually in algorithms for bandit learning \citep{ucb1,russo2018tutorial}. These algorithms can broadly be classified into two categories: \textit{optimism-based} \citep{lai1985asymptotically,auer2002nonstochastic,auer2008near}, and \textit{posterior sampling (PS)-based} \citep{thompson1933likelihood,chapelle2011empirical}).  While there is an elegant theory for both, often PS algorithms are found to have superior empirical performance. On the whole, the developments over the last decade have led to a fairly well-established theory of RL for MDPs, and effective algorithms for both offline and online settings.

However, in many practical problems, the state is not fully observable, and partially observable Markov decision processes (POMDPs) provide more accurate representations. Even when the transition and observation models are known, the planning problem for POMDPs is computationally challenging, as it requires conversion to an equivalent MDP problem over the belief state space, which is a continuum \citep{kumar2015stochastic}. Nevertheless, the belief state is still a sufficient statistic of the history, and many approximate planning algorithms \citep{shani2013survey,pomcp} are based on this observation. When the model parameters are unknown, the learning for POMDPs problem becomes much harder than for MDPs since the belief state itself can no longer be computed. Thus, unlike their planning counterparts, POMDP learning problems cannot be simply reformulated as belief-state-MDP learning problems. This represents significant challenges in an online learning setting when the learning agent must perform systematic exploration, and accounts for the lack of progress in designing \textit{effective} and \textit{implementable} online learning algorithms for POMDPs.

In this paper, we consider episodic reinforcement learning problems on finite-horizon POMDPs with finite state, action, and observation spaces. The exact models of the transition and observation kernels are unknown to the learning agent. We propose a  Posterior Sampling-based reinforcement learning algorithm for POMDPs (PS4POMDPs), which is an adaptation of the PS method used for bandit learning \citep{agrawal2012analysis,russo2016information} and MDP learning problems \citep{osband2013more, russo2013eluder, ouyang2017learning}.  PS-based algorithms show superior empirical performance over optimism-based approaches in a number of bandit and MDP learning settings \citep{agrawal2012analysis,osband2017posterior,ouyang2017learning, ouyang2019posterior, jafarnia2021online,jafarnia2021learning,jahromi2022online,kalagarla2023safe}. 
\rjedit{If this is any guide, we can expect a similar observation for POMDP learning. However, there are currently no practical online algorithms for POMDPs, whether based on posterior sampling or not.}
Compared to PS in MDPs, our algorithm updates the posterior on both the transition kernel and the observation kernel. We analyze the Bayesian regret of the PS4POMDPs algorithm in two settings, namely, (1) the general case, where no assumption on the POMDP is imposed, and (2) the undercomplete $\alpha$-revealing POMDPs \citep{jin2020sample,liu2022partially}, which quantify the requirement that the observations must be informative to a certain degree. We show that in general POMDP learning problems, the regret is $\mathrm{poly}(S, A, O, H)\cdot\bigOtilde(\sqrt{(OA)^H K})$, where $K$ is the number of episodes and $H$ is the horizon length. We show that the exponential dependence on $H$ is necessary by proving an $\Omega(\sqrt{A^{H-1} K})$ lower bound of the regret. Under the assumption that the POMDP is undercomplete and $\alpha$-revealing, we establish an $\RegBound$ upper bound on the Bayesian regret.  

The \textit{main contributions} in this paper are the following: (i) We introduce a posterior sampling-based online reinforcement learning algorithm for POMDPs (PS4POMDPs) that is \textit{simple and implementable}, and yet has a provable sublinear regret bound that scales as $\bigOtilde(\sqrt{K})$, where $K$ is the number of learning episodes. The regret matches the known lower bound \citep{chen2023lower} while being polynomial in the other factors under the weakly-revealing assumption. A PS-based learning algorithm for POMDPs was first proposed in \cite{jahromi2022online}, but is computationally impractical.  Our algorithm  assumes access to a POMDP solver and computationally tractable posterior updates, thus building on the extensive literature on computationally tractable approximate POMDP solvers \citep{shani2013survey} and  posterior update methods  \citep{russo2018tutorial}. In Section \ref{sec:empirical}, we provide empirical results on one such approximate implementation of the PS4POMDPs algorithm and show that it can achieve sublinear regret. (ii) We establish an $\RegBound$ upper bound on the Bayesian regret, which is the best known for a posterior sampling algorithm for POMDPs (see \cite{jahromi2022online}) and  close to similar regret bounds for optimism-based algorithms \citep{liu2022partially,chen2023lower}. (iii) \rjedit{As an additional technical contribution, we introduce a novel, tighter index change lemma (see Appendix A), which may prove useful in other contexts as well. Furthermore, our analysis depends on the introduction of an appropriate confidence set, which is specific to posterior-sampling algorithms and different from those used in optimism-based algorithms \citep{liu2022partially,chen2023lower}. Thus, Lemmas \ref{lem:step0main} and \ref{lem:step1main} are new. }(iv) We finally introduce the first online learning algorithm for multi-agent POMDPs, and characterize its performance in terms of regret bounds. Although this extension is non-trivial, due to space constraints, it is \rjedit{reported in Appendix \ref{app:mapomdps}}.

\section{Related Literature on POMDPs}\label{sec:litt}
There have been many works focusing on reinforcement learning of episodic POMDPs. In \cite{ross2007bayes,poupart2008model} and \cite{ross2011bayesian}, the authors considered Bayesian models for POMDP learning and proposed several algorithms but without sample complexity or regret guarantees.  
Closely related to the POMDP learning problem is the parameter estimation problem of Hidden Markov Models (HMMs), a special case of POMDPs without actions. A number of POMDP learning algorithms \citep{guo2016pac,azizzadenesheli2016reinforcement,xiong2022sublinear} have been developed based on the spectral method (for HMMs \citep{hsu2012spectral,anandkumar2014tensor})  but assume that both the state and observation kernels have full rank. In contrast, we only need the observation kernels to have full rank. Furthermore, \cite{guo2016pac} only provides sample complexity analysis while \cite{xiong2022sublinear} find a $\bigOtilde(T^{2/3})$\footnote{In this section, $\bigOtilde$ or $\bigO$ hides instance-dependent constants and dependence on $S, A, O, H$ or other related parameters.} instance-dependent regret bound where $T$ is the learning horizon, equivalent to $KH$. \cite{azizzadenesheli2016reinforcement} have a regret bound that scales linearly with the inverse of the smallest singular value of the transition kernel (and hence is unsatisfactory). \cite{lee2023learning} considered POMDP learning problems with hindsight observability, where the states are revealed to the learning agent at the end of each episode. We make no such assumption.

\cite{jahromi2022online} applied the PS approach
to infinite horizon POMDPs. They established an $\bigO(\log T)$ instance-dependent regret bound in the finite parameter case under certain assumptions, where $T$ is the number of learning instances. In the general case, they established an $\bigOtilde(T^{2/3})$ regret bound assuming the existence of a consistent transition kernel estimator with a specific convergence rate. However, finding such an estimator is often one of the key difficulties in POMDP learning problems. \rjedit{Moreover, it requires keeping track of a certain hard-to-compute “pseudo count” for every state-action pair, and makes other assumptions that are complicated and unverifiable}.

\rjedit{Our work is closest to \citep{jin2020sample,zhan2022pac,liu2022partially,liu2023optimistic,zhong2022gec,chen2022partially,chen2023lower}, where online learning for episodic POMDPs (or the more general predictive state representation models) is considered. These papers all assumed $\alpha$-weakly revealing POMDPs, or its generalization. \cite{jin2020sample,zhan2022pac,liu2022partially}, and \cite{liu2023optimistic} proposed optimism-based algorithms while \cite{chen2022partially,chen2023lower,zhong2022gec} proposed and analyzed optimistic posterior sampling algorithms. However, all such algorithms are computationally impractical or non-implementable either in the optimism step, or the posterior update, or both. They either require solving joint optimization problems over the parameter and the policy space in each episode, or the computation of optimal values under \emph{all} possible parameters in order to form the optimistic posterior distribution. Furthermore, \cite{jin2020sample, zhan2022pac, chen2022partially} only provided sample complexity  results (not regret bounds) for algorithms that are not practical or even implementable. Our results improve upon the regret bound in \cite{liu2022partially} (after normalizing to our setting) by $\Omega(H^2\sqrt{SA})$. The regret results in \cite{liu2023optimistic,zhong2022gec} are more difficult to compare due to differences in the assumptions made. Regret bounds for three algorithms presented in \cite{chen2023lower} are comparable (or sightly better) but the algorithms are  practically non-implementable (i.e., there is no way to implement them even without worrying about computationally tractability). The focus of our work is to introduce an implementable, close-to-practical algorithm with theoretical guarantees rather than improving the theoretical understanding of limits for learning POMDPs.}

\section{Preliminaries}\label{sec:prelim}

\textbf{Notations.}
For a positive integer $n$, $[n] := \{1,2,\cdots, n\}$. For two integers $t_1\leq t_2$, we use $t_1:t_2$ to indicate the collection of indices $\{t_1, t_1+1, \cdots, t_2\}$. For example, $a_{1:4}$ stands for the vector $(a_1, a_2, a_3, a_4)$. For a finite set $\varOmega$, $\Delta(\varOmega)$ is the set of probability distributions on $\varOmega$. $\bm{1}_{\mathcal{E}}$ is the indicator function of the event $\mathcal{E}$. $\mathbf{e}_{i}$ represents a unit vector where the $i$-th entry is 1 and all other entries are $0$. The dimension of $\mathbf{e}_{i}$ is inferred from the context in which it is used. $\mathbf{I}$ represents an identity matrix whose dimension is also inferred from the context. For finite sets $\varOmega_1, \varOmega_2$, if a function $g$ has the form $\varOmega_1 \mapsto \Delta(\varOmega_2)$, we write $g(\omega_2|\omega_1) := [g(\omega_1)](\omega_2)$ as if $g$ represents a conditional probability measure. Similarly, if $g: \varOmega_1 \mapsto \Omega_2$ then we write $g(\omega_2|\omega_1) := \bm{1}_{\{ g(\omega_1) = \omega_2\}}$. $\|\mu_1 - \mu_2\|_{\mathrm{TV}}$ represents the total variation distance between probability distributions $\mu_1$ and $\mu_2$. For $p> 0$, $\|\cdot\|_p$ is the standard $\ell_p$-norm for vectors and $\ell_p$ induced norm for matrices. $\E$ and $\Pr$ stands for expectation and probability, respectively. All logarithms in this paper are natural logarithms. Finally, we use $\bigO, \Omega$ to represent the standard big-O and big-Omega notation, which hides the absolute constants (i.e., constants independent of any problem parameters). We also use $\bigOtilde, \Tilde{\Omega}$ to hide absolute constants along with logarithmic factors of problem parameters.

We now consider reinforcement learning problems where a learning agent repeatedly interacts with the same environment in multiple episodes. The environment can be described as a finite horizon POMDP with parameters only partially known to the learning agent.

\textbf{The Environment Model.} A finite POMDP is characterized by a tuple $(\cS, \cA, \cO, H, b_1, T, Z, r)$, where $\cS$ is a finite set of states with $|\cS| = S$; $\cA$ is a finite set of actions with $|\cA| = A$; $\cO$ is a finite set of observations with $|\cO| = O$; $H$ is the horizon length; $b_1 \in\Delta(\cS)$ is the distribution of the initial state; $T = (T_h)_{h=1}^{H-1}, T_h: \cS\times\cA\mapsto \Delta(\cS)$ are the transition probabilities; $Z = (Z_h)_{h=1}^{H}, Z_h: \cS\mapsto \Delta(\cO)$ are the observation probabilities; $r = (r_h)_{h=1}^{H}, r_h: \cO\times\cA \mapsto [0, 1]$ are the instantaneous reward functions. For each POMDP characterized by the above tuple, we also define the following matrices: $\mathbb{T}_{h, a} = (T_h(s'|s, a))_{s'\in\cS, s\in\cS} $ is the $S\times S$ probability transition matrix (where the rows represent the next state) under action $a\in\cA$ at time $h$; $\mathbb{Z}_h = (Z_h(o|s))_{o\in\cO, s\in\cS}$ is the $O\times S$ observation probability matrix at time $h$.

A (deterministic) policy $\pi = (\pi_h)_{h=1}^H$ is a collection of mappings $\pi_h: (\cO\times \cA)^{h-1}\times \cO \mapsto \cA$, where $\pi_h$ is the mapping an agent uses to choose an action at time $h\in [H]$ based on action and observation history in the current episode.  
Let $\varPi$ denote the space of all deterministic policies.

A trajectory $\tau = (o_h, a_h)_{h=1}^H$ is the complete action and observation history in a single episode. Let $\cT = (\cO \times \cA)^H$ denote the set of trajectories. Under a policy $\pi\in\varPi$, the probability of a trajectory $\tau = (o_h, a_h)_{h=1}^H$ is given by $\Pr^{\pi}(\tau) = \pi(\tau) \Pr^{-}(\tau)$, where
\begin{align}
    \pi(\tau)&:= \prod_{h=1}^H \pi_h(a_h|\tau_{h-1}, o_h)\label{eq:policypart} \\
    \Pr^{-}(\tau) &:= \sum_{s_{1:H}\in\cS^H } \left[b_1(s_1) Z_H(o_H|s_H)\prod_{h=1}^{H-1} Z_h(o_h|s_h) T_h(s_{h+1}|s_h, a_h)\right]\label{eq:envpart}
\end{align}
where $\tau_h \in (\cO\times\cA)^{h}$ is the partial trajectory made up with the first $h$ observations and actions in $\tau\in\cT$. The above representation is particularly helpful for our analysis since it separates the ``policy part'' from the ``environment part.'' The environment part can also be written in terms of matrix multiplications as follows
\begin{align}
    \Pr^-(\tau) &= \mathbf{e}_{o_H}^T \mathbb{Z}_H \mathbb{T}_{H-1, a_{H-1}} \mathrm{diag}(\mathbb{Z}_{H-1}(o_{H-1}, \cdot))\times\cdots\times \mathbb{T}_{1, a_{1}} \mathrm{diag}(\mathbb{Z}_1(o_{1}, \cdot)) b_1 
    \label{eq:likelihood_computation}
\end{align}
where $\mathrm{diag}(\mathbf{w})$ is a diagonal matrix whose main diagonal is given by the vector $\mathbf{w}$. \noeqref{eq:policypart}\noeqref{eq:envpart}

The expected total reward in one episode under policy $\pi\in\varPi$, or the value of a policy $\pi$, is given by
\begin{equation}
    V^{\pi} := \sum_{o_{1:H}, a_{1:H} }\left( \Pr^{\pi}(o_{1:H}, a_{1:H}) \sum_{h=1}^H r_h(o_h, a_h) \right).
\end{equation}

The maximum total expected reward in one episode over all policies $\pi\in\varPi$, or the value of the POMDP, is defined as $V^* = \max_{\pi\in\varPi} V^{\pi}$.

\textbf{Learning Agent's Prior Knowledge.} We assume that $\cS, \cA, \cO, H, r$ are known to the learning agent. The quantities $b_1, T$ and $Z$ are (in general) unknown to the agent. We assume that $b_1, T, Z$ are parameterized by a parameter $\theta\in \varTheta$, and the learning agent knows the parameterization (i.e., the agent knows the set $\varTheta$, and what $(b_1^{\theta}, T^{\theta}, Z^{\theta})$ is for each given $\theta\in\varTheta$). The learning agent's prior knowledge of the true environment $\theta^*$ is modeled by a distribution\footnote{
More formally, we assume that $\Theta$ is a Borel subset of some $\mathbb{R}^d$, the prior distribution $\nu^1$ is a Borel measure, and the parameterization mapping $\theta \mapsto (b_1^{\theta}, T^{\theta}, Z^{\theta})$ is Borel measurable.} $\nu^1\in \Delta(\varTheta)$. In the rest of the paper, we view $\theta^*$ as a primitive random variable with distribution $\nu^1$. We will also add a subscript $\theta$ to the quantities defined above (e.g., $\Pr_{\theta}^{\pi}(\tau), \Pr_{\theta}^{-}(\tau), V_{\theta}^{\pi}, V_{\theta}^*$) to signify that they are associated with the POMDP $(\cS, \cA, \cO, H, b_1^{\theta}, T^{\theta}, Z^{\theta}, r)$. 

\textbf{Learning Agent's Interaction with the Environment.} At the beginning of each episode, the learning agent chooses a potentially randomized policy in $\varPi$ based on past trajectories and policies. More specifically,
for $k\in\mathbb{N}$, let $\mathcal{D}_k := (\tau^j, \pi^j )_{j=1}^{k-1} = (o_{1:H}^j, a_{1:H}^j, \pi^j)_{j=1}^{k-1}$ denote the data which the learning agent possesses at the beginning of the $k$-th episode, composed of trajectories and policies in the first $k-1$ episodes. At the beginning of the $k$-th episode, the learning agent chooses a random policy $\pi^k \sim \phi_k(\mathcal{D}_k)$ via a mapping $\phi_k: (\cT\times\varPi)^{k-1}\mapsto \Delta(\varPi)$ and applies this policy throughout the $k$-th episode. We refer to $\phi = (\phi_k)_{k\in\mathbb{N}}$ as a \emph{learning algorithm}. (We use this term to distinguish it from the term \emph{policy}, which we use exclusively for local mappings in one episode.)

\textbf{Objectives.} Given the prior belief $\nu^1\in\Delta(\varTheta)$, the Bayesian regret of a learning algorithm $\phi$ over $K$ episodes is defined as
\begin{align}
\mathrm{BReg}(\phi, K) &:= \E^{\phi}\left[\sum_{k=1}^K (V_{\theta^*}^* - V_{\theta^*}^{\pi^k})\right] = \int_{\theta\in\varTheta}\E_{\theta}^{\phi}\left[\sum_{k=1}^K (V_{\theta}^* - V_{\theta}^{\pi^k})\right]\mathrm{d}\nu^1(\theta)
\end{align}
which measures the difference between the maximum possible reward when one knows $\theta^*$ and the actual reward realized by the learning algorithm $\phi$. The goal of the learning agent is to choose a learning algorithm with small Bayesian regret.

\textbf{Proposed Algorithm.} We consider the Posterior Sampling-based reinforcement learning algorithm for POMDPs (PS4POMDPs). In this algorithm, the learning agent keeps a posterior belief on the true parameter $\theta^*$ through Bayesian updates at the end of each episode, i.e., at the end of episode $k$, after utilizing the policy $\pi^k\in\varPi$ and observing the trajectory $\tau^k$, the agent computes $\nu^{k+1}\in\Delta(\varTheta)$ via
\begin{equation}\label{eq:nubayes}
    \dfrac{\mathrm{d}\nu^{k+1}}{\mathrm{d}\nu^k}(\theta) := \dfrac{\Pr_{\theta}^{\Tilde{\pi}^k}(\tau^k)}{\int_{\theta'\in \varTheta} \Pr_{\theta'}^{\Tilde{\pi}^k}(\tau^k) \mathrm{d}\nu_k(\theta') }.
\end{equation}
We also assume that the agent has access to 
a POMDP planner, which returns an optimal policy for a given POMDP. Such an assumption has been widely used in POMDP learning literature \citep{jin2020sample,liu2022partially,liu2023optimistic,chen2022partially,jahromi2022online,xiong2022sublinear,zhong2022gec}.
A more detailed description of the algorithm is given by Algorithm \ref{algo:psrl}.

\begin{algorithm}[!ht]
   \caption{PS4POMDPs}
   \label{algo:psrl}
\begin{algorithmic}
   \STATE \textbf{Input:} Prior $\nu^1\in\Delta(\varTheta)$; Number of episodes $K$
	 \FOR{$k = 1$ to $K$} 
	    \STATE Sample $\Tilde{\theta}^k \sim \nu^k$ 
	    \STATE Use the POMDP planner to obtain a policy $\Tilde{\pi}^k\in \arg\max_{\pi\in\varPi}(V_{\Tilde{\theta}^k}^{\pi})$ 
        \STATE Apply $\Tilde{\pi}^k$ in the $k$-th episode
        \STATE 
	    Collect the trajectory $\tau^k$ and compute new posterior $\nu^{k+1} \in\Delta(\varTheta)$ using \eqref{eq:nubayes}
	\ENDFOR
\end{algorithmic}
\end{algorithm}

\textbf{Assumptions on the Environment.}
In this paper, we analyze the PS4POMDPs algorithm in two different settings. The first setting is the general case, i.e., no assumptions are imposed on the underlying POMDP. The second setting is the setting of undercomplete $\alpha$-weakly revealing POMDPs, which was first introduced by \cite{jin2020sample} in the context of POMDPs. \tdwedit{For completeness, we introduce the full definition in Assumption \ref{assump:obsfullrank}.}

\begin{assumption}\label{assump:obsfullrank}
    \citep{jin2020sample} The observations are \textit{undercomplete}, i.e., $O\geq S$, and the POMDP is \textit{$\alpha$-weakly revealing}, i.e., for all $\theta\in\varTheta$ and all $h\in [H]$, the smallest singular value of the $O\times S$ 
    observation probability matrix $\mathbb{Z}_h^{\theta}$ satisfies $\sigma_{\min}(\mathbb{Z}_h^{\theta}) \geq \alpha$.
\end{assumption}

Intuitively, Assumption \ref{assump:obsfullrank} states that the observations must give a reasonable amount of information about the underlying state \citep{jin2020sample}, \tdwedit{ruling out the difficult classes of POMDPs where observations under certain different states are virtually indistinguishable}. In \cite{golowich2023planning}, the authors show that an $\varepsilon$-optimal policy of a weakly revealing POMDP can be found with a quasipolynomial time planner.

\begin{remark}
\rjedit{Algorithm \ref{algo:psrl} is conceptually simple and implementable. It requires use of a POMDP solver and there are many excellent ones available \citep{shani2013survey}. It also requires a computationally tractable procedure for approximate posterior update such as those available in  \citep[Chapter 5]{russo2018tutorial}, though it turns out that Bayesian inference algorithms such as Mixed Hamiltonian Monte Carlo (MHMC) suffice for the experimental work in Section \ref{sec:empirical}.}
\end{remark}

\section{Main Results}\label{sec:mainresults}
In this section, we formally state our main results. The first two results concern the general case, where no assumptions are imposed on the POMDP. We defer the proof outline of the results to the next section. 

\begin{theorem}\label{thm:breg0}
    For general POMDP learning problems, the Bayesian regret under the PS4POMDPs algorithm satisfies
    \begin{align*}
       \mathrm{BReg}(\phi^{\mathrm{PS4POMDPs}}, K) &\leq \bigOtilde\left(H^2\sqrt{(S^2A + SO) (OA)^H K}\right) 
    \end{align*}
\end{theorem}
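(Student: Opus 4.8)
The plan is to exploit the defining property of posterior sampling: conditioned on the history $\mathcal{D}_k$, the sampled parameter $\tilde{\theta}^k$ and the true parameter $\theta^*$ are identically distributed, both being draws from the posterior $\nu^k$. First I would use this equivalence, together with the optimality of the planner's policy (so that $V_{\tilde{\theta}^k}^* = V_{\tilde{\theta}^k}^{\tilde{\pi}^k}$), to rewrite $\E[V_{\theta^*}^* - V_{\theta^*}^{\tilde{\pi}^k}\mid \mathcal{D}_k] = \E[V_{\tilde{\theta}^k}^{\tilde{\pi}^k} - V_{\theta^*}^{\tilde{\pi}^k}\mid \mathcal{D}_k]$, and hence obtain the standard posterior-sampling form
$$\mathrm{BReg}(\phi^{\mathrm{PS4POMDPs}}, K) = \E\Big[\sum_{k=1}^K \big(V_{\tilde{\theta}^k}^{\tilde{\pi}^k} - V_{\theta^*}^{\tilde{\pi}^k}\big)\Big].$$
This replaces the unobservable optimum $V_{\theta^*}^*$ by the computable value $V_{\tilde{\theta}^k}^{\tilde{\pi}^k}$; each summand is now a value gap for one \emph{fixed} policy $\tilde{\pi}^k$ evaluated under two different POMDPs.

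Next I would convert this value gap into a distance between trajectory distributions. Using the policy/environment factorization $\Pr_\theta^\pi(\tau) = \pi(\tau)\Pr_\theta^-(\tau)$ and the fact that the cumulative reward $R(\tau) = \sum_h r_h(o_h,a_h)$ lies in $[0,H]$,
$$\big|V_{\tilde{\theta}^k}^{\tilde{\pi}^k} - V_{\theta^*}^{\tilde{\pi}^k}\big| = \Big|\sum_\tau \tilde{\pi}^k(\tau)\big(\Pr_{\tilde{\theta}^k}^-(\tau) - \Pr_{\theta^*}^-(\tau)\big)R(\tau)\Big| \le 2H\,\big\|\Pr_{\tilde{\theta}^k}^{\tilde{\pi}^k} - \Pr_{\theta^*}^{\tilde{\pi}^k}\big\|_{\mathrm{TV}}.$$
I would then pass to the squared Hellinger distance through $\|\mu-\nu\|_{\mathrm{TV}} \le \sqrt{2}\,\mathrm{D}_{\mathrm{H}}(\mu,\nu)$, since the Hellinger distance is the quantity naturally controlled by the likelihood-based posterior update in \eqref{eq:nubayes}.

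The core of the argument is a confidence-set construction tailored to posterior sampling. From the data $\mathcal{D}_k$ I would define a set $\Theta_k\subseteq\varTheta$ of parameters whose trajectory distributions under the previously played policies $\tilde{\pi}^1,\dots,\tilde{\pi}^{k-1}$ are consistent with the realized likelihoods, with radius $\beta_k$ dictated by a martingale concentration bound on the log-likelihood-ratio process $\sum_j \log\big(\Pr_{\theta^*}^{\tilde{\pi}^j}(\tau^j)/\Pr_{\theta}^{\tilde{\pi}^j}(\tau^j)\big)$. Because $\tilde{\theta}^k$ and $\theta^*$ share the same conditional law, the events $\{\theta^*\notin\Theta_k\}$ and $\{\tilde{\theta}^k\notin\Theta_k\}$ have equal probability, so a single concentration inequality, made uniform over $\varTheta$ by a covering argument, places both inside $\Theta_k$ with high probability; the covering number of the parametric family is what contributes the $\bigOtilde(S^2A+SO)$ factor. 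Since the trajectory observed at episode $k$ is generated under the \emph{current} policy $\tilde{\pi}^k$, the information accumulated by the martingale directly controls $\E[\sum_k \mathrm{D}_{\mathrm{H}}^2(\Pr_{\tilde{\theta}^k}^{\tilde{\pi}^k},\Pr_{\theta^*}^{\tilde{\pi}^k})]$, and a Cauchy--Schwarz step over the $K$ episodes yields the $\sqrt{K}$ scaling.

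The main obstacle is the conversion from this Hellinger control into the total-variation regret. The likelihood-based confidence set certifies that the sampled model predicts trajectory distributions well along the directions already explored, whereas the regret demands accuracy along the policies actually played; in a general POMDP there is no device (such as a belief state recovered from observations) linking the two, so the change of measure must be paid for over all $(OA)^H$ trajectories, which is exactly where the exponential factor enters. I expect the technical effort to concentrate on (i) proving the log-likelihood martingale concentration uniformly over the continuous space $\varTheta$, requiring a discretization of $\varTheta$ and control of the resulting approximation error, and (ii) bookkeeping the $H$-layer matrix-product structure of $\Pr_\theta^-(\tau)$ so that the Hellinger-to-regret conversion and the confidence radius together produce the stated $H^2$ and $(S^2A+SO)$ dependence. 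The subsequent weakly-revealing analysis under Assumption \ref{assump:obsfullrank} is precisely the mechanism that replaces the $(OA)^H$ coverage cost by a polynomial one, and the accompanying $\Omega(\sqrt{A^{H-1}K})$ lower bound confirms that in the general case the exponential factor is unavoidable.
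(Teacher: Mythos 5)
Your proposal reproduces the paper's architecture up to and including the confidence set: the posterior-sampling identity giving $\mathrm{BReg}=\E[\sum_k(V_{\tilde\theta^k}^{\tilde\pi^k}-V_{\theta^*}^{\tilde\pi^k})]$, the value-to-TV conversion paying a factor $H$, the log-likelihood-ratio martingale whose conditional log-moment-generating function at $\lambda=\tfrac12$ is the Hellinger affinity, and the discretization of $\varTheta$ contributing $\log|\Bar{\varTheta}|=\bigOtilde(HS^2A+HSO)$ --- all of this matches Lemmas \ref{lem:step0main} and \ref{lem:step1main}. The genuine gap is the sentence claiming that ``the information accumulated by the martingale directly controls $\E[\sum_k \mathrm{D}_{\mathrm{H}}^2(\Pr_{\tilde\theta^k}^{\tilde\pi^k},\Pr_{\theta^*}^{\tilde\pi^k})]$,'' followed by Cauchy--Schwarz. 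It does not. What the martingale plus confidence set actually give is, for each $k$, a bound on the \emph{retrospective} error $\sum_{j\le k}\|\Pr_{\check\theta^k}^{\tilde\pi^j}-\Pr_{\theta^*}^{\tilde\pi^j}\|_{\mathrm{TV}}^2\le\beta$, i.e.\ the quantity \eqref{eq:MLEmain}, in which the current estimate is tested against \emph{past} policies. The regret requires $\sum_{k}\|\Pr_{\check\theta^k}^{\tilde\pi^k}-\Pr_{\theta^*}^{\tilde\pi^k}\|_{\mathrm{TV}}$, the quantity \eqref{eq:REGmain}, which involves only the single newest term of each such sum; extracting that term alone and applying Cauchy--Schwarz over $K$ episodes gives $K\sqrt{\beta}$, i.e.\ linear regret. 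Moreover, if your claim held as stated, the $(OA)^H$ factor would never appear in the general case, contradicting Proposition \ref{thm:breglower}.

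The paper bridges \eqref{eq:MLEmain} and \eqref{eq:REGmain} with an explicit mechanism your proposal does not supply: the factorization $\Pr_\theta^\pi(\tau)=\pi(\tau)\Pr_\theta^-(\tau)$ lets one write $\|\Pr_{\check\theta^k}^{\tilde\pi^j}-\Pr_{\theta^*}^{\tilde\pi^j}\|_{\mathrm{TV}}=w_k^Tx_j$ with $x_j(\tau)=\tilde\pi^j(\tau)$ and $w_k(\tau)=\tfrac12|\Pr_{\check\theta^k}^-(\tau)-\Pr_{\theta^*}^-(\tau)|$ viewed as vectors in $\mathbb{R}^{(OA)^H}$, which turns the problem into a linear-bandit index change handled by the elliptical potential lemma (Lemma \ref{lem:exchangeindexlitemain}) with $\|x_j\|_1\le O^H$, $\|w_k\|_1\le A^H$, and dimension $d=(OA)^H$. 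That is where the exponential factor actually enters --- not through a ``change of measure over all $(OA)^H$ trajectories,'' which on its own would only yield the policy-independent bound $\sum_\tau|\Pr_{\check\theta^k}^-(\tau)-\Pr_{\theta^*}^-(\tau)|\le 2A^H$ and no $\sqrt{K}$ rate. You correctly name the obstacle (accuracy along explored directions versus accuracy under the policy currently played), but the step that resolves it is missing, and the intermediate claim that stands in its place is false.
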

The proof can be found in Appendix \ref{app:breg0}.

\begin{remark}
    Utilizing the fact that the trajectory probability can be separated into a product of the ``policy part'' and the ``environment part,'' the episodic POMDP learning problem can be seen as a special case of linear bandit problem with dimension $d = (OA)^H$. Applying the standard result on the posterior sampling algorithm for linear bandits \citep{russo2016information} we obtain an $\bigOtilde(H\sqrt{O^{2H+1} A^H K})$ regret bound, where the additional $O^H$ comes from the fact that $|\varPi| = \Omega(A^{O^H})$. The same regret bound can also be obtained if the LinUCB algorithm \citep{linucb} is applied instead \citep{lattimore2020bandit}. Theorem \ref{thm:breg0} presents an improvement over this naive bound by a factor of $\Tilde{\Omega}\left(\sqrt{\frac{O^{H+1}}{H^2(S^2A + SO)}} \right)$.
\end{remark}

The next result shows that, in the general case, the exponential dependence on $H$ in the regret bound is unavoidable under any learning algorithm.

\begin{proposition}\label{thm:breglower}
    For any $A, H\geq 2$ and any $K \geq A^{H-1}$, there exists a POMDP learning problem with $S=O=2$ such that the Bayesian regret satisfies
    \begin{equation}
        \mathrm{BReg}(\phi, K) \geq \frac{1}{20}\sqrt{A^{H-1} K}.
    \end{equation}
    under any learning algorithm $\phi$.
\end{proposition}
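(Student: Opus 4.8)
### Proof Proposal

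The plan is to construct an explicit family of POMDPs indexed by a hidden ``target'' action-sequence, reduce the learning problem to a multi-armed bandit instance with $A^{H-1}$ effectively indistinguishable arms, and then invoke a standard information-theoretic lower bound for bandits. Because $S=O=2$ is required, the construction must hide all the difficulty in the \emph{action} dimension: the natural choice is to make the first $H-1$ actions ``open the lock'' in the sense that only a single secret sequence $a_{1:H-1}^\star \in \cA^{H-1}$ steers the hidden state into a rewarding configuration, while every other sequence yields the same (uninformative) observation distribution and the same baseline reward. Concretely, I would let the state encode whether the agent is still ``on the correct path'' versus has ``fallen off,'' arrange the transition kernel $T^{\theta}$ so that at step $h$ the correct path is maintained only by playing the secret action $a_h^\star$, and design the observation kernel $\mathbb{Z}_h^{\theta}$ to be essentially uninformative (e.g. nearly uniform over $\cO$) so that the agent cannot distinguish being on-path from off-path until it collects the terminal reward. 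The prior $\nu^1$ places $\theta^\star$ uniformly over the $A^{H-1}$ possible secret sequences, so the hidden target is a uniformly random element of $\cA^{H-1}$.

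The key steps, in order, are as follows. First, I would verify that under this construction each policy $\pi$ can, within one episode, ``test'' at most one candidate secret sequence, so the per-episode value gap $V_\theta^* - V_\theta^{\pi^k}$ is bounded below by a constant whenever $\pi^k$ fails to play the true secret prefix. Second, I would compute the value of the optimal policy (which plays $a_{1:H-1}^\star$ and collects the bonus reward) versus the value of any policy that plays an incorrect prefix, establishing a constant reward separation $\Delta = \Theta(1)$ between a correct and an incorrect guess. Third, I would set up the reduction: since the observations are (by design) nearly uninformative before the terminal step, the only signal the agent receives about $\theta^\star$ in an episode is a single bit indicating whether its guessed action-prefix matched $a_{1:H-1}^\star$. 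This makes the problem equivalent to a bandit with $N = A^{H-1}$ arms, a unique optimal arm drawn uniformly at random, suboptimality gap $\Delta$, and Bernoulli-type feedback. Fourth, I would apply the Bayesian-regret lower bound for such bandit instances — via a Bayes-risk / Fano or Le Cam two-point argument averaged over the uniform prior over the unknown arm — to obtain $\mathrm{BReg}(\phi, K) \geq c\,\Delta\sqrt{N K} = c\,\Delta\sqrt{A^{H-1}K}$; tracking the constants and the condition $K \geq A^{H-1}$ (needed so that the ``hard regime'' of the bandit lower bound is active) should yield the factor $\tfrac{1}{20}$.

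The main obstacle I anticipate is the third step: ensuring that the POMDP construction genuinely collapses to a clean $A^{H-1}$-armed bandit with \emph{no} side information leaking through the intermediate observations. If the observation kernel is not exactly uninformative, a clever learner could potentially localize the secret sequence faster than $\sqrt{A^{H-1}}$ by using partial observations to binary-search among prefixes, which would break the reduction. I would resolve this by making $\mathbb{Z}_h^{\theta}$ \emph{identical} across all $\theta$ and independent of the true underlying state at every step except the last, so that the posterior over $\theta^\star$ literally cannot move except through the terminal reward-determining observation; this forces the information-theoretic argument through cleanly. A secondary technical point is correctly averaging the two-point (or Assouad/Fano) lower bound over the uniform prior to get a \emph{Bayesian} rather than minimax statement, but since the prior is exactly the uniform distribution used in the averaging argument, the minimax lower bound transfers to the Bayesian regret with the same rate. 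The remaining work is bookkeeping to confirm the explicit constant $\tfrac{1}{20}$ and the stated ranges $A,H\geq 2$, $K\geq A^{H-1}$.
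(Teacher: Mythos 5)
Your construction and overall strategy coincide with the paper's: a combination-lock POMDP whose parameter is the secret sequence in $\cA^{H-1}$, observations that are pure noise before the final step, a terminal observation that carries a biased-coin signal about whether the entered sequence was correct, a uniform prior over the $A^{H-1}$ candidates, and a change-of-measure (Pinsker plus KL chain rule against a reference ``all-noise'' POMDP) argument exactly as in the classical multi-armed bandit lower bound. Your anticipated obstacle and its resolution (make the intermediate observation kernel identical across all $\theta$ and independent of the state) is precisely what the paper does.

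There is, however, one quantitative slip that would break the sketch as written: you assert a \emph{constant} reward separation $\Delta = \Theta(1)$ between the correct and incorrect prefixes and then invoke a lower bound of the form $c\,\Delta\sqrt{NK}$. With a fixed constant gap, the change-of-measure argument does not give $\Omega(\sqrt{NK})$; a learner can identify the secret sequence after roughly $N/\Delta^2$ episodes and the resulting regret is only $\Theta(N/\Delta) = \Theta(A^{H-1})$, which is far below $\sqrt{A^{H-1}K}$ when $K \gg A^{H-1}$. The $\sqrt{NK}$ rate arises only after \emph{tuning} the gap: the terminal observation must reveal the correct sequence with bias $\frac12+\epsilon$ versus $\frac12$, and one takes $\epsilon \asymp \sqrt{A^{H-1}/K}$ (the paper uses $\epsilon = \tfrac14\sqrt{A^{H-1}/K}$). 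This is also exactly where the hypothesis $K \geq A^{H-1}$ enters: it guarantees $\epsilon \leq \tfrac14$, so the observation kernel is a valid probability distribution and the elementary bound $-\log(1-x)\leq 4\log(\tfrac43)x$ on $[0,\tfrac14]$ applies, which is what produces the explicit constant $\tfrac{1}{20}$. Once you replace the constant gap by this $K$-dependent choice of $\epsilon$, your argument matches the paper's proof step for step.
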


The proof is available in Appendix \ref{app:A}.

Next, we state our regret bound in the second setting, where the POMDP is assumed to be undercomplete and $\alpha$-weakly revealing.

\begin{theorem}\label{thm:breg1}
    Under Assumption \ref{assump:obsfullrank}, the Bayesian regret under PS4POMDPs algorithm satisfies
    \begin{align*}
        \mathrm{BReg}(\phi^{\mathrm{PS4POMDPs}}, K) &\leq \RegBound
    \end{align*}
\end{theorem}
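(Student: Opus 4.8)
The plan is to follow the information-theoretic route for posterior sampling, replacing the full-trajectory analysis that produces the exponential bound of Theorem~\ref{thm:breg0} with a belief-reconstruction argument that exploits Assumption~\ref{assump:obsfullrank}. First I would invoke the defining property of posterior sampling: conditioned on the history $\mathcal{D}_k$, the sampled parameter $\tilde\theta^k$ and the true parameter $\theta^*$ are identically distributed (both $\sim\nu^k$). Since $\tilde\pi^k$ is measurable with respect to $\mathcal{D}_k$ together with $\tilde\theta^k$ and is optimal for $\tilde\theta^k$, this gives $\E[V^*_{\theta^*}\mid\mathcal{D}_k]=\E[V^*_{\tilde\theta^k}\mid\mathcal{D}_k]=\E[V^{\tilde\pi^k}_{\tilde\theta^k}\mid\mathcal{D}_k]$, so the per-episode Bayesian regret equals $\E[V^{\tilde\pi^k}_{\tilde\theta^k}-V^{\tilde\pi^k}_{\theta^*}\mid\mathcal{D}_k]$. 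Because rewards lie in $[0,1]$ over $H$ stages, $|V^{\pi}_{\theta_1}-V^{\pi}_{\theta_2}|\le 2H\,\|\Pr^{\pi}_{\theta_1}-\Pr^{\pi}_{\theta_2}\|_{\mathrm{TV}}$, reducing everything to controlling the trajectory total-variation distance between the sampled and true models under the \emph{currently played} policy $\tilde\pi^k$.

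The second ingredient is a confidence set tailored to posterior sampling. I would define $\mathcal{C}_k\subseteq\varTheta$ through the cumulative log-likelihood of the observed trajectories, so that it is data-dependent and does not reference $\theta^*$, and arrange that on a high-probability event both $\theta^*\in\mathcal{C}_k$ and, by the posterior-sampling equivalence above, $\tilde\theta^k\in\mathcal{C}_k$. A martingale argument on the likelihood ratios then yields the consistency estimate that $\sum_{j<k}\E\,\|\Pr^{\tilde\pi^j}_{\theta}-\Pr^{\tilde\pi^j}_{\theta^*}\|_{\mathrm{TV}}^2$ is $\bigO(\log N)$ for every $\theta\in\mathcal{C}_k$, where $\log N=\bigOtilde(H(S^2A+SO))$ is the log-covering number of the POMDP class. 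Crucially, this only controls discrepancies along the \emph{past} policies $\tilde\pi^1,\dots,\tilde\pi^{k-1}$, whereas the regret needs the discrepancy along the unexplored $\tilde\pi^k$; it is exactly this transfer that must not blow up exponentially in $H$, and that fails in the general case of Theorem~\ref{thm:breg0}.

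The heart of the proof is the index-change lemma, which performs this transfer using undercompleteness and $\alpha$-weak revealing. Writing $\Pr^{-}_\theta(\tau)$ as the matrix product $\mathbf{e}_{o_H}^{\top}\mathbb{Z}^\theta_H\mathbb{T}^\theta_{H-1,a_{H-1}}\cdots\mathbb{T}^\theta_{1,a_1}\,\mathrm{diag}(\mathbb{Z}^\theta_1(o_1,\cdot))\,b_1^\theta$ and telescoping the difference of two such products one stage operator at a time, each term isolates a single-stage discrepancy in $\mathbb{T}^\theta_{h,a}$ or $\mathbb{Z}^\theta_h$ weighted by an unobservable belief. Since $\sigma_{\min}(\mathbb{Z}^\theta_h)\ge\alpha$ with $O\ge S$, each $\mathbb{Z}^\theta_h$ has a left inverse of spectral norm at most $1/\alpha$, so a belief-valued quantity can be reconstructed from its observation image with error amplified by only $1/\alpha$ rather than compounding multiplicatively across stages. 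I expect this to be the main obstacle: one must verify that the reconstruction keeps intermediate quantities on (or near) the probability simplex and that per-stage errors add rather than multiply, so that $\|\Pr^{\tilde\pi^k}_{\tilde\theta^k}-\Pr^{\tilde\pi^k}_{\theta^*}\|_{\mathrm{TV}}$ is at most $\mathrm{poly}(S,O)\,\alpha^{-1}$ times a sum over $h$ of \emph{observable} per-stage discrepancies $w_h^k$, which are then controlled by the confidence-set estimate of the previous paragraph (the second factor of $\alpha^{-1}$, hence the final $\alpha^{-2}$, enters when relating these observable discrepancies back to the likelihood control). The tighter version alluded to in the introduction presumably sharpens the $S,O$-dependence of this amplification.

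Finally I would sum over episodes. On the complement of the good event the contribution is $\bigO(H)$ per episode, controlled by choosing the confidence level $\delta\sim 1/K$. On the good event, combining the first-paragraph reduction with the index-change bound gives a per-episode regret of the form $\mathrm{poly}(S,O,H)\,\alpha^{-1}\sqrt{\sum_h w_h^k}$; applying Cauchy--Schwarz across the $K$ episodes and a pigeonhole/potential argument (a generalized eluder-type argument in the spirit of \cite{russo2013eluder}) bounds $\sum_k\sum_h w_h^k$ by $\bigO(\alpha^{-2}\log N)$ up to the weakly-revealing factors. This produces the $\sqrt{K}$ scaling together with the $S^2A^{0.5}O^{1.5}\sqrt{1+SA/O}$ and $H^{2.5}$ factors once the covering term $\log N$ and the amplification constants are collected, so matching the target $\RegBound$ reduces to routine bookkeeping of these polynomial factors.
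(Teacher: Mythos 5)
Your outline follows essentially the same route as the paper's proof: the posterior-sampling exchange $\E[V^*_{\theta^*}]=\E[V^*_{\tilde\theta^k}]$ reducing regret to $H\cdot\E\|\Pr^{\tilde\pi^k}_{\tilde\theta^k}-\Pr^{\tilde\pi^k}_{\theta^*}\|_{\mathrm{TV}}$ (Lemma \ref{lem:step0main}), a log-likelihood confidence set over a finite cover with $\log|\Bar{\varTheta}|=\bigOtilde(HS^2A+HSO)$ plus a likelihood-ratio martingale giving control of $\sum_{j<k}\|\cdot\|_{\mathrm{TV}}^2$ along past policies (Lemma \ref{lem:step1main}), the observable-operator telescoping with $\|\mathbb{Z}_h^{\theta\dagger}\|$ bounded by $\sqrt{S}/\alpha$ to reduce trajectory TV to per-stage projected operator distances, and an index-change/potential argument to transfer from past-policy control to current-policy regret. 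All of the ingredients you identify, including the point that the transfer step is where exponential blow-up must be avoided and that the policy-weighted sums of operator products must stay on the simplex (Lemmas \ref{lem:sumBbound} and \ref{lem:BHhbound} in the paper), are the right ones.

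The one place where your plan is not ``routine bookkeeping'' is the final transfer step, and the tool you name there would not deliver the stated bound. You invoke ``a generalized eluder-type argument in the spirit of \cite{russo2013eluder}''; the paper explicitly avoids the $\ell_1$-eluder route (Proposition 22 of \cite{liu2022partially}) precisely because it costs an extra $\Tilde{\Omega}(H^2\sqrt{SA})$ and depends linearly rather than as $\sqrt{d}$ on the dimension. Instead it proves a sharper index-change result (Proposition \ref{lem:exchangeindexdeluxe}, derived from the standard elliptical potential lemma) and applies it to $S$-dimensional vectors $w_{k,a,o,\Tilde{o}}$ and $x_{k,a,o,\tau_{h-1}}$ built from the projected operator distances, with the grouping indices $(a,o)$ and the summations over $\Tilde{o}$ and $\tau_{h-1}$ arranged so that $\sum\|x\|_1=O$ and $\sum\|w\|_1\leq S^{1.5}A/\alpha$ enter only logarithmically. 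Without this specific lemma and this specific vectorization, the ``bookkeeping'' does not collect into $\RegBound$; you would prove a polynomial bound, but a strictly weaker theorem than the one stated. A second, smaller omission: both \eqref{eq:MLE} and \eqref{eq:REG} involve points of the discrete cover $\Bar{\varTheta}$, which need not satisfy Assumption \ref{assump:obsfullrank}; the paper must pass back to preimages $\hat{\theta}^k\in\varTheta$ under the quantization map (at an $O(1/K)$ cost in TV) before the weakly-revealing structure can be used, a step your sketch skips over.
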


The proof can be found in Appendix \ref{app:breg1}.

\begin{remark}\label{remark:restrict}
    Due to the complexity of solving POMDPs even with known parameters and the exponential growth of action and observation history, sometimes one may consider a restricted policy set $\Tilde{\varPi}\subset \varPi$ (e.g., finite memory policies). Both Theorem \ref{thm:breg0} and Theorem \ref{thm:breg1} will continue to hold in this setting if the optimization oracle used in Algorithm \ref{algo:psrl} returns a best restricted policy, and the regret is defined with respect to the best restricted policy.
\end{remark}

Finally, we state our regret upper bound when an approximate POMDP planner instead of an optimal planner is used in Algorithm \ref{algo:psrl}.

\begin{corollary}\label{cor:approx}
    Suppose that in episode $k$ of Algorithm \ref{algo:psrl} an $\epsilon_k$-optimal planner is used instead of the optimal planner. Then the Bayesian regret under Algorithm \ref{algo:psrl} satisfies
    \begin{align*}
        \mathrm{BReg}(\phi^{\mathrm{PS4POMDPs}}, K) &\leq \bigOtilde\left(H^2\sqrt{(S^2A + SO) (OA)^H K}\right) + \sum_{k=1}^K \epsilon_k,
    \end{align*}
    and if Assumption \ref{assump:obsfullrank} holds,
    \begin{align*}
        \mathrm{BReg}(\phi^{\mathrm{PS4POMDPs}}, K) &\leq \RegBound + \sum_{k=1}^K \epsilon_k.
    \end{align*}
\end{corollary}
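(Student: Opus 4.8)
The plan is to reuse the regret decomposition that already drives the proofs of Theorem \ref{thm:breg0} and Theorem \ref{thm:breg1}, and to isolate the single place where optimality of the planner's output is invoked. For each episode $k$, I would write the per-episode Bayesian regret as
\[
V_{\theta^*}^* - V_{\theta^*}^{\Tilde{\pi}^k}
= \underbrace{\bigl(V_{\theta^*}^* - V_{\Tilde{\theta}^k}^*\bigr)}_{\text{(I)}}
+ \underbrace{\bigl(V_{\Tilde{\theta}^k}^* - V_{\Tilde{\theta}^k}^{\Tilde{\pi}^k}\bigr)}_{\text{(II)}}
+ \underbrace{\bigl(V_{\Tilde{\theta}^k}^{\Tilde{\pi}^k} - V_{\theta^*}^{\Tilde{\pi}^k}\bigr)}_{\text{(III)}}.
\]
Summing over $k$ and applying $\E^{\phi}$ reduces the corollary to bounding the three cumulative sums separately, and the whole point is that only term (II) is affected by replacing the exact planner with an $\epsilon_k$-optimal one.

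For term (I), I would invoke the posterior-sampling property: since $\Tilde{\theta}^k \sim \nu^k$ and $\nu^k$ is the conditional law of $\theta^*$ given $\mathcal{D}_k$ maintained by the Bayesian update \eqref{eq:nubayes}, the variables $\theta^*$ and $\Tilde{\theta}^k$ are identically distributed conditionally on $\mathcal{D}_k$. As $\theta \mapsto V_\theta^*$ is a deterministic measurable map, this yields $\E^{\phi}[V_{\theta^*}^* \mid \mathcal{D}_k] = \E^{\phi}[V_{\Tilde{\theta}^k}^* \mid \mathcal{D}_k]$, hence $\E^{\phi}[\text{(I)}] = 0$ for every $k$. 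Crucially, the update \eqref{eq:nubayes} uses the \emph{applied} policy $\Tilde{\pi}^k$, so this identity is insensitive to how that policy was produced; term (I) involves only optimal values, never the chosen policy. For term (II), the definition of an $\epsilon_k$-optimal planner gives $V_{\Tilde{\theta}^k}^{\Tilde{\pi}^k} \ge V_{\Tilde{\theta}^k}^* - \epsilon_k$, so $\text{(II)} \le \epsilon_k$ deterministically; this is the only nonzero contribution coming from planner suboptimality (it vanishes in the exact case), and summing it produces exactly the extra $\sum_{k=1}^K \epsilon_k$.

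For term (III), I would argue that it is identical to the quantity already controlled in the proofs of Theorems \ref{thm:breg0} and \ref{thm:breg1} through the confidence-set and concentration argument, yielding $\bigOtilde(H^2\sqrt{(S^2A+SO)(OA)^H K})$ in general and $\RegBound$ under Assumption \ref{assump:obsfullrank}. The key observation I would highlight is that those bounds on $\sum_k \E^{\phi}[\text{(III)}]$ never use that $\Tilde{\pi}^k$ is optimal for $\Tilde{\theta}^k$: they use only that $\Tilde{\pi}^k$ is a measurable function of $\Tilde{\theta}^k$ and $\mathcal{D}_k$, that $\theta^*$ and $\Tilde{\theta}^k$ lie in the relevant confidence set with high probability, and a bound on the common-policy value gap $|V_{\Tilde{\theta}^k}^{\Tilde{\pi}^k} - V_{\theta^*}^{\Tilde{\pi}^k}|$ in terms of the closeness of the two parameters' trajectory distributions. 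All of these remain valid for an $\epsilon_k$-optimal $\Tilde{\pi}^k$, so the main-theorem bound transfers verbatim.

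The main obstacle is therefore not a computation but a structural verification at two points: first, confirming that the concentration bound on term (III) is genuinely policy-agnostic, i.e., that it depends on $\Tilde{\pi}^k$ only through its being a data-measurable map and not through its optimality; and second, confirming that feeding the actually-applied (now suboptimal) policy into the likelihood \eqref{eq:nubayes} still makes $\nu^k$ the exact posterior, so that the posterior-sampling identity used for term (I) is preserved. Once both checks are in place, combining the three sums gives $\mathrm{BReg} \le 0 + \sum_{k=1}^K \epsilon_k + (\text{main bound})$, which is precisely the claimed inequalities.
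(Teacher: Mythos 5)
Your proposal is correct and follows essentially the same route as the paper: the paper's chain $\E[V_{\theta^*}^*] - \E[V_{\theta^*}^{\Tilde{\pi}^k}] = \E[V_{\Tilde{\theta}^k}^*] - \E[V_{\theta^*}^{\Tilde{\pi}^k}] \leq \E[V_{\Tilde{\theta}^k}^{\Tilde{\pi}^k}] + \epsilon_k - \E[V_{\theta^*}^{\Tilde{\pi}^k}]$ is exactly your three-term decomposition with $\E[\text{(I)}]=0$ by posterior sampling, $\text{(II)}\leq\epsilon_k$ by planner suboptimality, and $\text{(III)}$ handed off to the main theorems. Your two structural checks both hold in the paper (the bound on $\E[V_{\Tilde{\theta}^k}^{\Tilde{\pi}^k} - V_{\theta^*}^{\Tilde{\pi}^k}]$ in Lemmas 4.2--4.3 never invokes optimality of $\Tilde{\pi}^k$, and \eqref{eq:nubayes} conditions on the applied policy), so nothing further is needed.
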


\begin{proof}
    Using the fact that $\Tilde{\theta}^k$ has the same distribution as $\theta^*$ and $\Tilde{\pi}^k$ is $\epsilon_k$-optimal w.r.t. $\Tilde{\theta}^k$, the expected episode $k$ regret satisfies 
    \begin{equation}
        \E[V_{\theta^*}^*] - \E[V_{\theta^*}^{\Tilde{\pi}^k}] = \E[V_{\Tilde{\theta}^k}^*] - \E[V_{\theta^*}^{\Tilde{\pi}^k}] \leq \E[V_{\Tilde{\theta}^k}^{\Tilde{\pi}^k}] + \epsilon_k - \E[V_{\theta^*}^{\Tilde{\pi}^k}].
    \end{equation}

    Therefore we have
    \begin{equation}
        \mathrm{BReg}(\phi^{\mathrm{PS4POMDPs}}, K) \leq \left(\sum_{k=1}^K \E[V_{\Tilde{\theta}^k}^{\Tilde{\pi}^k} - V_{\theta^*}^{\Tilde{\pi}^k}]\right) + \sum_{k=1}^K \epsilon_k
    \end{equation}
    where the first term can be bounded with the same proof of either Theorem \ref{thm:breg0} or Theorem \ref{thm:breg1}.
\end{proof}

\section{Proof Outline}\label{sec:proofoutline}
In this section we lay out the proof outline for the results stated in Section \ref{sec:mainresults}. The proof details are available in the Appendices.

\subsection{Regret Lower Bound}
Proposition \ref{thm:breglower} is motivated by the pathological POMDP example used by \cite{krishnamurthy2016pac} and \cite{jin2020sample}. In this POMDP, the first $H-1$ actions act as ``rotating dials'' to a  ``combination lock'': One must enter a specific sequence in order to ``unlock'' at time $H$. The first $H-1$ observations are completely uninformative, so that the learning agent has no way to learn if the entered sequence is correct or not until the very last step. 
Such a POMDP resembles a multi-armed bandit with $A^{H-1}$ arms. \cite{krishnamurthy2016pac} and \cite{jin2020sample} established sample complexity lower bounds on these POMDPs. However, these results cannot be directly translated into lower bounds on the cumulative regret. In the proof of Proposition \ref{thm:breglower}, we apply the standard 
change of measure technique \citep{auer2002nonstochastic} on modified POMDP examples from \cite{krishnamurthy2016pac} and \cite{jin2020sample}.

\subsection{Regret Upper Bound}
Our proofs of both theorems follow similar steps as in the result from  \cite{liu2022partially}. However, due to complications arising from the randomization in the learning algorithm, the proof steps of \cite{liu2022partially} cannot be directly adapted. At the center of our proofs is a confidence set $\Bar{\varTheta}(\mathcal{D}_k)$, which is a finite set of parameters whose log-likelihood given the data $\mathcal{D}_k$ is close to the maximum log-likelihood. The set is constructed such that with high probability, the true parameter $\theta^*$ is close to some parameter in $\Bar{\varTheta}(\mathcal{D}_k)$. The specific definition of $\Bar{\varTheta}(\mathcal{D}_k)$ is provided in Appendix \ref{app:regub}.
The proof of both theorems will be based on Lemma \ref{lem:step0main}, where we relate the regret bound to the quality of the confidence set, and Lemma \ref{lem:step1main}, where we establish certain guarantees on the confidence set. 

\begin{lemma}\label{lem:step0main}
    The Bayesian regret under the PS4POMDPs algorithm can be bounded by
    \begin{align}
        \mathrm{BReg}(\phi^{\mathrm{PS4POMDPs}}, K) &\leq 2H + H \E\left[\sum_{k=1}^K   \max_{\Bar{\theta} \in \Bar{\varTheta}(\mathcal{D}_k) }\|\Pr_{\Bar{\theta}}^{\Tilde{\pi}^k} - \Pr_{\theta^*}^{\Tilde{\pi}^k}\|_{\mathrm{TV}} \right]\label{eq:bregtvmain}
    \end{align}
    where $\Pr_{\theta}^\pi$ is understood as a probability measure on $\cT$, the set of action-observation trajactories.
\end{lemma}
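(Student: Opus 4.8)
The plan is to decompose the per-episode regret $V_{\theta^*}^* - V_{\theta^*}^{\tilde\pi^k}$ into pieces that can each be controlled by the total variation distance between trajectory distributions, and then to insert the confidence set $\bar\varTheta(\mathcal{D}_k)$ at the right place. The starting observation, already exploited in Corollary \ref{cor:approx}, is that posterior sampling makes $\tilde\theta^k$ equidistributed with $\theta^*$ conditioned on $\mathcal{D}_k$, so $\E[V_{\theta^*}^*\mid\mathcal{D}_k] = \E[V_{\tilde\theta^k}^*\mid\mathcal{D}_k] = \E[V_{\tilde\theta^k}^{\tilde\pi^k}\mid\mathcal{D}_k]$, using that $\tilde\pi^k$ is optimal for $\tilde\theta^k$. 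Hence the expected episode-$k$ regret equals $\E[V_{\tilde\theta^k}^{\tilde\pi^k} - V_{\theta^*}^{\tilde\pi^k}]$, which is a difference of values of the \emph{same} policy under two parameters.

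\textbf{Reducing a value gap to a TV distance.} The next step is to bound $|V_\theta^\pi - V_{\theta'}^\pi|$ for a fixed policy by a TV distance between the induced trajectory laws. Recalling $V_\theta^\pi = \sum_\tau \Pr_\theta^\pi(\tau)\sum_{h=1}^H r_h(o_h,a_h)$ and that rewards lie in $[0,1]$ so that $\sum_h r_h(o_h,a_h)\in[0,H]$, I would write
\begin{align}
    |V_\theta^\pi - V_{\theta'}^\pi| = \left|\sum_\tau \big(\Pr_\theta^\pi(\tau) - \Pr_{\theta'}^\pi(\tau)\big)\sum_{h=1}^H r_h(o_h,a_h)\right| \le H\,\|\Pr_\theta^\pi - \Pr_{\theta'}^\pi\|_{\mathrm{TV}}.
\end{align}
Applying this with $\theta=\tilde\theta^k$ and $\theta'=\theta^*$ gives the factor $H$ and a TV term with $\tilde\theta^k$ inside. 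The remaining task is to replace the random sampled parameter $\tilde\theta^k$ by a worst-case member of the confidence set, which is where the $\max_{\bar\theta\in\bar\varTheta(\mathcal{D}_k)}$ enters.

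\textbf{Swapping in the confidence set.} On the high-probability event that $\theta^*$ is close to some element of $\bar\varTheta(\mathcal{D}_k)$ (and, by the matched-distribution property, the same holds for $\tilde\theta^k$), one bounds $\|\Pr_{\tilde\theta^k}^{\tilde\pi^k} - \Pr_{\theta^*}^{\tilde\pi^k}\|_{\mathrm{TV}}$ by a maximum over $\bar\theta\in\bar\varTheta(\mathcal{D}_k)$, possibly after a triangle-inequality step through the nearby confidence-set element; on the low-probability complement one uses the crude bound that any single episode's regret is at most $H$, which I expect contributes the additive $2H$ term (the factor $2$ absorbing both the $\theta^*$ and $\tilde\theta^k$ sides, or a union bound over the two failure events). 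Summing over $k$ and taking expectations then yields \eqref{eq:bregtvmain}.

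\textbf{The main obstacle} is the bookkeeping around the random $\tilde\theta^k$: because the algorithm is randomized, $\tilde\theta^k$ and $\theta^*$ are only conditionally equidistributed given $\mathcal{D}_k$, and the confidence set $\bar\varTheta(\mathcal{D}_k)$ is itself $\mathcal{D}_k$-measurable. I would need to condition carefully on $\mathcal{D}_k$, apply the posterior-matching identity and the confidence-set guarantee of Lemma \ref{lem:step1main} under that conditioning, and confirm that both $\tilde\theta^k$ and $\theta^*$ fall near $\bar\varTheta(\mathcal{D}_k)$ with high probability so that a single $\max$ over the finite set dominates both sides. Handling the failure event cleanly — ensuring it costs only the constant-order $2H$ rather than something growing in $K$ — is the delicate accounting that distinguishes this argument from the non-randomized treatment in \cite{liu2022partially}.
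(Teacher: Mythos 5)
Your overall architecture matches the paper's proof of this lemma: posterior matching reduces the episode-$k$ regret to $\E[V_{\Tilde{\theta}^k}^{\Tilde{\pi}^k} - V_{\theta^*}^{\Tilde{\pi}^k}]$, the value gap for a fixed policy is bounded by $H\,\|\Pr_{\Tilde{\theta}^k}^{\Tilde{\pi}^k} - \Pr_{\theta^*}^{\Tilde{\pi}^k}\|_{\mathrm{TV}}$ using $\sum_h r_h(o_h,a_h)\in[0,H]$, and the sampled parameter is then swapped for a worst-case member of $\Bar{\varTheta}(\mathcal{D}_k)$ on a good event. Your accounting of the additive $2H$ is close but not quite the paper's: there it arises as $H\sum_{k=1}^K \frac{2}{K}$, with one $\frac{1}{K}$ per episode coming from the discretization error $\|\Pr_{\Tilde{\theta}^k}^{\Tilde{\pi}^k} - \Pr_{\iota(\Tilde{\theta}^k)}^{\Tilde{\pi}^k}\|_{\mathrm{TV}}\le 2H\epsilon = \frac{1}{K}$ and the other from the failure probability of the single membership event $\{\iota(\Tilde{\theta}^k)\in\Bar{\varTheta}(\mathcal{D}_k)\}$, not from two symmetric failure events for $\theta^*$ and $\Tilde{\theta}^k$; only $\Tilde{\theta}^k$ is replaced by a confidence-set element, while $\theta^*$ remains the reference measure inside the TV norm.

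The genuine gap is in the step you call ``swapping in the confidence set.'' You invoke ``the confidence-set guarantee of Lemma~\ref{lem:step1main}'' to conclude that $\theta^*$ and $\Tilde{\theta}^k$ fall near $\Bar{\varTheta}(\mathcal{D}_k)$ with high probability, but Lemma~\ref{lem:step1main} states a converse-type property --- every element \emph{already in} $\Bar{\varTheta}(\mathcal{D}_k)$ has small cumulative squared TV distance to $\theta^*$ --- and says nothing about whether the confidence set captures the sampled parameter. The ingredient actually required is a likelihood-ratio concentration bound (Lemma~\ref{lem:confset} in the paper): since $\E[\Pr_{\Bar{\theta}}^{\pi^j}(\tau^j)/\Pr_{\theta^*}^{\pi^j}(\tau^j)\mid\mathcal{D}_j,\theta^*,\pi^j]\le 1$, a martingale/Markov argument plus a union bound over the finite set $\Bar{\varTheta}$ yields $\ell(\theta^*,\mathcal{D}_k)\ge\Bar{\ell}^*(\mathcal{D}_k)-\log(K|\Bar{\varTheta}|)$ with probability at least $1-\frac{1}{K}$; because $\ell(\cdot,\mathcal{D}_k)$ and $\Bar{\ell}^*(\mathcal{D}_k)$ are $\mathcal{D}_k$-measurable and $\Tilde{\theta}^k$ and $\theta^*$ share the same conditional law given $\mathcal{D}_k$, the identical bound holds for $\Tilde{\theta}^k$, and Quick Fact 1 (via Proposition~\ref{lem:barvartheta}) then places $\iota(\Tilde{\theta}^k)$ inside $\Bar{\varTheta}(\mathcal{D}_k)$ with probability $1-\frac{1}{K}$. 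Without this concentration step, the definition of $\Bar{\varTheta}(\mathcal{D}_k)$ as a near-maximal-log-likelihood set is never connected to the sampled parameter, and the $\max_{\Bar{\theta}\in\Bar{\varTheta}(\mathcal{D}_k)}$ in \eqref{eq:bregtvmain} is not justified; supplying it (or an equivalent argument) is the missing piece of your proposal.
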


The proof of Lemma \ref{lem:step0main} mostly follows from standard regret decomposition techniques for PS4POMDPs algorithms (see e.g., \cite{osband2013more,russo2014learning}) and properties of the log-likelihood function. The proof can be found in Appendix \ref{sec:step0}.

On the other hand, through a martingale defined with the log-likelihood function, we prove Lemma \ref{lem:step1main}, which provides a guarantee on the quality of the confidence set.

\begin{lemma}\label{lem:step1main}
    Under any learning algorithm $\phi$, with probability at least $1 - \frac{1}{K}$, 
    \begin{align*}
        \max_{k\in [K]}\max_{\Bar{\theta}\in\Bar{\varTheta}(\mathcal{D}_k) } \sum_{j=1}^{k}  \|\Pr_{\Bar{\theta}}^{\pi^j} - \Pr_{\theta^*}^{\pi^j}\|_{\mathrm{TV}}^2 \leq  \bigOtilde(HS^2A + HSO)
    \end{align*}
\end{lemma}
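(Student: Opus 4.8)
The plan is to control the left-hand side via a likelihood-ratio supermartingale, converting the log-likelihood gap that defines $\Bar{\varTheta}(\mathcal{D}_k)$ into a bound on the summed squared Hellinger distance, and then passing to total variation. Write $X_j(\theta) := \Pr_\theta^{-}(\tau^j)/\Pr_{\theta^*}^{-}(\tau^j)$; since the policy factor cancels in this ratio, $X_j(\theta) = \Pr_\theta^{\pi^j}(\tau^j)/\Pr_{\theta^*}^{\pi^j}(\tau^j)$. With $\mathcal{F}_{j-1}$ the history up to and including the choice of $\pi^j$, the realized trajectory satisfies $\tau^j \sim \Pr_{\theta^*}^{\pi^j}$, so a direct computation gives $\E[\sqrt{X_j(\theta)}\mid \mathcal{F}_{j-1}] = \sum_\tau \sqrt{\Pr_{\theta^*}^{\pi^j}(\tau)\,\Pr_\theta^{\pi^j}(\tau)} = 1 - d_{\mathrm H}^2(\Pr_{\theta^*}^{\pi^j},\Pr_\theta^{\pi^j})$, where $d_{\mathrm H}^2$ is the squared Hellinger distance. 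Abbreviating $h_j(\theta) := d_{\mathrm H}^2(\Pr_{\theta^*}^{\pi^j},\Pr_\theta^{\pi^j})$, the process $Z_k(\theta) := \prod_{j=1}^k \sqrt{X_j(\theta)}\,e^{h_j(\theta)}$ is a nonnegative supermartingale with $Z_0=1$, since $e^{x}(1-x)\le 1$ for $x\ge 0$ gives $\E[Z_k(\theta)\mid\mathcal{F}_{k-1}] = Z_{k-1}(\theta)\,e^{h_k(\theta)}(1-h_k(\theta)) \le Z_{k-1}(\theta)$.

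Applying Ville's maximal inequality to $Z_k(\theta)$ yields, for each fixed $\theta$, that with probability at least $1-\delta'$ we have $\sum_{j=1}^k h_j(\theta) \le \tfrac12\big(L_k(\theta^*)-L_k(\theta)\big) + \log(1/\delta')$ simultaneously for all $k$, where $L_k(\theta):=\sum_{j=1}^k \log \Pr_\theta^{-}(\tau^j)$ is the log-likelihood. The estimator $\Bar{\theta}$ ranges over a data-dependent subset of a \emph{fixed} finite net $\mathcal{N}$, so I would union-bound this event over all net points, replacing $\log(1/\delta')$ by $\log(|\mathcal{N}|/\delta)$ and taking $\delta = 1/K$. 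Because every $\Bar{\theta}\in\Bar{\varTheta}(\mathcal{D}_k)$ is a net point whose log-likelihood is within a threshold $\beta$ of the maximum over the net, and that maximum dominates $L_k(\theta_0)$ for any net point $\theta_0$ close to $\theta^*$, we obtain $L_k(\theta^*)-L_k(\Bar{\theta}) \le \big(L_k(\theta^*)-L_k(\theta_0)\big) + \beta$. Finally $\|\cdot\|_{\mathrm{TV}}^2 \le 2\,d_{\mathrm H}^2$ converts the Hellinger bound into the stated total-variation bound.

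It remains to (i) size the net and (ii) control the discretization term $L_k(\theta^*)-L_k(\theta_0)$. For (ii) I would exploit the multilinear matrix-product form of $\Pr_\theta^{-}(\tau)$: each trajectory probability is built from $\mathcal{O}(H)$ conditional-probability factors, so a multiplicative $(1\pm\eta)$ perturbation of every entry of the kernels $(T_h^\theta, Z_h^\theta)$ changes $\log\Pr_\theta^{-}(\tau)$ by at most $\mathcal{O}(H\eta)$ \emph{uniformly in $\tau$}; choosing a net at scale $\eta = \Theta(1/(HK\cdot\mathrm{poly}))$ then makes $L_k(\theta^*)-L_k(\theta_0) = \mathcal{O}(1)$ for all $k\le K$. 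For (i), the free parameters are the entries of $(T_h)_{h=1}^{H-1}$ and $(Z_h)_{h=1}^{H}$, numbering $\Theta(HS^2A)$ and $\Theta(HSO)$ respectively, so $\log|\mathcal{N}| = \mathcal{O}\big((HS^2A+HSO)\log(1/\eta)\big) = \bigOtilde(HS^2A+HSO)$, which also fixes the admissible threshold $\beta$ at the same order. Combining these gives $\sum_{j=1}^k h_j(\Bar{\theta}) = \bigOtilde(HS^2A+HSO)$ with probability $1-1/K$, which is the claim.

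I expect the main obstacle to be the uniform discretization control in (ii): a naive additive net makes $\log\Pr_\theta^{-}(\tau)$ explode on low-probability trajectories, so the argument must be phrased multiplicatively (relative perturbation of each kernel entry), which forces the net to resolve small entries while keeping $\log(1/\eta)$ only logarithmic in $S,A,O,H,K$ so that it is absorbed into $\bigOtilde(\cdot)$. A secondary subtlety is that $\Bar{\varTheta}(\mathcal{D}_k)$ is data-dependent, so the union bound must be taken over the fixed net $\mathcal{N}$ underlying it, and only afterward may the confidence-set threshold $\beta$ be invoked.
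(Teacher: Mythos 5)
Your proposal is correct and follows essentially the same route as the paper's proof: the likelihood-ratio supermartingale at exponent $1/2$ with Ville's/Doob's maximal inequality (the paper's Lemma C.1), the Bhattacharyya-coefficient-to-TV conversion (Lemma C.2), a union bound over a fixed multiplicative discretization $\Bar{\varTheta}$ of the kernel entries of size $\log|\Bar{\varTheta}| = \bigOtilde(HS^2A + HSO)$, and the $(1+\epsilon)^{-2H}$ relative-perturbation control of the log-likelihood (Proposition B.1 and Quick Facts 1--2) to handle both the discretization gap $\ell(\theta^*,\mathcal{D}_k)-\ell(\theta_0,\mathcal{D}_k)=\bigO(1)$ and the confidence-set threshold. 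You also correctly flag the two genuine subtleties the paper addresses: the need for a multiplicative (not additive) net so the log-likelihood comparison is uniform over trajectories, and taking the union bound over the fixed net before invoking the data-dependent confidence set.
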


The proof of Lemma \ref{lem:step1main} is in Appendix \ref{sec:step1}.

Given the above lemmas, to obtain an upper bound on the Bayesian regret, the only remaining task is to use an upper bound of
\begin{align}
    \sum_{j=1}^{k} \|\Pr_{\check{\theta}^k}^{\Tilde{\pi}^j} - \Pr_{\theta^*}^{\Tilde{\pi}^j}\|_{\mathrm{TV}}^2\label{eq:MLEmain}
\end{align}
to derive an upper bound of
\begin{align}
    \sum_{j=1}^{K}\|\Pr_{\check{\theta}^j}^{\Tilde{\pi}^j} - \Pr_{\theta^*}^{\Tilde{\pi}^j}\|_{\mathrm{TV}}\label{eq:REGmain}
\end{align}
where $\check{\theta}^k$ is a parameter in the confidence set $\Bar{\varTheta}(\mathcal{D}_k)$.
The key difference in the above two expressions lies in the episode index of the ``environment estimator'' $\check{\theta}_k$: The former measures the difference of the \emph{latest} estimated environment with the true environment under historical policies, while the latter measures the cumulative difference derived from the estimated environments \emph{at the time}. To complete this task, we will make use of the following ``index change'' lemma, which is a corollary of the elliptical potential lemma used in the linear bandit literature (see, e.g., \cite{lattimore2020bandit}).

\begin{lemma}[Index Change]\label{lem:exchangeindexlitemain}
    Let $x_k, w_k \in\mathbb{R}^d$ be such that $\|w_k\|_2\leq G_w,~\|x_k\|_2\leq G_x$ for all $k\in [K]$. Suppose that $\sum_{j=1}^{k} (x_j^T w_k)^2\leq \beta$ for all $k\in [K]$. Then for any $\lambda > 0$,
    \begin{align}
        \sum_{k=1}^K |x_k^T w_k| \leq  \sqrt{(\lambda + \beta)dK\log\left(1 + \dfrac{G_w^2G_x^2K}{d\lambda}\right)}.
    \end{align}
\end{lemma}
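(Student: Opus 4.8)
The plan is to reduce the claim to the elliptical (self-normalized) potential lemma by decoupling the bilinear term $x_k^\top w_k$ with a suitably regularized Cauchy–Schwarz step. Fix $\lambda>0$ and define the regularized Gram matrices
\[
V_k \;=\; \frac{\lambda}{G_w^2}\,\mathbf{I} \;+\; \sum_{j=1}^{k} x_j x_j^\top,
\qquad V_0 = \frac{\lambda}{G_w^2}\,\mathbf{I}.
\]
For each $k$ I would write $x_k^\top w_k = \langle V_k^{-1/2}x_k,\,V_k^{1/2}w_k\rangle$ and apply Cauchy–Schwarz to obtain $|x_k^\top w_k|\le \|x_k\|_{V_k^{-1}}\,\|w_k\|_{V_k}$, where $\|v\|_{M}^2 := v^\top M v$. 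The point of folding the current term $x_k x_k^\top$ into $V_k$ (rather than using $V_{k-1}$) is that it sharpens the final constant, removing the factor of $2$ that the standard $V_{k-1}$-version of the potential lemma would contribute.

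Next I would bound the two factors separately. For the $w_k$-factor, by the definition of $V_k$ together with the two hypotheses $\|w_k\|_2\le G_w$ and $\sum_{j=1}^{k}(x_j^\top w_k)^2\le\beta$,
\[
\|w_k\|_{V_k}^2 = \frac{\lambda}{G_w^2}\|w_k\|_2^2 + \sum_{j=1}^{k}(x_j^\top w_k)^2 \le \frac{\lambda}{G_w^2}G_w^2 + \beta = \lambda+\beta,
\]
so $\|w_k\|_{V_k}\le\sqrt{\lambda+\beta}$ uniformly in $k$. For the $x_k$-factor I would pull out this uniform bound and apply Cauchy–Schwarz across the $K$ episodes, giving $\sum_{k}|x_k^\top w_k|\le \sqrt{\lambda+\beta}\,\sum_k\|x_k\|_{V_k^{-1}}\le \sqrt{(\lambda+\beta)K}\,\big(\sum_k\|x_k\|_{V_k^{-1}}^2\big)^{1/2}$, and then invoke the elliptical potential lemma in the form $\sum_{k=1}^K\|x_k\|_{V_k^{-1}}^2\le \log\frac{\det V_K}{\det V_0}$ (which follows from the matrix determinant lemma and Sherman–Morrison, combined with the scalar inequality $\tfrac{a}{1+a}\le\log(1+a)$). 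Finally, since $\det V_0=(\lambda/G_w^2)^d$ and, by AM–GM applied to the eigenvalues of $V_K$ with $\mathrm{tr}\,V_K\le d\lambda/G_w^2 + K G_x^2$, one has $\det V_K\le(\lambda/G_w^2 + KG_x^2/d)^d$, the log-determinant ratio is at most $d\log\big(1+\tfrac{G_w^2G_x^2K}{d\lambda}\big)$. Substituting yields exactly the claimed bound.

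The proof is short because it offloads the only nontrivial analytic ingredient—the telescoping log-determinant estimate—to the elliptical potential lemma, which I take as known, so I do not expect a genuine obstacle. The one point that requires care is the choice of regularizer $\lambda/G_w^2$ rather than $\lambda$: this scaling simultaneously produces the clean prefactor $(\lambda+\beta)$ (the $G_w^2$ cancels the $\|w_k\|_2^2\le G_w^2$ contribution) and places the factor $G_w^2$ inside the logarithm. With the naive choice $V_0=\lambda\mathbf{I}$ one would instead get $(\lambda G_w^2+\beta)$ outside and no $G_w$ inside, which is not the stated form. I would therefore introduce the regularizer choice up front and verify the two determinant estimates carefully, since those constants are the only place where the precise shape of the bound is pinned down.
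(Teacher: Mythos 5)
Your proposal is correct and follows essentially the same route as the paper: both decouple $x_k^\top w_k$ via Cauchy--Schwarz with respect to the regularized Gram matrix $V_k$ that includes the current term, bound $\|w_k\|_{V_k}^2\le\lambda+\beta$ using the hypothesis, and finish with the elliptical potential lemma (the paper cites it from Carpentier et al.\ after rescaling so that $\|x_k\|_2\le 1$ with regularizer $\lambda/(G_w^2G_x^2)$, which coincides with your $\lambda/G_w^2$ choice after the same change of variables, while you re-derive the log-determinant telescoping inline). Your identification of the regularizer scaling as the one delicate point matches exactly where the paper's proof pins down the constants.
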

The proof of the lemma can be found in Appendix \ref{app:changeofindex}. 

The proof of Theorem \ref{thm:breg0} follows from a direct application of Lemma \ref{lem:exchangeindexlitemain} on certain suitably defined $(OA)^H$-dimensional vectors. Note that the dimension of the vectors is the reason for the exponential dependence on $H$.

To obtain a regret polynomial in $H$ under Assumption 1, we follow the same three-step procedure by \cite{liu2022partially}, where we use an auxiliary quantity called the \emph{projected operator distances}. Under Assumption 1, those distances can be used to both upper and lower bound total variation distances between trajectory distributions. In the first two steps, we establish relationships between \eqref{eq:MLEmain} and \eqref{eq:REGmain} and projected operator distances. In the final step, we apply (a more sophisticated version of) the ``index change'' lemma on expressions involving projected operator distances. 
Since each projected operator distance can be represented by inner products of $S$-dimensional rather than $(OA)^H$-dimensional vectors, it allows us to obtain a better constant for the regret bound. 

We conclude this section by noting a technical, but important distinction of our technique from that of \cite{liu2022partially} in the proof of Theorem \ref{thm:breg1}. Instead of using Proposition 22 in \cite{liu2022partially}, which is based on $\ell_1$-eluder dimension theory, we develop a new result (Proposition \ref{lem:exchangeindexdeluxe} in Appendix \ref{app:changeofindex}) based on the standard elliptical potential lemma. Our technical tool is arguably easier to prove and comes with a tighter guarantee, which ultimately enables us to improve the upper bound by \cite{liu2022partially} by a factor of $\Tilde{\Omega}(H^2\sqrt{SA})$. We note that the tighter regret bound is not an advantage of the PS4POMDPs algorithm, as our new technique can also be applied in the original analysis of \cite{liu2022partially} for their optimism-based algorithm. We also note that a similar technique was developed in \cite{chen2022partially,chen2023lower} independently.

\section{Empirical Results}\label{sec:empirical}

In this section, we show that PS4POMDPs can be practically implemented with a standard POMDP solver, SARSOP \citep{kurniawati2009sarsop}, and Bayesian inference algorithms such as the No-U-Turn Sampler (NUTS) \citep{hoffman2014NUTS} and Mixed Hamiltonian Monte Carlo (M-HMC) \citep{zhou2020mixed}. Our results show that, despite approximation errors, PS4POMDPs has excellent empirical performance in terms of expected return and  the \textit{frequentist} cumulative expected regret, i.e., $\mathrm{Reg}_{\theta}(K) = KV_{\theta}^\star - \sum_{k=1}^K V_{\theta}^{\Tilde{\pi}^k}$. \rjedit{Unfortunately, none of the algorithms in  \cite{jafarnia2021online, liu2022partially, chen2022partially}, etc. can be practically implemented for comparison even on a toy problem.}

\paragraph{Tiger (Finite Horizon).} 
We consider a finite horizon version of a widely used POMDP problem called the \texttt{Tiger} problem with $H=10, S = 5, A=3, O=5$ (details in Appendix \ref{app:tiger}). Our problem has an unknown observation kernel parameterized by a single parameter $\theta\in [0, 0.5]$ . Our algorithm is evaluated on three \texttt{Tiger} instances with true parameter values $\theta^\star = 0.2, 0.3, 0.4$, respectively. In each case, we use a truncated normal prior on $\theta^\star$ with mean $0.25$ and variance $0.25$ over the interval $[0.1, 0.5]$. Figure \ref{fig:expresult_tiger} shows  the regret of \texttt{PS4POMDPs} divided by $K$ (the number of episodes), and by $\sqrt{K}$ to illustrate the order of the growth rate. We observe that the regret for our algorithm is clearly growing sub-linearly. As seen in the second plot of  Figure \ref{fig:expresult_tiger}, when scaled by $\sqrt{K}$, the  cumulative regret seems to converge to a constant thus indicating that it is scaling as $\sqrt{K}$.

\begin{figure}[!ht]
    \centering
    \includegraphics[width=0.7\linewidth]{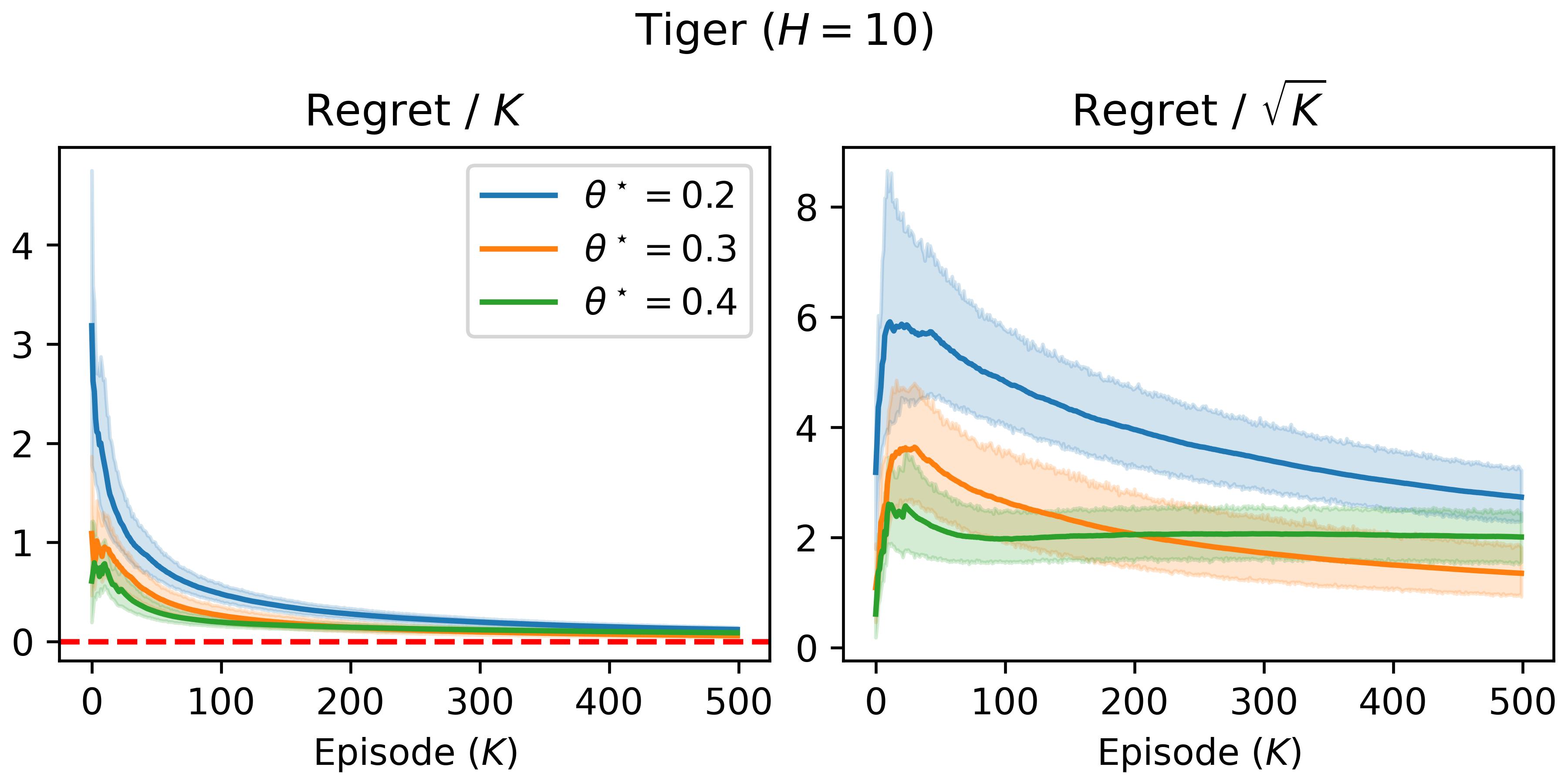}
    \caption{Cumulative expected regret on \texttt{Tiger} with 95\% confidence interval over 20 runs. (a) Left: Regret divided by the number of episodes $K$. (b) Right: Divided by  $\sqrt{K}$.}
    \label{fig:expresult_tiger}
\end{figure}

\paragraph{RiverSwim (POMDP).} We introduce sensor errors to the \texttt{RiverSwim} problem \citep{strehl2008analysis, osband2013more} and consider the corresponding POMDP with $H=40, S=6, A=2, O=6$ (details in Appendix \ref{app:riverswim}). The transition kernel of our \texttt{RiverSwim} is identical to the MDP version considered by \cite{osband2013more}. We assume no prior knowledge on the environment and use a Dirichlet prior  with pseudo-counts 1, which forms a uniform distribution over all possible transition and observation kernels. We compare the performance of PS4POMDPs against the near-optimal policy found by SARSOP for the true model. Figure \ref{fig:expresult_river_swim} shows  the expected regret of our algorithm (against the near-optimal policy) divided by $K$ and $\sqrt{K}$. The expected return for each policy $\tilde{\pi}^k$ used in PS4POMDPs  is computed via Monte Carlo estimation and then averaged over 20 sample paths. 

\begin{figure}[!ht]
    \centering
    \includegraphics[width=0.7\linewidth]{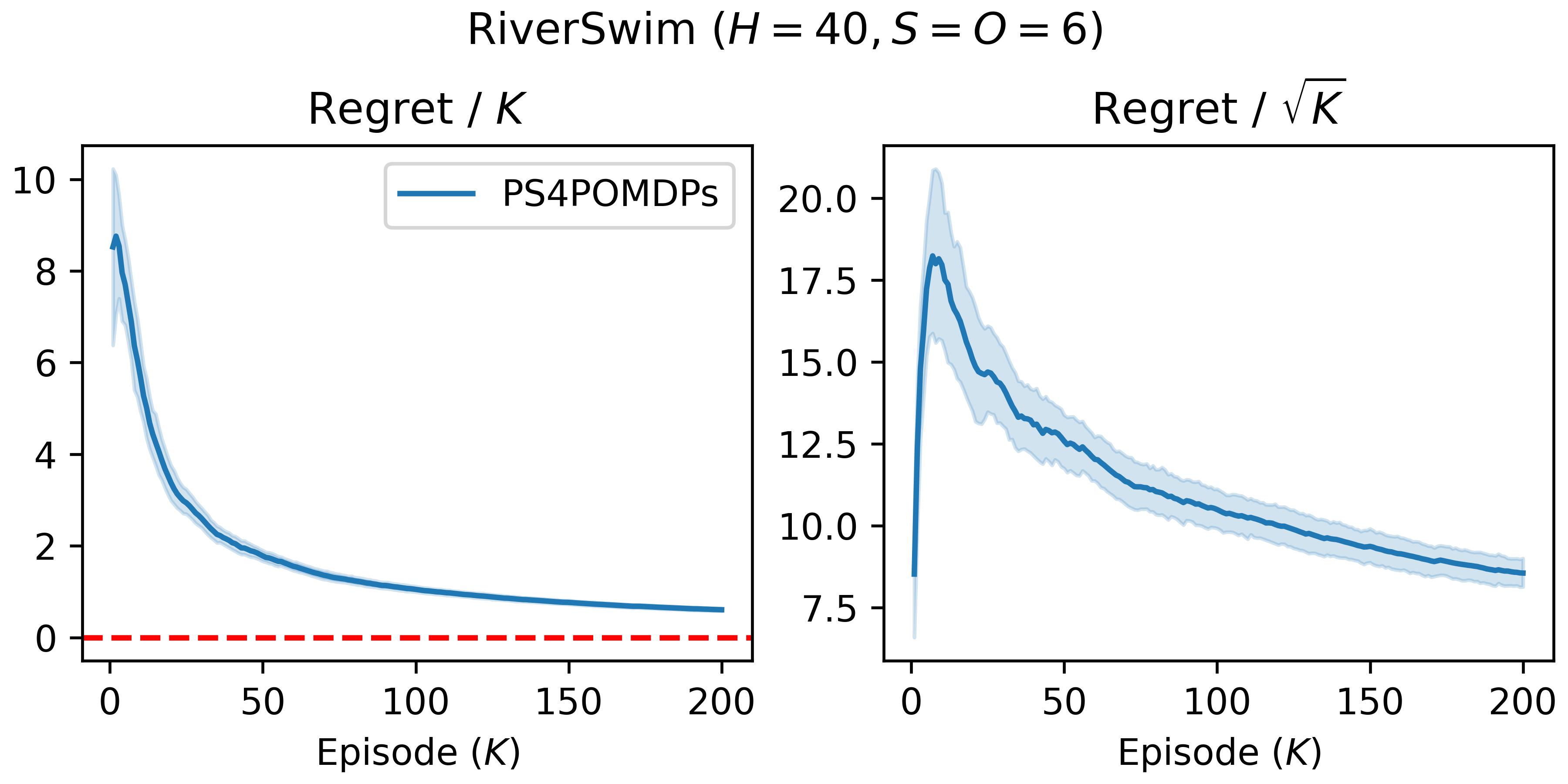}
    \caption{Cumulative expected regret on \texttt{RiverSwim} with 95\% confidence interval over 20 runs. 
    (a) Left: Regret divided by the number of episodes $K$. (b) Right: Divided by  $\sqrt{K}$.
    }
    \label{fig:expresult_river_swim}
\end{figure}

\paragraph{Additional Experiments.} We also evaluate the performance of \texttt{PS4POMDP} in (a) RiverSwim POMDP with different levels of prior knowledge on the structure of transition and observation kernels, and (b) in random POMDPs (generated from the prior) in Appendix \ref{app:riverswim}.

\paragraph{Computational Details.}
Our algorithm uses the NUTS and M-HMC algorithms implemented in NumPyro \citep{bingham2019pyro} for posterior inference. M-HMC is used in \texttt{Tiger} to draw samples from the joint posterior distribution of the discrete latent states and a continuous parameter $\theta$. Whereas, for \texttt{RiverSwim} and general POMDPs, the likelihood of $\tau_{1:K}$ can be computed efficiently according to Eq. \ref{eq:likelihood_computation}, which allows us to use NUTS and directly sample from the posterior over the (continuous) parameter space (or the space of all transition and observation kernels).
Given a sampled POMDP, we use SARSOP \citep{kurniawati2009sarsop} to obtain a near-optimal policy. Our implementation is available at \url{https://github.com/dongzeye/ps4pomdp}.

\section{Conclusions}\label{sec:conclusions}

We introduced an online learning algorithm for POMDPs. We also introduced such an algorithm for multi-agent POMDPs in Appendix \ref{app:mapomdps}. Until not too long ago, POMDP problems were regarded as largely intractable. Currently, a number of effective approximate POMDP solvers are available when the model is known \citep{shani2013survey}. When the model is unknown, this however has remained a very challenging problem. Over the past couple of years, starting with \cite{jin2020sample}, a number of papers have proposed online learning algorithms for POMDPs with weaker assumptions. Unfortunately, however, most of them are primarily theoretical papers proposing conceptual algorithms that are hard to make practical. 

We proposed a simple posterior sampling-based algorithm that is implementable and computationally practical by use of a POMDP solver. While theoretical guarantees may not always be available, many of the available POMDP solvers yield good approximate solutions, and we can account for approximation errors via an additional term in our regret bound (Corollary \ref{cor:approx}). Our algorithm also requires computation of posterior updates. 
While a number of methods for tractable approximate posterior updates are available \citep{russo2018tutorial}, we were able to use a direct Bayesian inference algorithm for approximate posterior update. Our empirical results on a simple example show that our algorithm has surprisingly good performance (sublinear empirical regret) despite the approximations involved. 

\rjedit{While we have presented one of the first computationally implementable algorithms for online POMDP learning, more work needs to be done to make it  scalable to large POMDP settings. It would also be useful to extend the algorithm to infinite state space settings by using deep learning methods. These are important directions for future work. 
}

\bibliographystyle{abbrvnat}
\bibliography{mybib}

\newpage
\appendix

\section{Index Change Results}\label{app:changeofindex}
In this section we prove the two index change lemmas we used for the proof of regret upper bounds.

\subsection{Proof of Lemma \ref{lem:exchangeindexlitemain}}

We start from the following version of the elliptical potential lemma presented in \cite{carpentier2020elliptical}.

\begin{lemma}[Elliptical Potential Lemma \citep{carpentier2020elliptical}]\label{lem:epl}
    Let $x_k\in\mathbb{R}^d$ for all $k\in [K]$ be such that $\|x_k\|_2\leq 1$. Let $\lambda > 0$. Set $\mathbf{V}_k = \lambda \mathbf{I} + \sum_{j=1}^{k} x_j x_j^T$. Then, 
$\sum_{k=1}^K \sqrt{x_k^T \mathbf{V}_k^{-1} x_k} \leq \sqrt{dK \log\left(1 + \frac{K}{d\lambda}\right)}$.
\end{lemma}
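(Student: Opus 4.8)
The plan is to prove this in the classical two-stage fashion: first reduce the sum of square roots to a sum of the quadratic forms $x_k^T \mathbf{V}_k^{-1} x_k$ via Cauchy--Schwarz, and then bound that sum by a log-determinant telescope. Concretely, by Cauchy--Schwarz,
\[
\sum_{k=1}^K \sqrt{x_k^T \mathbf{V}_k^{-1} x_k} \leq \sqrt{K \sum_{k=1}^K x_k^T \mathbf{V}_k^{-1} x_k},
\]
so it suffices to show $\sum_{k=1}^K x_k^T \mathbf{V}_k^{-1} x_k \leq d\log\left(1 + \frac{K}{d\lambda}\right)$, after which the claim is immediate by matching the two expressions.

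For the main sum I would set $\mathbf{V}_0 := \lambda\mathbf{I}$ so that $\mathbf{V}_k = \mathbf{V}_{k-1} + x_k x_k^T$, and use the rank-one determinant update
\[
\det(\mathbf{V}_k) = \det(\mathbf{V}_{k-1})\bigl(1 + x_k^T \mathbf{V}_{k-1}^{-1} x_k\bigr).
\]
Writing $u_k := x_k^T \mathbf{V}_{k-1}^{-1} x_k \geq 0$, a Sherman--Morrison computation gives the exact identity $x_k^T \mathbf{V}_k^{-1} x_k = \frac{u_k}{1+u_k}$ (equivalently $x_k^T \mathbf{V}_k^{-1} x_k = 1 - \frac{\det \mathbf{V}_{k-1}}{\det \mathbf{V}_k}$). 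The elementary inequality $\frac{u}{1+u} \leq \log(1+u)$ for $u \geq 0$ then yields $x_k^T \mathbf{V}_k^{-1} x_k \leq \log\frac{\det \mathbf{V}_k}{\det \mathbf{V}_{k-1}}$, and summing telescopes to
\[
\sum_{k=1}^K x_k^T \mathbf{V}_k^{-1} x_k \leq \log\frac{\det \mathbf{V}_K}{\det \mathbf{V}_0} = \log\frac{\det \mathbf{V}_K}{\lambda^d}.
\]

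To finish, I would control $\det \mathbf{V}_K$ by AM--GM on its eigenvalues: since $\mathrm{tr}(\mathbf{V}_K) = \lambda d + \sum_{j=1}^K \|x_j\|_2^2 \leq \lambda d + K$ using $\|x_j\|_2 \leq 1$, we get $\det \mathbf{V}_K \leq \left(\frac{\mathrm{tr}(\mathbf{V}_K)}{d}\right)^d \leq \left(\lambda + \frac{K}{d}\right)^d$, hence $\log\frac{\det \mathbf{V}_K}{\lambda^d} \leq d\log\left(1 + \frac{K}{d\lambda}\right)$. Combining everything gives the stated bound. The one point requiring care — and the main obstacle — is that the lemma is stated with $\mathbf{V}_k$ \emph{inclusive} of $x_k$ (the sum runs to $j=k$), not with $\mathbf{V}_{k-1}$; so one cannot directly telescope $x_k^T \mathbf{V}_{k-1}^{-1} x_k$, and the Sherman--Morrison identity converting $x_k^T \mathbf{V}_k^{-1} x_k$ into $\frac{u_k}{1+u_k}$ is the step that makes the telescoping work with the correct constant. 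Everything else is routine.
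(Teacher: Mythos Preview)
The paper does not prove this lemma; it simply quotes it from \cite{carpentier2020elliptical} as a known tool and uses it as a black box in the proof of Lemma~\ref{lem:exchangeindexlitemain}. Your argument is correct and is precisely the standard proof of the elliptical potential lemma (Cauchy--Schwarz, Sherman--Morrison to get $x_k^T\mathbf{V}_k^{-1}x_k=\tfrac{u_k}{1+u_k}$, the inequality $\tfrac{u}{1+u}\leq\log(1+u)$, telescoping the log-determinant, and AM--GM on the eigenvalues of $\mathbf{V}_K$), so there is nothing to add.
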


Through rescaling, without loss of generality, assume that $\|w_k\|_2\leq G_wG_x$ and $\|x_k\|_2\leq 1$. Let $\lambda > 0$. Define $\mathbf{V}_k = \frac{\lambda}{G_w^2G_x^2} \mathbf{I} + \sum_{j=1}^{k} x_j x_j^T$. Let $\mathbf{V}_k^{1/2}$ be the symmetric square root matrix of the positive definite matrix $\mathbf{V}_k$. Using the elliptical potential lemma, we have
    \begin{align}
        &\quad~\sum_{k=1}^K |x_k^T w_k| = \sum_{k=1}^K |(\mathbf{V}_k^{-1/2} x_k)^T \mathbf{V}_k^{1/2} w_k|\\
        &\leq \sum_{k=1}^K \|\mathbf{V}_k^{-1/2} x_k\|_2\cdot \|\mathbf{V}_k^{1/2} w_k\|_2\\
        &= \sum_{k=1}^K \sqrt{x_k^T \mathbf{V}_k^{-1} x_k} \sqrt{w_k^T \mathbf{V}_k w_k}\\
        &\leq \sqrt{dK \log\left(1 + \dfrac{KG_w^2G_x^2}{d\lambda}\right)} \max_{k\in [K]} \sqrt{w_k^T \mathbf{V}_k w_k}
    \end{align}
    For each $k\in [K]$, we have
    \begin{align}
        w_k^T \mathbf{V}_k w_k &= \frac{\lambda}{G_x^2G_w^2} \|w_k\|_2^2 + \sum_{j=1}^{k}(x_j^T w_k)^2 \leq \lambda + \beta. 
    \end{align}
    Therefore, we conclude that
    \begin{align}
        \sum_{k=1}^K |x_k^T w_k| &\leq \sqrt{\left( \lambda  + \beta \right) dK \log\left(1 + \dfrac{KG_w^2G_x^2}{d\lambda}\right)}
    \end{align}
    for any $\lambda > 0$.

\subsection{An Important Technical Result for Theorem \ref{thm:breg1}}

The following index change result is key to improve the regret bound of \cite{liu2022partially}.

\begin{proposition}\label{lem:exchangeindexdeluxe}
    Let $w_{k, l, m}, x_{k, l, n}\in\mathbb{R}^d$ for $k\in [K], l\in [L], m\in [M], n\in [N]$. Suppose that
    \begin{align}
        &\sum_{j=1}^{k} \left(\sum_{(l, m, n)\in [L]\times [M]\times [N]} |w_{k, l, m}^T x_{j, l, n}|\right)^2\leq \beta,\\
        &\sum_{l=1}^L\sum_{m=1}^M\|w_{k, l, m}\|_1\leq G_w,\\
        &\sum_{l=1}^L\sum_{n=1}^N\|x_{k, l, n}\|_1\leq G_x,
    \end{align}
    for all $k\in [K]$. Then for any $\lambda > 0$,
    \begin{align}
        &\quad~\sum_{k=1}^K  \sum_{(l, m, n)\in [L]\times [M]\times [N]} |w_{k, l, m}^T x_{k, l, n}|\\
        &\leq \sqrt{\left( \lambda + \beta \right) dLMK \log\left(1 + \dfrac{MG_w^2G_x^2K}{dL\lambda}\right)}\label{eq:exchangeindexdeluxe}
    \end{align}
\end{proposition}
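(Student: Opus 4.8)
The plan is to generalize the proof of Lemma \ref{lem:exchangeindexlitemain}: I reduce the nested multi-index sum to a single inner-product-per-pair form, split each term by Cauchy--Schwarz against a carefully chosen $dL$-dimensional design matrix, and then control the two resulting factors using the hypotheses and an elliptical-potential estimate, respectively.

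First I eliminate the absolute values by absorbing signs. For each $(k,l,m)$ set $s_{k,l,m,n} := \mathrm{sign}(w_{k,l,m}^T x_{k,l,n})$ and define the combined vector $\zeta_{k,l,m} := \sum_n s_{k,l,m,n} x_{k,l,n}$, so that $w_{k,l,m}^T \zeta_{k,l,m} = \sum_n |w_{k,l,m}^T x_{k,l,n}|$ and, crucially, $\|\zeta_{k,l,m}\|_1 \le \sum_n \|x_{k,l,n}\|_1$. Stacking over the shared index $l$, I write $\tilde w_{k,m} := (w_{k,1,m},\dots,w_{k,L,m})$ and $\tilde\zeta_{k,m} := (\zeta_{k,1,m},\dots,\zeta_{k,L,m})$ in $\mathbb{R}^{dL}$. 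The target sum then becomes $\sum_{k,m} \tilde w_{k,m}^T \tilde\zeta_{k,m}$, a sum of nonnegative scalars, while the $\ell_1$ budgets give $\sum_{l,m}\|w_{k,l,m}\|_2^2 \le G_w^2$ and $\|\tilde\zeta_{k,m}\|_1 \le G_x$ for every $(k,m)$.

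The core step introduces the design matrix $\mathbf{V}_k := \gamma \mathbf{I}_{dL} + \sum_{j=1}^{k}\sum_{m=1}^{M} \tilde\zeta_{j,m}\tilde\zeta_{j,m}^T$ with $\gamma = \lambda/G_w^2$, and splits each term as $\tilde w_{k,m}^T \tilde\zeta_{k,m} \le \|\mathbf{V}_k^{1/2}\tilde w_{k,m}\|_2\,\|\mathbf{V}_k^{-1/2}\tilde\zeta_{k,m}\|_2$, followed by Cauchy--Schwarz over the $(k,m)$ pairs. For the $w$-factor I expand $\|\mathbf{V}_k^{1/2}\tilde w_{k,m}\|_2^2 = \gamma\|\tilde w_{k,m}\|_2^2 + \sum_{j\le k}\sum_{m'}(\tilde w_{k,m}^T\tilde\zeta_{j,m'})^2$ and bound $(\tilde w_{k,m}^T\tilde\zeta_{j,m'})^2 \le (\sum_{l,n}|w_{k,l,m}^T x_{j,l,n}|)^2$ for each $m'$; since the right-hand side does not depend on $m'$, summing over $m,m',j\le k$ and invoking the hypothesis $\sum_{j\le k}(\sum_{l,m,n}|w_{k,l,m}^T x_{j,l,n}|)^2\le\beta$ gives $\sum_{k,m}\|\mathbf{V}_k^{1/2}\tilde w_{k,m}\|_2^2 \le K(\lambda + M\beta) \le KM(\lambda+\beta)$. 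Because $\mathbf{V}_k$ is built from exactly the vectors $\tilde\zeta_{j,m}$ being measured, the $\zeta$-factor is handled by the log-determinant form of the elliptical potential lemma (a batched version of Lemma \ref{lem:epl}, with $M$ rank-one updates per round): $\sum_{k,m}\tilde\zeta_{k,m}^T\mathbf{V}_k^{-1}\tilde\zeta_{k,m} \le \log\frac{\det\mathbf{V}_K}{\det(\gamma\mathbf{I})} \le dL\log(1 + \frac{KM G_x^2}{dL\gamma})$, using $\sum_{k,m}\|\tilde\zeta_{k,m}\|_2^2 \le KMG_x^2$. Substituting $\gamma=\lambda/G_w^2$ and multiplying the two factors produces the bound \eqref{eq:exchangeindexdeluxe}.

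The main obstacle is the sign/absolute-value structure inside the nested sums: a naive stacking does not reproduce $\sum_{l,n}|\cdot|$, and absorbing signs forces the measured vectors $\tilde\zeta_{k,m}$ to carry the index $m$. The delicate accounting is then twofold. First, the hypothesis must survive the sign absorption: the triangle-inequality bound $(\tilde w^T\tilde\zeta)^2 \le (\sum_{l,n}|w^T x|)^2$ holds for arbitrary signs, which is precisely what lets the constraint apply and yields the $M\beta$ term rather than something worse. Second, the observation index $n$ must not appear in the final estimate; this is ensured because the $\ell_1$ budget $G_x$ is imposed on the sum over $n$, so the $n$-sum is fully absorbed into $\tilde\zeta_{k,m}$ with $\|\tilde\zeta_{k,m}\|_1 \le G_x$ before the elliptical potential is applied. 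Matching the design matrix to the measured vectors, so that the batched elliptical potential applies with dimension $dL$ and $MK$ rank-one updates, is exactly what pins down the stated $dLMK$ and logarithmic factors.
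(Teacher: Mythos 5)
Your proof is correct, and the final bound comes out exactly as stated. The sign-absorption step is identical to the paper's (your $\zeta_{k,l,m}$ is the paper's $\Bar{x}_{k,l,m}$), but from there the two arguments diverge in how the index $m$ is handled. The paper concatenates over \emph{both} $l$ and $m$ to form $(dLM)$-dimensional vectors $w_k$ and $\Bar{x}_k$ with a single inner product per round, verifies $\sum_{j\le k}(w_k^T\Bar{x}_j)^2\le\beta$, $\|w_k\|_2\le G_w$, $\|\Bar{x}_k\|_2\le MG_x$, and then invokes Lemma \ref{lem:exchangeindexlitemain} as a black box in dimension $dLM$. You instead stack only over $l$, keep $M$ measured vectors per round, and run the elliptical potential directly: a design matrix in dimension $dL$ accumulating $M$ rank-one updates per round, a Cauchy--Schwarz over the $(k,m)$ pairs, the bound $\sum_{m,m'}(\tilde w_{k,m}^T\tilde\zeta_{j,m'})^2\le M(\sum_{l,m,n}|w_{k,l,m}^Tx_{j,l,n}|)^2$ to recover the hypothesis, and the log-determinant form $\sum_{k,m}\tilde\zeta_{k,m}^T\mathbf{V}_k^{-1}\tilde\zeta_{k,m}\le\log(\det\mathbf{V}_K/\det(\gamma\mathbf{I}))$ (valid here because $\mathbf{V}_k\succeq U_t$ for every update $t$ within round $k$, and $z_t^TU_t^{-1}z_t\le\log(\det U_t/\det U_{t-1})$). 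The trade-off is exact: your $dL\times(MK\ \text{updates})$ accounting and the paper's $dLM\times(K\ \text{updates})$ accounting produce the same $dLMK$ and the same logarithm. What the paper's route buys is brevity — it reuses Lemma \ref{lem:exchangeindexlitemain} verbatim; what yours buys is that the batched, log-det elliptical potential is self-contained and does not rely on the $\max_k\sqrt{w_k^T\mathbf{V}_kw_k}$ step of Lemma \ref{lem:exchangeindexlitemain}, at the cost of having to justify the batched potential inequality, which is standard but not literally Lemma \ref{lem:epl} as stated. No gap.
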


\begin{remark}
    The RHS of \eqref{eq:exchangeindexdeluxe} is tighter than the bound of Proposition 22 by \cite{liu2022partially} (which plays a similar role in their analysis) in several ways: (1)
    The bound by \cite{liu2022partially} has a $\log^2(KN)$ term. 
    In the context of its application in the main theorem, this term results in an extra $\Omega(H^2)$ term, since $N$ is the number of partial trajectories, which is exponential in $H$. (2) As \cite{liu2022partially} also pointed out, the bound by \cite{liu2022partially} depends linearly on $d$ instead of $\sqrt{d}$. (3) The bound by \cite{liu2022partially} scales linearly with $G_xG_w$, while in our bound the dependence is logarithmic. 
\end{remark}

\begin{proof}[Proof of Proposition \ref{lem:exchangeindexdeluxe}]
    Define $I_{k, l, m, n} := \mathrm{sgn}(w_{k, l, m}^T x_{k, l, n})$ and $ \Bar{x}_{k, l, m} := \sum_{n=1}^N x_{k, l, n} I_{k, l, m, n}$, we can then write the LHS of \eqref{eq:exchangeindexdeluxe} as $\sum_{k=1}^K \sum_{(l, m)\in [L]\times [M] } w_{k, l, m}^T \Bar{x}_{k, l, m}$.

    Let $w_k = (w_{k, l, m})_{(l, m)\in [L]\times [M]}$ and $\Bar{x}_k = (\Bar{x}_{k, l, m})_{(l, m)\in [L]\times [M]}$ be seen as $(dLM)$-dimensional vectors. 
    First, we have
    \begin{align}
        &\quad~\sum_{j=1}^{k} (w_k^T \Bar{x}_j)^2\\
        &= \sum_{j=1}^{k} \left(\sum_{(l, m)\in [L]\times [M] }  w_{k, l, m}^T \Bar{x}_{j, l, m} \right)^2\\
        &= \sum_{j=1}^{k} \left(\sum_{(l, m, n)\in [L]\times [M]\times [N] }  w_{k, l, m}^T x_{j, l, n} I_{j, l, m, n} \right)^2\\
        &\leq \sum_{j=1}^{k} \left(\sum_{(l, m, n)\in [L]\times [M]\times [N] }  |w_{k, l, m}^T x_{j, l, n}| \right)^2\leq \beta.
    \end{align}

    We also have
    \begin{align}
        \|w_k\|_2&\leq \|w_k\|_1 = \sum_{l=1}^L\sum_{m=1}^M\|w_{k, l, m}\|_1 \leq G_w,\\
        \|\Bar{x}_k\|_2&\leq \|\Bar{x}_k\|_1 = \sum_{l=1}^L\sum_{m=1}^M\|\Bar{x}_{k, l, m}\|_1\\
        &\leq \sum_{l=1}^L\sum_{m=1}^M\sum_{n=1}^N \|x_{k, l, n}\|_1 \leq M G_x.
    \end{align}

    The result is then established by applying Lemma \ref{lem:exchangeindexlitemain} on the $(dLM)$-dimensional vectors $w_k$ and $\Bar{x}_k$.
\end{proof}

\section{Proof of Proposition \ref{thm:breglower}}
\label{app:A}

We construct the random POMDP instance as follows: Let $\epsilon\in (0, 1)$ be determined later and let $\cS = \cO = \{0, 1\}$, where $0$ represents the ``good'' state and $1$ represents the ``bad'' state. Let the parameter space $\varTheta = \cA^{H-1}$ be the space of $(H-1)$-sequences of actions. For each $\theta = (\theta_{1:H-1})\in\varTheta$, the final state is ``good'' only if the agent follows the action sequence $\theta_{1:H-1}$, i.e., 
    \begin{align}
        b_0^{\theta} &= \delta_0\\
        T_h^{\theta}(s, a) &= \begin{cases}
            \delta_0&\text{if }s=0 \text{ and } a=\theta_h\\
            \delta_1&\text{otherwise}
        \end{cases}
    \end{align}
    where $\delta_s\in\Delta(\cS)$ represents the deterministic distribution supported on state $s$.

The observation kernel $Z^{\theta}$ is the same for all $\theta\in\varTheta$: Observations are uniformly random for all underlying states at all times $h\in [H-1]$. At the final timestamp, the observation kernel $\mathbb{Z}_H^{\theta}:\cS\mapsto \Delta(\cO)$ is given by
\begin{align}
    Z_H^{\theta}(o|s) = \begin{cases}
        \frac{1}{2}+\epsilon & s=0, o=0\\
        \frac{1}{2}-\epsilon & s=0, o=1\\
        \frac{1}{2} & s=1
    \end{cases}
\end{align}

The reward function is given by $r_h \equiv 0$ for $h\in[H-1]$ and $r_H(o, a) = \bm{1}_{\{o=0\}}$. The prior distribution $\nu^1$ is the uniform distribution on $\varTheta$.

For the convenience of the proof, we define an additional POMDP $(b_1^{\mathbf{u}}, T^{\mathbf{u}}, Z^{\mathbf{u}})$ by 
\begin{align}
    b_1^{\mathbf{u}} &= \delta_0\\
    T_h^{\mathbf{u}}(s, a) &= \delta_0\quad\forall s\in\cS,a\in\cS\\
    Z_h^{\mathbf{u}}(o|s) &= \frac{1}{2} \quad\forall o\in\cO, s\in\cS 
\end{align}

For any policy $\pi\in\varPi$ and any trajectory $\tau = (o_{1:H}, a_{1:H})\in \cT$, the trajectory probability satisfies
\begin{align}
    \Pr_{\theta}^{\pi}(\tau) &= \pi(\tau)\cdot \dfrac{1}{2^{H-1}} \left(\frac{1}{2} + \epsilon \iota_{\theta}(a_{1:H-1}, o_H) \right)\label{eq:lb:Pthetapi} \\
    \Pr_{\mathbf{u}}^{\pi}(\tau) &= \pi(\tau)\cdot \dfrac{1}{2^{H}}\label{eq:lb:Pupi}
\end{align}
where
\begin{align}
    \iota_{\theta}(a_{1:H-1}, o_H) := \begin{cases}
        1&\text{if }a_{1:H-1} = \theta\text{ and }o_H = 0\\
        -1&\text{if }a_{1:H-1} = \theta\text{ and }o_H = 1\\
        0&\text{if }a_{1:H-1} \neq \theta
    \end{cases}
\end{align}

Since the learning agent has perfect recall, the minimum Bayesian regret (equivalently, the maximum total expected reward) in $K$ episodes can be attained through some deterministic learning algorithm $\phi$. Therefore, without loss of generality, we only consider deterministic learning algorithms. More specifically, we assume that the learning algorithm is given by $\phi=(\phi_j)_{j\in\mathbb{N}}$ and $\phi_j$ maps the trajectories in the first $j-1$ episodes to a policy in $\varPi$. Fix $\phi$, let $\Pr_{\theta}$ and $\E_{\theta}$ represent the probability distribution and expectation of random variables involving (potentially) multiple episodes, assuming that the underlying POMDP is given by $\theta$, and the learning algorithm $\phi$ is applied. Similarly, let $\Pr_{\mathbf{u}}$ and $\E_{\mathbf{u}}$ represent the probability distribution and expectation of random variables under the POMDP defined by $(b_1^{\mathbf{u}}, T^{\mathbf{u}}, Z^{\mathbf{u}})$ and learning algorithm $\phi$.

For each $\theta\in\varTheta$, define $N_{\theta, K}: = \sum_{j=1}^K \bm{1}_{\{a_{1:H-1}^j = \theta\} } $ to be a random variable denoting the number of episodes in which the learning agent's action sequence matches $\theta$.

\begin{lemma}\label{lem:gainundertheta}
    For any $\theta\in\varTheta$, we have
    \begin{equation}
        \E_{\theta}[N_{\theta, K}]\leq \E_{\mathbf{u}}[N_{\theta, K}] + \frac{K}{2}\sqrt{-\E_{\mathbf{u}}[N_{\theta, K}] \log(1-4\epsilon^2)}
    \end{equation}
\end{lemma}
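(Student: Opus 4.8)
The plan is to run the standard change-of-measure argument (in the style of \cite{auer2002nonstochastic}), comparing the law of the full $K$-episode history under the true parameter $\theta$ against its law under the reference POMDP $\mathbf{u}$. Write $\Pr_\theta$ and $\Pr_\mathbf{u}$ for these two laws of $(\tau^1,\dots,\tau^K)$ under the (WLOG deterministic) learning algorithm $\phi$. Since $N_{\theta,K}$ is a deterministic function of this history taking values in $[0,K]$, the first step is the elementary bound
\[
\E_\theta[N_{\theta,K}] - \E_\mathbf{u}[N_{\theta,K}] \leq K\,\|\Pr_\theta - \Pr_\mathbf{u}\|_{\mathrm{TV}},
\]
followed by Pinsker's inequality $\|\Pr_\theta - \Pr_\mathbf{u}\|_{\mathrm{TV}} \leq \sqrt{\tfrac{1}{2}\,\mathrm{KL}(\Pr_\mathbf{u}\,\|\,\Pr_\theta)}$. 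I would deliberately take the divergence in the order $\mathrm{KL}(\Pr_\mathbf{u}\,\|\,\Pr_\theta)$, because decomposing it produces an expectation under $\Pr_\mathbf{u}$, which is exactly the $\E_\mathbf{u}[N_{\theta,K}]$ appearing in the claim.

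Next I would evaluate the divergence by the chain rule over episodes. Because $\phi$ is deterministic and the first $H-1$ observation kernels are uniform, hence identical, under $\theta$ and $\mathbf{u}$, the conditional law of episode $j$ given the history $\mathcal{D}_j$ uses the same policy $\pi^j = \phi_j(\mathcal{D}_j)$ under both models and differs only through the final observation. From the explicit forms \eqref{eq:lb:Pthetapi} and \eqref{eq:lb:Pupi}, the per-episode likelihood ratio is $\Pr_\mathbf{u}^{\pi^j}(\tau)/\Pr_\theta^{\pi^j}(\tau) = (1 + 2\epsilon\,\iota_\theta(a_{1:H-1}, o_H))^{-1}$, which equals $1$ unless $a_{1:H-1} = \theta$. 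A short computation then gives
\[
\mathrm{KL}\!\left(\Pr_\mathbf{u}^{\pi^j}\,\big\|\,\Pr_\theta^{\pi^j}\right) = \Pr_\mathbf{u}^{\pi^j}(a_{1:H-1} = \theta)\cdot\left(-\tfrac{1}{2}\log(1 - 4\epsilon^2)\right),
\]
since, conditioned on a match, the final observation is uniform under $\mathbf{u}$ and the symmetric $\pm\epsilon$ bias contributes $\tfrac{1}{2}\left[-\log(1+2\epsilon) - \log(1-2\epsilon)\right] = -\tfrac{1}{2}\log(1-4\epsilon^2)$. Summing the chain-rule terms and taking $\E_\mathbf{u}$ collapses the matching probabilities $\sum_{j}\Pr_\mathbf{u}(a_{1:H-1}^j = \theta)$ into $\E_\mathbf{u}[N_{\theta,K}]$, yielding $\mathrm{KL}(\Pr_\mathbf{u}\,\|\,\Pr_\theta) = -\tfrac{1}{2}\,\E_\mathbf{u}[N_{\theta,K}]\log(1-4\epsilon^2)$.

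Substituting into the Pinsker bound gives $K\sqrt{\tfrac{1}{2}\cdot\left(-\tfrac{1}{2}\,\E_\mathbf{u}[N_{\theta,K}]\log(1-4\epsilon^2)\right)} = \tfrac{K}{2}\sqrt{-\E_\mathbf{u}[N_{\theta,K}]\log(1-4\epsilon^2)}$, which is exactly the claimed inequality after adding $\E_\mathbf{u}[N_{\theta,K}]$ to both sides. I expect the main obstacle to be the chain-rule step: I must justify carefully that the cumulative divergence over the $K$ coupled episodes equals the sum of expected per-episode conditional divergences, and that each term accrues only on episodes whose action prefix matches $\theta$. This rests on (i) restricting to deterministic $\phi$, so the policy is a measurable function of the common history and itself contributes no divergence, and (ii) the fact that the two environments are statistically indistinguishable through the first $H-1$ steps, so that the conditional distribution of $a_{1:H-1}^j$ given $\mathcal{D}_j$ is identical under $\Pr_\theta$ and $\Pr_\mathbf{u}$ and the only informative signal is the final, $\theta$-dependent observation. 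Once these are established, the remaining KL and Pinsker manipulations are routine.
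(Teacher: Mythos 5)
Your proposal is correct and follows essentially the same route as the paper's proof: the elementary bound $\E_\theta[N_{\theta,K}]-\E_{\mathbf{u}}[N_{\theta,K}]\leq K\,\|\Pr_{\mathbf{u}}-\Pr_\theta\|_{\mathrm{TV}}$, Pinsker's inequality with the divergence taken as $\mathrm{KL}(\Pr_{\mathbf{u}}\,\|\,\Pr_\theta)$, the chain rule over episodes with a deterministic algorithm, and the per-episode computation yielding $\tfrac{1}{2}\Pr_{\mathbf{u}}^{\pi^j}(a_{1:H-1}=\theta)\,[-\log(1-4\epsilon^2)]$, which sums to $-\tfrac{1}{2}\E_{\mathbf{u}}[N_{\theta,K}]\log(1-4\epsilon^2)$. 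The steps you flag as requiring care (the conditional chain-rule decomposition and the fact that only the matching final observation contributes) are exactly the ones the paper handles, and your justifications match.
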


\begin{proof}

    \newcommand{\DKL}{\mathbf{D}_{\mathrm{KL}}}
    First, we have 
    \begin{align}
        &\quad~\E_{\theta}[N_{\theta, K}] - \E_{\mathbf{u}}[N_{\theta, K}]\\
        &\leq \sum_{\Tilde{\tau}^{1:K}: \Tilde{\tau}^j = (\Tilde{o}_{1:H}^j, \Tilde{a}_{1:H}^j ) \in \cT } \left[\sum_{j=1}^K \bm{1}_{\{\Tilde{a}_{1:H-1}^j = \theta\}} \right]  [\Pr_{\theta}(\Tilde{\tau}^{1:K}) - \Pr_{\mathbf{u}}( \Tilde{\tau}^{1:K})]_+\\
        &\leq K \|\Pr_{\mathbf{u}}(\tau^{1:K}) - \Pr_{\theta}(\tau^{1:K}) \|_{\mathrm{TV}}
    \end{align}

    By Pinsker's Inequality,
    \begin{align}
        \|\Pr_{\mathbf{u}}(\tau^{1:K}) - \Pr_{\theta}(\tau^{1:K}) \|_{\mathrm{TV}} \leq \sqrt{\dfrac{1}{2}\DKL(\Pr_{\mathbf{u}}(\tau^{1:K})\| \Pr_{\theta}(\tau^{1:K}) ) }
    \end{align}
    where $\DKL(\mu_1\|\mu_2)$ is the KL divergence between distributions $\mu_1$ and $\mu_2$.
    Using the chain rule for the KL divergence, we have
    \begin{align}
        &\quad~\DKL(\Pr_{\mathbf{u}}(\tau^{1:K})\| \Pr_{\theta}(\tau^{1:K}) )\\
        &= \sum_{j=1}^K \DKL(\Pr_{\mathbf{u}}(\tau^j|\tau^{1:j-1})\| \Pr_{\theta}(\tau^j|\tau^{1:j-1}) )\label{eq:chainrulekl} \\
        &=\sum_{j=1}^K \sum_{\Tilde{\tau}^{1:j-1}\in \cT^{j-1}} \Pr_{\mathbf{u}}(\Tilde{\tau}^{1:j-1}) \DKL(\Pr_{\mathbf{u}}(\tau^j|\Tilde{\tau}^{1:j-1})\| \Pr_{\theta}(\tau^j|\Tilde{\tau}^{1:j-1}) )
    \end{align}
    where in \eqref{eq:chainrulekl}, $\DKL(\Pr_{\mathbf{u}}(\tau^j|\tau^{1:j-1})\| \Pr_{\theta}(\tau^j|\tau^{1:j-1}) )$ is the conditional KL divergence.
    

    For a fixed history $\Tilde{\tau}^{1:j-1}$ before episode $j$, let $\Tilde{\pi}^j = \phi_j(\Tilde{\tau}^{1:j-1})$ be the corresponding policy the learning algorithm uses in episode $j$. Using \eqref{eq:lb:Pthetapi} and \eqref{eq:lb:Pupi} we have
    \begin{align}
        &\quad~\DKL(\Pr_{\mathbf{u}}(\tau^j|\Tilde{\tau}^{1:j-1})\| \Pr_{\theta}(\tau^j|\Tilde{\tau}^{1:j-1}) )\\
        &= \sum_{\tilde{\tau}\in \cT } \Pr_{\mathbf{u}}^{\Tilde{\pi}^j}(\Tilde{\tau})\log\left(\dfrac{\Pr_{\mathbf{u}}^{\Tilde{\pi}^j}(\Tilde{\tau})}{\Pr_{\theta}^{\Tilde{\pi}^j}(\Tilde{\tau})}\right)\\
        &= \Pr_{\mathbf{u}}^{\Tilde{\pi}^j}(a_{1:H-1} = \theta, o_H = 0)\log\left(\dfrac{1}{1+2\epsilon}\right) + \Pr_{\mathbf{u}}^{\Tilde{\pi}^j}(a_{1:H-1} = \theta, o_H = 1)\log\left(\dfrac{1}{1-2\epsilon}\right)\\
        &= \dfrac{1}{2} \Pr_{\mathbf{u}}^{\Tilde{\pi}^j}(a_{1:H-1} = \theta) \left[\log\left(\dfrac{1}{1+2\epsilon}\right) + \log\left(\dfrac{1}{1-2\epsilon}\right) \right]\\
        &= \dfrac{1}{2}\Pr_{\mathbf{u}}^{\Tilde{\pi}^j}(a_{1:H-1} = \theta) [-\log(1-4\epsilon^2)]= \dfrac{1}{2}\Pr_{\mathbf{u}}(a_{1:H-1}^j = \theta|\Tilde{\tau}^{1:j-1}) [-\log(1-4\epsilon^2)].
    \end{align}



    Therefore,
    \begin{align}
        &\quad~\DKL(\Pr_{\mathbf{u}}(\tau^{1:K})\| \Pr_{\theta}(\tau^{1:K}) )\\
        &= \dfrac{1}{2}\sum_{j=1}^K \sum_{\Tilde{\tau}^{1:j-1}} \Pr_{\mathbf{u}}(\Tilde{\tau}^{1:j-1}) \Pr_{\mathbf{u}}(a_{1:H-1}^j = \theta|\Tilde{\tau}^{1:j-1}) \left[-\log(1-4\epsilon^2)\right]\\
        &= \dfrac{1}{2}\sum_{j=1}^K \Pr_{\mathbf{u}}(a_{1:H-1}^j = \theta) \left[-\log(1-4\epsilon^2)\right]= -\dfrac{1}{2}\E_{\mathbf{u}}[N_{\theta, K}] \log\left(1 - 4\epsilon^2 \right)
    \end{align}

    and finally,
    \begin{align}
        \E_{\theta}[N_{\theta, K}] - \E_{\mathbf{u}}[N_{\theta, K}] &\leq K\sqrt{\dfrac{1}{2} \DKL(\Pr_{\mathbf{u}}(\tau^{1:K})\| \Pr_{\theta}(\tau^{1:K}) )}\\
        &= \dfrac{K}{2}\sqrt{-\E_{\mathbf{u}}[N_{\theta, K}] \log\left(1 - 4\epsilon^2 \right) }
    \end{align}
    
\end{proof}

{The conditional expected total reward $G_{\theta, K}$ under the algorithm $\phi$, given that the underlying POMDP is $\theta$, satisfies}
\begin{align}
    G_{\theta, K}&:=\E_{\theta}\left[\sum_{j=1}^K \sum_{h=1}^H r_h(o_{h}^j, a_h^j) \right]= \sum_{j=1}^K \Pr_{\theta}(o_H^j = 0)\\
    &= \sum_{j=1}^K \left[\left(\frac{1}{2}+\epsilon\right)\Pr_{\theta}(a_{1:H-1}^j = \theta) + \frac{1}{2} \Pr_{\theta}(a_{1:H-1}^j \neq \theta)\right]\\
    &= \dfrac{K}{2} + \epsilon  \sum_{j=1}^K \Pr_{\theta}(a_{1:H-1}^j = \theta) = \frac{K}{2} + \epsilon \E_{\theta}[N_{\theta, K}]
\end{align}

Define $\Bar{N}_{\theta, K}:= \E_{\mathbf{u}}[N_{\theta, K}]$ to be a fixed function of $\theta$ and $K$. Using Lemma \ref{lem:gainundertheta}, we have
\begin{align}
    G_{\theta, K} &\leq \frac{K}{2} + \epsilon\left(\Bar{N}_{\theta, K} + \dfrac{K}{2}\sqrt{-\Bar{N}_{\theta, K} \log\left(1 - 4\epsilon^2 \right) }\right)
\end{align}

 If $\theta^*$ is randomly drawn from $\varTheta$ following $\nu^1$, the uniform distribution on $\varTheta$, we have 
 \begin{align}
     \E_{\theta^*\sim\nu^1}\left[\Bar{N}_{\theta^*, K}\right]&= \dfrac{1}{A^{H-1}}\sum_{\theta\in\varTheta} \Bar{N}_{\theta, K} = \dfrac{1}{A^{H-1}}\sum_{\theta\in\varTheta} \E_{\mathbf{u}}[N_{\theta, K}]\\
     &= \dfrac{1}{A^{H-1}} \E_{\mathbf{u}}\left[\sum_{\theta\in\varTheta} N_{\theta, K} \right]=\dfrac{K}{A^{H-1}},
 \end{align}
 and the expected total reward satisfies
\begin{align}
    &\quad~\E_{\theta^*\sim \nu^1}\left[G_{\theta^*, K}\right] \\
    &\leq \dfrac{K}{2} + \epsilon \left(\E_{\theta^*\sim\nu^1}[\Bar{N}_{\theta^*, K}] + \dfrac{K}{2}\E_{\theta^*\sim\nu^1}\left[\sqrt{\Bar{N}_{\theta^*, K}}\right] \sqrt{-\log(1-4\epsilon^2)}  \right)\\
    &\leq \dfrac{K}{2} + \epsilon \left(\E_{\theta^*\sim\nu^1}[\Bar{N}_{\theta^*, K}] + \dfrac{K}{2}\sqrt{\E_{\theta^*\sim\nu^1}\left[\Bar{N}_{\theta^*, K}\right]} \sqrt{-\log(1-4\epsilon^2)}  \right)\\
    &= \dfrac{K}{2} + \epsilon\left(\dfrac{K}{A^{H-1}} + \dfrac{K}{2} \sqrt{\dfrac{K}{A^{H-1}}} \sqrt{-\log(1-4\epsilon^2)}\right)
\end{align}

For each POMDP parameterized by $\theta\in\varTheta$, if one uses the open-loop policy $a_h = \theta_h$ for $h\in [H-1]$, then the expected total reward is $\frac{1}{2}+\epsilon$. This means that the optimal value satisfies $V_{\theta}^*\geq \frac{1}{2}+\epsilon$. 

Therefore, the Bayesian regret is lower bounded by
\begin{align}
    \mathrm{BReg}(\phi, K)  &\geq \left(\dfrac{1}{2}+\epsilon\right) K - \E_{\theta^*\sim \nu^1}\left[G_{\theta^*, K}\right]\\
    &\geq \epsilon \left(K- \dfrac{K}{A^{H-1}} - \dfrac{K}{2} \sqrt{\dfrac{K}{A^{H-1}}} \sqrt{-\log(1-4\epsilon^2)}\right)
\end{align}

Theorem \ref{thm:breglower} can then be established by choosing $\epsilon = \frac{1}{4}\sqrt{\frac{A^{H-1}}{K}}$ and the inequality $-\log(1-x) \leq 4\log\left(\frac{4}{3}\right) x$ for $x\in [0, \frac{1}{4}]$ in the same way as in \cite{auer2002nonstochastic}.

\section{Proof of Regret Upper Bounds}\label{app:regub}
In this section, we prove both Theorem \ref{thm:breg0} and Theorem \ref{thm:breg1}. Our proofs are inspired by the framework for establishing regret bounds for posterior sampling by \cite{russo2014learning} and the techniques of \cite{liu2022partially}. 

In this section, $\Pr_{\theta}^{\pi}$ is seen as a probability distribution on $\cT$, the set of observable trajectories.
Without loss of generality, we assume that the parameterization is direct, i.e., $\varTheta$ is a subset of $[0, 1]^{S + (H-1)S^2A + HSO }$, where the coordinates represent $(b_1, T, Z)$. 

While our parameter set $\varTheta$ could be uncountably infinite,  we construct a finite set $\Bar{\varTheta}$ that serves as a surrogate for $\varTheta$ for technical reasons.

\begin{proposition}\label{lem:barvartheta}
    For any $\epsilon > 0$ such that $\epsilon^{-1}\in\mathbb{N}$, there exists a finite set $\Bar{\varTheta}\subset [0, 1]^{S + (H-1)S^2A + HSO }$ and a measurable function $\iota: \varTheta\mapsto \Bar{\varTheta}$ such that
    \begin{equation}
        \log(|\Bar{\varTheta}|) \leq (HS^2 A + HSO)\log\left(\dfrac{\max\{S, O\}}{\epsilon} + 1\right)
    \end{equation}
    and the following holds: For any $\theta\in\varTheta$, $\theta' = \iota(\theta)$ satisfies
    \begin{align}
        &\|\Pr_{\theta'}^\pi - \Pr_{\theta}^\pi\|_{\mathrm{TV}}\leq 2H\epsilon\qquad \forall \pi\in\varPi \label{eq:tvclose}\\
        &\Pr_{\theta'}^{-}(\tau) \geq (1+\epsilon)^{-2H} \Pr_{\theta}^{-}(\tau) \qquad\forall \tau\in\cT \label{eq:likelihoodclose}
    \end{align}
\end{proposition}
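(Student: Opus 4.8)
The plan is to build $\Bar{\varTheta}$ as a uniform discretization of the parameter entries and to let $\iota$ round each entry to a nearby grid point, then verify the two guarantees by exploiting the factorized form of $\Pr_{\theta}^{-}(\tau)$ and the separation of the policy part from the environment part recorded in \eqref{eq:envpart}.

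First I would fix a grid spacing $\eta = \epsilon/\max\{S,O\}$, noting that $\eta^{-1}=\max\{S,O\}\,\epsilon^{-1}\in\mathbb{N}$ since $\epsilon^{-1}\in\mathbb{N}$, so the grid $\{0,\eta,2\eta,\ldots,1\}$ has exactly $\eta^{-1}+1=\tfrac{\max\{S,O\}}{\epsilon}+1$ points and contains $1$. I let $\Bar{\varTheta}$ be the set of all vectors in $[0,1]^{S+(H-1)S^2A+HSO}$ whose every coordinate (an entry of $b_1$, of some $T_h$, or of some $Z_h$) is an integer multiple of $\eta$. Counting $\tfrac{\max\{S,O\}}{\epsilon}+1$ choices per coordinate and bounding the number of coordinates $S+(H-1)S^2A+HSO$ by $HS^2A+HSO$ (using $1\le SA$) gives the claimed cardinality bound after taking logarithms. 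The map $\iota$ rounds each coordinate of $\theta$ to a neighboring grid value; being a fixed piecewise-constant rule, it is Borel measurable.

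For \eqref{eq:likelihoodclose} the key structural fact is that $\Pr_{\theta}^{-}(\tau)=\sum_{s_{1:H}} b_1(s_1)\,\prod_{h=1}^{H} Z_h(o_h\mid s_h)\prod_{h=1}^{H-1} T_h(s_{h+1}\mid s_h,a_h)$ is a nonnegative combination of products of exactly $2H$ parameter entries (one initial, $H$ observation, $H-1$ transition). If $\iota$ is chosen so that every entry of $\theta'=\iota(\theta)$ is at least $(1+\epsilon)^{-1}$ times the corresponding entry of $\theta$ — rounding each entry \emph{up} to its nearest grid multiple achieves this immediately, since then each entry only increases — then each product term, hence the whole sum, is at least $(1+\epsilon)^{-2H}\Pr_{\theta}^{-}(\tau)$. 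For \eqref{eq:tvclose} I would use $\Pr_{\theta}^{\pi}(\tau)=\pi(\tau)\Pr_{\theta}^{-}(\tau)$ with the same policy part for $\theta$ and $\theta'$, and a telescoping (hybrid) argument over the $2H$ stochastic steps of the generative process — the initial draw, then alternating observation and transition draws — swapping the kernels of $\theta$ for those of $\theta'$ one step at a time. Summing over actions contributes exact factors of $1$ (the policy is a genuine conditional distribution), while each swapped kernel contributes a per-step discrepancy; summing the $2H$ contributions is what produces the factor $2H$ in the bound $\|\Pr_{\theta'}^{\pi}-\Pr_{\theta}^{\pi}\|_{\mathrm{TV}}\le 2H\epsilon$.

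I expect the main obstacle to be reconciling the two guarantees for very small probabilities on a finite grid. The multiplicative lower bound in \eqref{eq:likelihoodclose} forbids rounding any positive entry down to $0$, which is why I would round up; but rounding up makes $\theta'$ slightly super-stochastic (kernel row sums at most $1+\epsilon$, since the per-row excess is at most $\max\{S,O\}\cdot\eta=\epsilon$), so the clean coupling form of the hybrid bound must be replaced by a row-sum telescoping. Bounding the total trajectory mass of $\theta'$ by the product of the $2H$ row sums gives $\|\Pr_{\theta'}^{\pi}-\Pr_{\theta}^{\pi}\|_{\mathrm{TV}}\le \tfrac12\big((1+\epsilon)^{2H}-1\big)$, and controlling this by $2H\epsilon$ uses the elementary estimate $(1+\epsilon)^{2H}\le 1+4H\epsilon$ in the small-$\epsilon$ regime in which the proposition is applied. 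An alternative route would round to genuine distributions on a marginally finer grid so that the coupling-based hybrid bound yields $2H\epsilon$ directly; I would weigh this against the slightly larger cardinality it incurs. Either way, extracting both the per-factor multiplicative lower bound and the additive $2H\epsilon$ total-variation bound from a single rounding rule, while keeping the grid cardinality at $\tfrac{\max\{S,O\}}{\epsilon}+1$ per coordinate, is the delicate step.
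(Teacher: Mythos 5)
Your plan correctly isolates the central tension — the multiplicative lower bound \eqref{eq:likelihoodclose} forces rounding up, while \eqref{eq:tvclose} needs the rounded parameter to stay close in total variation — but the route you commit to (round up and accept super-stochastic kernels) has two concrete problems. First, your bound $\|\Pr_{\theta'}^{\pi}-\Pr_{\theta}^{\pi}\|_{\mathrm{TV}}\leq \tfrac12\bigl((1+\epsilon)^{2H}-1\bigr)$ is controlled by $2H\epsilon$ only when $2H\epsilon\lesssim 1$; the proposition is stated for every $\epsilon$ with $\epsilon^{-1}\in\mathbb{N}$, and for, say, $\epsilon=1$ and $H=2$ the quantity $\tfrac12(2^4-1)$ far exceeds $2H\epsilon=4$, so your argument does not prove the statement as written (and, unlike the stochastic case, the bound is not rescued by ``TV is at most $1$'' because your $\Pr_{\theta'}^{\pi}$ is not a probability measure). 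Second, and more importantly, producing elements of $\Bar{\varTheta}$ that are not genuine probability kernels is fatal downstream: the confidence-set machinery treats each $\Pr_{\Bar{\theta}}^{\pi}$ as a probability distribution on $\cT$ (the martingale argument needs $\sum_{\tau}\Pr_{\Bar{\theta}}^{\pi}(\tau)\leq 1$, the Hellinger inequality needs two distributions, and several steps use that TV distances are bounded by $1$). So the super-stochastic shortcut cannot be the intended construction.

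The ``alternative route'' you mention in passing is in fact the paper's proof, and your reason for setting it aside — that renormalizing to genuine distributions costs extra cardinality — is mistaken. The paper quantizes each distribution $\mu$ on a set $\mathcal{X}$ by rounding each entry \emph{up} to a multiple of $\epsilon/|\mathcal{X}|$, obtaining $v$ with $\|v-\mu\|_1\leq\epsilon$ and $\|v\|_1\leq 1+\epsilon$, and then sets $\Bar{\mu}=v/\|v\|_1$. The renormalized set is the \emph{image} of the grid under normalization, so its cardinality is still at most $\bigl(|\mathcal{X}|/\epsilon+1\bigr)^{|\mathcal{X}|}$ — no loss. Both guarantees then follow cleanly: $\Bar{\mu}\geq \mu/\|v\|_1\geq \mu/(1+\epsilon)$ gives the factor $(1+\epsilon)^{-1}$ per kernel and hence $(1+\epsilon)^{-2H}$ over the $2H$ factors of $\Pr^{-}_{\theta}$, while $\|\mu-\Bar{\mu}\|_{\mathrm{TV}}\leq \tfrac12\|\mu-v\|_1+\tfrac12(\|v\|_1-1)\leq\epsilon$ feeds your telescoping/hybrid step (the paper phrases it as an optimal coupling of the two POMDPs, under which the trajectories coincide with probability at least $(1-\epsilon)^{2H}\geq 1-2H\epsilon$) to give \eqref{eq:tvclose} for all $\epsilon$. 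I would recommend replacing your primary route with this one; with that substitution your counting argument and the $2H$-factor decomposition of $\Pr^{-}_{\theta}$ are exactly right.
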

Recall that $\Pr_{\theta}^-$ represents the open-loop probability and is defined in \eqref{eq:envpart}. The proof of Proposition \ref{lem:barvartheta} is presented in Appendix \ref{app:b.1}.
In the rest of the section, we set $\epsilon = 1/(2H K)$ and assume that $\Bar{\varTheta}$ and $\iota$ satisfy the condition of Proposition \ref{lem:barvartheta}. By removing points in $\Bar{\varTheta}$, without loss of generality, we assume that $\iota$ is surjective, i.e., for every $\Bar{\theta}\in\Bar{\varTheta}$ there exists $\theta\in\varTheta$ such that $\Bar{\theta} = \iota(\theta)$.

Define the log-likelihood function
\begin{equation}\label{eq:def:llh}
    \ell(\theta, \mathcal{D}_k) := \sum_{j=1}^{k-1} \log(\Pr_{\theta}^{\pi^j}(\tau^j))
\end{equation}
and the maximum log-likelihood function
\begin{equation}\label{eq:def:maxllh}
    \Bar{\ell}^*(\mathcal{D}_k) := \max_{\theta\in \Bar{\varTheta}}\ell(\theta, \mathcal{D}_k).
\end{equation}
Note that in \eqref{eq:def:maxllh} the maximum is taken over the finite set $\Bar{\varTheta}$, not $\varTheta$.

\textbf{Quick Fact 1}: If $\Bar{\theta} = \iota(\theta)$, then 
\begin{align}
    \ell(\Bar{\theta}, \mathcal{D}_k) &\geq \ell(\theta, \mathcal{D}_k) - 2(k-1)H\log(1+\epsilon)\\
    &\geq \ell(\theta, \mathcal{D}_k) - 2(k-1)H\epsilon \geq \ell(\theta, \mathcal{D}_k) - 1
\end{align}

\textbf{Quick Fact 2}: As a corollary to Quick Fact 1, for any $\theta\in\varTheta$, $\ell(\theta, \mathcal{D}_k) \leq \Bar{\ell}^*(\mathcal{D}_k) + 1$.


We now define the discrete confidence set 
\begin{equation}
    \Bar{\varTheta}(\mathcal{D}_k) := \{\Bar{\theta}\in \Bar{\varTheta} : \ell(\Bar{\theta}, \mathcal{D}_k) \geq \Bar{\ell}^*(\mathcal{D}_k) - \log(K|\Bar{\varTheta}|) - 1 \}.
\end{equation}

The following lemmas form the foundation of our proof. In Lemma \ref{lem:step0}, we bound the Bayesian regret with the quality of the confidence set in terms of the TV distance between distributions on trajectories. In Lemma \ref{lem:step1}, we provide certain guarantee on the quality of the confidence sets. 

\begin{lemma}[Lemma \ref{lem:step0main}, restated]\label{lem:step0}
    The Bayesian regret under PS4POMDPs algorithm can be bounded by
    \begin{equation}\label{eq:bregtv}
        \mathrm{BReg}(\phi^{\mathrm{PS4POMDPs}}, K) \leq 2H + H \E\left[\sum_{k=1}^K   \max_{\Bar{\theta} \in \Bar{\varTheta}(\mathcal{D}_k) }\|\Pr_{\Bar{\theta}}^{\Tilde{\pi}^k} - \Pr_{\theta^*}^{\Tilde{\pi}^k}\|_{\mathrm{TV}} \right]
    \end{equation}
    
\end{lemma}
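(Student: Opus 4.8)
The plan is to combine the standard posterior-sampling regret decomposition with a likelihood-ratio martingale that controls how often the sampled parameter falls outside the confidence set. First I would carry out the decomposition. Conditioned on $\mathcal{D}_k$, the sampled parameter $\Tilde{\theta}^k$ and the true parameter $\theta^*$ both have law $\nu^k$; since $V_{\theta}^*$ is a deterministic function of $\theta$ and $\Tilde{\pi}^k$ is optimal for $\Tilde{\theta}^k$, this gives $\E[V_{\theta^*}^*\mid\mathcal{D}_k]=\E[V_{\Tilde{\theta}^k}^*\mid\mathcal{D}_k]=\E[V_{\Tilde{\theta}^k}^{\Tilde{\pi}^k}\mid\mathcal{D}_k]$, so that
\begin{equation*}
\mathrm{BReg}(\phi^{\mathrm{PS4POMDPs}},K)=\E\left[\sum_{k=1}^K\big(V_{\Tilde{\theta}^k}^{\Tilde{\pi}^k}-V_{\theta^*}^{\Tilde{\pi}^k}\big)\right].
\end{equation*}
Because each trajectory reward $\sum_h r_h(o_h,a_h)$ lies in $[0,H]$, centering it by $H/2$ and using $\sum_\tau|\mu_1(\tau)-\mu_2(\tau)|=2\|\mu_1-\mu_2\|_{\mathrm{TV}}$ bounds each summand by $H\,\|\Pr_{\Tilde{\theta}^k}^{\Tilde{\pi}^k}-\Pr_{\theta^*}^{\Tilde{\pi}^k}\|_{\mathrm{TV}}$, reducing the task to controlling $\E[\sum_k\|\Pr_{\Tilde{\theta}^k}^{\Tilde{\pi}^k}-\Pr_{\theta^*}^{\Tilde{\pi}^k}\|_{\mathrm{TV}}]$.

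Next I would replace the (possibly non-discrete) sample $\Tilde{\theta}^k$ by its discretization $\check{\theta}^k:=\iota(\Tilde{\theta}^k)\in\Bar{\varTheta}$. On the good event $\{\check{\theta}^k\in\Bar{\varTheta}(\mathcal{D}_k)\}$ the triangle inequality together with \eqref{eq:tvclose} (recall $\epsilon=1/(2HK)$, so $2H\epsilon=1/K$) gives
\begin{equation*}
\|\Pr_{\Tilde{\theta}^k}^{\Tilde{\pi}^k}-\Pr_{\theta^*}^{\Tilde{\pi}^k}\|_{\mathrm{TV}}\leq\frac{1}{K}+\max_{\Bar{\theta}\in\Bar{\varTheta}(\mathcal{D}_k)}\|\Pr_{\Bar{\theta}}^{\Tilde{\pi}^k}-\Pr_{\theta^*}^{\Tilde{\pi}^k}\|_{\mathrm{TV}},
\end{equation*}
while on the complementary event I bound the total variation crudely by $1$. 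Summing over $k$ and taking expectations leaves three contributions: the main term $\E[\sum_k\max_{\Bar{\theta}}\|\cdots\|_{\mathrm{TV}}]$, the projection error $\sum_k 1/K=1$, and the bad-event term $\sum_k\Pr(\check{\theta}^k\notin\Bar{\varTheta}(\mathcal{D}_k))$.

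The crux is bounding the bad-event probability, and here I would invoke the posterior-sampling identity: since $\Bar{\varTheta}(\mathcal{D}_k)$ is $\mathcal{D}_k$-measurable and both $\Tilde{\theta}^k$ and $\theta^*$ have conditional law $\nu^k$, we get $\Pr(\iota(\Tilde{\theta}^k)\notin\Bar{\varTheta}(\mathcal{D}_k))=\E[\nu^k(\{\theta:\iota(\theta)\notin\Bar{\varTheta}(\mathcal{D}_k)\})]=\Pr(\iota(\theta^*)\notin\Bar{\varTheta}(\mathcal{D}_k))$. This transfer is the decisive move: a direct bound on the posterior mass would demand a delicate lower bound on the normalizer $\int e^{\ell(\theta,\mathcal{D}_k)}\mathrm{d}\nu^1(\theta)$ (for which Proposition \ref{lem:barvartheta} only gives the wrong-direction estimate), whereas the true-parameter event is handled cleanly by a martingale. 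For each fixed $\Bar{\theta}\in\Bar{\varTheta}$ the likelihood ratio $L_k^{\Bar{\theta}}:=\exp(\ell(\Bar{\theta},\mathcal{D}_k)-\ell(\theta^*,\mathcal{D}_k))=\prod_{j<k}\Pr_{\Bar{\theta}}^{\pi^j}(\tau^j)/\Pr_{\theta^*}^{\pi^j}(\tau^j)$ is a nonnegative martingale with mean $1$, since conditioning on $\theta^*$ each factor has conditional mean $1$ as $\tau^j\sim\Pr_{\theta^*}^{\pi^j}$ given the history. Using $\ell(\iota(\theta^*),\mathcal{D}_k)\geq\ell(\theta^*,\mathcal{D}_k)-1$ (Quick Fact 1), the event $\iota(\theta^*)\notin\Bar{\varTheta}(\mathcal{D}_k)$ is contained in $\{\max_{\Bar{\theta}}L_k^{\Bar{\theta}}>K|\Bar{\varTheta}|\}$, and a union bound over $\Bar{\varTheta}$ with Markov's inequality yields $\Pr(\iota(\theta^*)\notin\Bar{\varTheta}(\mathcal{D}_k))\leq 1/K$.

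Assembling the three pieces, the projection error contributes at most $1$ and the bad-event term contributes $\sum_k 1/K\leq 1$, so
\begin{equation*}
\mathrm{BReg}(\phi^{\mathrm{PS4POMDPs}},K)\leq H\left(1+1+\E\left[\sum_{k=1}^K\max_{\Bar{\theta}\in\Bar{\varTheta}(\mathcal{D}_k)}\|\Pr_{\Bar{\theta}}^{\Tilde{\pi}^k}-\Pr_{\theta^*}^{\Tilde{\pi}^k}\|_{\mathrm{TV}}\right]\right),
\end{equation*}
which is exactly the claimed inequality. I expect the main obstacle to be precisely the bad-event step: recognizing that one should \emph{not} attempt to control the posterior mass directly, but instead transfer it to the true parameter via the posterior-sampling identity and then apply the likelihood-ratio martingale with the threshold $K|\Bar{\varTheta}|$ calibrated so that the union bound collapses to $1/K$.
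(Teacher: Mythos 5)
Your proposal is correct and follows essentially the same route as the paper: the same posterior-sampling regret decomposition, the same discretization via $\iota$ with a good/bad-event split contributing the additive $2H$, and your inline likelihood-ratio argument is exactly the paper's Lemma \ref{lem:confset} (Markov plus a union bound over $\Bar{\varTheta}$ with threshold $K|\Bar{\varTheta}|$). The only cosmetic quibble is that the likelihood ratio is in general a nonnegative supermartingale with conditional mean at most $1$ rather than exactly $1$ (mass of $\Pr_{\Bar{\theta}}^{\pi^j}$ may fall outside the support of $\Pr_{\theta^*}^{\pi^j}$), which changes nothing in the Markov bound.
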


\begin{lemma}[Lemma \ref{lem:step1main}, restated]\label{lem:step1}
    Under any learning algorithm $\phi$, with probability at least $1 - \frac{1}{K}$, 
    \begin{equation}\label{eq:estfinal}
    \begin{split}
        \max_{k\in [K]}\max_{\Bar{\theta}\in\Bar{\varTheta}(\mathcal{D}_k) } \sum_{j=1}^{k}  \|\Pr_{\Bar{\theta}}^{\pi^j} - \Pr_{\theta^*}^{\pi^j}\|_{\mathrm{TV}}^2 \leq  3\log(K|\Bar{\varTheta}|) + 3 
    \end{split}
    \end{equation}
\end{lemma}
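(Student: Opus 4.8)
The plan is to control the squared total-variation distances by a likelihood-ratio supermartingale, in the spirit of the posterior-sampling analysis of \cite{russo2014learning} but carried out in the Hellinger geometry used by \cite{liu2022partially}. First I would pass from total variation to the squared Hellinger distance, which I write in Bhattacharyya form as $d_{\mathrm H}^2(P,Q) := 1 - \sum_{\tau\in\cT}\sqrt{P(\tau)Q(\tau)}$ for distributions on $\cT$. The elementary bound $\|P-Q\|_{\mathrm{TV}}^2 \le 2\,d_{\mathrm H}^2(P,Q)$ reduces the claim to showing that, with probability at least $1-\tfrac1K$, the Hellinger sum $\sum_{j} d_{\mathrm H}^2(\Pr_{\Bar\theta}^{\pi^j},\Pr_{\theta^*}^{\pi^j})$ is $O(\log(K|\Bar{\varTheta}|))$ uniformly over $\Bar\theta\in\Bar{\varTheta}(\mathcal{D}_k)$ and over $k$. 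It suffices to condition on $\theta^*=\theta$ throughout and establish a bound uniform in $\theta$, since it then survives integration against the prior.

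The core step is the supermartingale construction, done for each \emph{fixed} $\Bar\theta$ in the fixed finite surrogate $\Bar{\varTheta}$ (not the random confidence set). Let $\mathcal{F}_{j}$ be the history through the start of episode $j$, so that $\pi^j$ is $\mathcal{F}_j$-measurable and, under $\theta^*$, the trajectory satisfies $\tau^j\sim\Pr_{\theta^*}^{\pi^j}$. Setting $D_j := \sqrt{\Pr_{\Bar\theta}^{\pi^j}(\tau^j)/\Pr_{\theta^*}^{\pi^j}(\tau^j)}$, a direct computation gives the conditional expectation
\begin{align*}
\E\!\left[D_j \mid \mathcal{F}_j\right] = \sum_{\tau\in\cT}\sqrt{\Pr_{\Bar\theta}^{\pi^j}(\tau)\,\Pr_{\theta^*}^{\pi^j}(\tau)} = 1 - d_{\mathrm H}^2\!\left(\Pr_{\Bar\theta}^{\pi^j},\Pr_{\theta^*}^{\pi^j}\right).
\end{align*}
I then define
\begin{align*}
\Bar M_n(\Bar\theta) := \exp\!\left(\tfrac12\sum_{j=1}^n \log\frac{\Pr_{\Bar\theta}^{\pi^j}(\tau^j)}{\Pr_{\theta^*}^{\pi^j}(\tau^j)} + \sum_{j=1}^n d_{\mathrm H}^2\!\left(\Pr_{\Bar\theta}^{\pi^j},\Pr_{\theta^*}^{\pi^j}\right)\right),
\end{align*}
and observe that, since each Hellinger term is $\mathcal{F}_j$-measurable, the inequality $(1-x)e^{x}\le 1$ applied with $x=d_{\mathrm H}^2$ yields $\E[\Bar M_n(\Bar\theta)\mid\mathcal{F}_n]\le \Bar M_{n-1}(\Bar\theta)$. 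Thus $\Bar M_n(\Bar\theta)$ is a nonnegative supermartingale with $\Bar M_0=1$ and $\E[\Bar M_n(\Bar\theta)]\le 1$.

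Next I would apply Ville's maximal inequality, $\Pr[\sup_n \Bar M_n(\Bar\theta)\ge K|\Bar{\varTheta}|]\le 1/(K|\Bar{\varTheta}|)$, and union bound over the finite set $\Bar{\varTheta}$ to obtain, with probability at least $1-\tfrac1K$, the simultaneous bound $\log\Bar M_n(\Bar\theta)\le \log(K|\Bar{\varTheta}|)$ for all $\Bar\theta$ and all $n$. Rearranging $\log\Bar M_n(\Bar\theta)$ gives $\sum_{j} d_{\mathrm H}^2 \le \log(K|\Bar{\varTheta}|) + \tfrac12\big(\ell(\theta^*,\cdot)-\ell(\Bar\theta,\cdot)\big)$, where $\ell$ is the log-likelihood over the relevant episodes. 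Finally I close using the definition of the confidence set: for $\Bar\theta\in\Bar{\varTheta}(\mathcal{D}_k)$ we have $\ell(\Bar\theta,\mathcal{D}_k)\ge \Bar{\ell}^*(\mathcal{D}_k)-\log(K|\Bar{\varTheta}|)-1$, and since $\iota(\theta^*)\in\Bar{\varTheta}$, Quick Fact 1 gives $\Bar{\ell}^*(\mathcal{D}_k)\ge \ell(\iota(\theta^*),\mathcal{D}_k)\ge \ell(\theta^*,\mathcal{D}_k)-1$; hence $\ell(\theta^*,\cdot)-\ell(\Bar\theta,\cdot)\le \log(K|\Bar{\varTheta}|)+2$. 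Substituting and multiplying by $2$ yields $\sum_{j}\|\Pr_{\Bar\theta}^{\pi^j}-\Pr_{\theta^*}^{\pi^j}\|_{\mathrm{TV}}^2\le 3\log(K|\Bar{\varTheta}|)+O(1)$, as claimed. The main obstacle is getting the supermartingale exactly right — in particular the square-root/Bhattacharyya normalization together with the $(1-x)e^x\le1$ correction that absorbs the Hellinger penalty — and upgrading a single Markov bound to the uniform-in-$k$ control via Ville's inequality; a secondary technical point is carefully aligning the episode indices of the confidence set with the martingale time and passing through $\iota(\theta^*)$ so that the continuous $\theta^*$ is compared against its finite surrogate.
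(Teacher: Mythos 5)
Your proposal is correct and follows essentially the same route as the paper: the exponent-$\tfrac12$ likelihood-ratio (Bhattacharyya) supermartingale with the $(1-x)e^{x}\le 1$ correction is exactly the paper's bound $\psi_k(\tfrac12)\le -\tfrac12\|\cdot\|_{\mathrm{TV}}^2$ combined with its Lemma on $\sum_j[\lambda Z_j-\psi_j(\lambda)]$ (which is Ville's inequality), and you close the argument identically via the union bound over $\Bar{\varTheta}$, the confidence-set definition, and Quick Facts 1--2. The constants also work out to $3\log(K|\Bar{\varTheta}|)+3$ once the $k$-th episode term (bounded by $1$) is added, matching the paper.
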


The proofs of Lemmas \ref{lem:step0} and \ref{lem:step1} can be found in Appendices \ref{sec:step0} and \ref{sec:step1}.

Given the above lemmas, to obtain an upper bound on the Bayesian regret, the remaining task is purely algebraic and can be described as follows: 
Use an upper bound of
\begin{align}
    \sum_{j=1}^{k}\|\Pr_{\check{\theta}^k}^{\Tilde{\pi}^j} - \Pr_{\theta^*}^{\Tilde{\pi}^j}\|_{\mathrm{TV}}^2\tag{TR-MLE}\label{eq:MLE}
\end{align}
to derive an upper bound of
\begin{align}
    \sum_{j=1}^{K} \|\Pr_{\check{\theta}^j}^{\Tilde{\pi}^j} - \Pr_{\theta^*}^{\Tilde{\pi}^j}\|_{\mathrm{TV}}\tag{TR-REG}\label{eq:REG}
\end{align}
where $\check{\theta}^k\in \argmax_{\Bar{\theta} \in \Bar{\varTheta}(\mathcal{D}_k) }\|\Pr_{\Bar{\theta}}^{\Tilde{\pi}^k} - \Pr_{\theta^*}^{\Tilde{\pi}^k}\|_{\mathrm{TV}}$.

The key difference between \eqref{eq:MLE} and \eqref{eq:REG} lies in the indexing, where the former measures the difference of the \emph{latest} estimated environment with the true environment under historical policies, while the latter measures the cumulative differences derived from the estimated environment \emph{at the time}. 


\begin{figure}[!ht]
    \centering
    \begin{tikzpicture}[scale=0.35]
        \draw[-stealth] (-10, 0.1) node[anchor=east]{BReg($\phi, K$)} -- (-4, 0.1);
        \draw[-stealth,dashed] (-10, -0.1) -- (-4, -0.1);
        \draw (-7, 0.1) node[anchor=south]{Lemma \ref{lem:step0}};
        \draw[-stealth,dashed] (1, 0) node[anchor=east]{\eqref{eq:REG}} -- (7, 0) node[anchor=west]{\eqref{eq:MLE}};
        \draw[-stealth] (12, 0.1) -- (18, 0.1) node[anchor=west]{$\log(K|\Bar{\varTheta}|)$};
        \draw[-stealth,dashed] (12, -0.1) -- (18, -0.1);
        \draw (15, 0.1) node[anchor=south]{Lemma \ref{lem:step1}};
        \draw[-stealth] (0, 1) -- (0, 7);
        \draw[-stealth] (1, 8) node[anchor=east]{(OP-REG)} -- (7, 8) node[anchor=west]{(OP-MLE)};
        \draw[-stealth] (8, 7) -- (8, 1);
        \draw (0, 4) node[anchor=east]{Step 1};
        \draw (4, 8) node[anchor=south]{Step 3};
        \draw (8, 4) node[anchor=west]{Step 2};
        \draw (4, 0) node[anchor=south]{Lemma \ref{lem:exchangeindexlitemain}};
    \end{tikzpicture}
    \caption{Proof road map of Theorem \ref{thm:breg0} (dashed) and Theorem \ref{thm:breg1} (solid). For Theorem \ref{thm:breg1}, the bound is developed via an auxiliary quantity called the projected operator distances. In this diagram, $A\rightarrow B$ means ``A is upper bounded by some function of B.'' While ``TR'' stands for ``trajectory-based'', ``OP'' stands for ``operator-based.''}
    \label{fig:proofapproach}
\end{figure}

\subsection{Proof of Proposition \ref{lem:barvartheta}}
\label{app:b.1}

For any finite set $\mathcal{X}$, define $\Bar{\Delta}^{\epsilon}(\mathcal{X}) \subset \Delta(\mathcal{X})$ by
\begin{equation}
    \Bar{\Delta}^{\epsilon}(\mathcal{X}) = \left\{ \dfrac{v}{\|v\|_1} : \dfrac{v_x}{\epsilon / |\mathcal{X}|} \in \mathbb{Z}_+, v_x \leq 1 \quad\forall x\in\mathcal{X} \right\}.
\end{equation}
Then we have $|\Bar{\Delta}^{\epsilon}(\mathcal{X})| \leq \left(\frac{|\mathcal{X}|}{\epsilon} + 1\right)^{|\mathcal{X}|}$. Furthermore, for any $\mu\in\Delta(\mathcal{X})$, set 
\begin{equation}\label{eq:qtznrenorm}
    v = \dfrac{1}{|\mathcal{X}| / \epsilon} \lceil (|\mathcal{X}| / \epsilon) \mu \rceil,\qquad \Bar{\mu} = \dfrac{v}{\|v\|_1},
\end{equation}
we have $\|v - \mu\|_1 \leq \epsilon$ and
\begin{align}
    \|v\|_1&\leq \|\mu\|_1 + \|v - \mu\|_1 \leq 1 + \epsilon.
\end{align}

Therefore,
\begin{align}
    \|\mu - \Bar{\mu} \|_{\mathrm{TV}}&\leq \dfrac{1}{2}\|\mu - v \|_1 + \dfrac{1}{2}\|v - \Bar{\mu}\|_1 \leq \dfrac{1}{2}\epsilon + \dfrac{1}{2}\left(\|v\|_1 - 1 \right) \leq \epsilon\\
    \Bar{\mu} &\geq \dfrac{\mu}{\|v\|_1} \geq \dfrac{\mu}{1 + \epsilon}.\label{eq:llratiobound}
\end{align}

Now, define $\Bar{\varTheta} = \left[\Bar{\Delta}^{\epsilon}(\cS)\right]^{1 + (H-1)SA} \times \left[\Bar{\Delta}^{\epsilon}(\cO)\right]^{HS}$. We have
\begin{align}
    \log |\Bar{\varTheta}| &\leq [1 + (H-1)SA] S\log\left(\dfrac{S}{\epsilon} + 1\right) + HSO \log\left(\dfrac{O}{\epsilon} + 1\right)\\
    &\leq (HS^2 A + HSO)\log\left(\dfrac{\max\{S, O\}}{\epsilon} + 1\right)
    .
\end{align}

For any $\theta\in \varTheta$, let $\Bar{\theta} \in\Bar{\varTheta}$ be such that each component of $\Bar{\theta}$ is obtained by quantizing and renormalizing the corresponding component in $\theta$ according to \eqref{eq:qtznrenorm}. 
Then, for any $\tau = (o_h, a_h)_{h=1}^H\in\cT$, using \eqref{eq:envpart} and \eqref{eq:llratiobound} we have $\Pr_{\Bar{\theta}}^{-}(\tau) \geq (1+\epsilon)^{-2H} \Pr_{\theta}^{-}(\tau)$, establishing \eqref{eq:likelihoodclose}. 

Next we establish \eqref{eq:tvclose} for $\theta$ and $\Bar{\theta}$. Define a coupled POMDP with state space $\cS^2$, action space $\cA^2$, observation space $\cO^2$, horizon $H$ as follows
\begin{itemize}
    \item The initial distribution $\bm{b}_1^{\theta, \Bar{\theta}}$ is an optimal coupling\footnote{A coupling of two distributions $\nu_1, \nu_2\in\Delta(\varOmega)$ is a joint distribution $\nu_{12}\in\Delta(\varOmega\times\varOmega)$ with marginal distributions $\nu_1$ and $\nu_2$. An optimal coupling \citep{levin2017markov} is a coupling that attains the minimum value of the mismatch probability $\nu_{12}(\{(\omega_1, \omega_2): \omega_1\neq\omega_2 \})$, which equals $\|\nu_1 - \nu_2\|_{\mathrm{TV}}$. See \cite{levin2017markov} for a construction of such coupling.} of $b_1^{\theta}$ and $b_1^{\Bar{\theta}}$;
    \item The transition kernel $\mathbf{T}^{\theta, \Bar{\theta}}: (\cS\times\cA)^2 \mapsto \Delta(\cS^2)$ is such that for any $h\in [H-1], s, \Bar{s}\in\cS, a,\Bar{a}\in\cA$, the joint distribution $\mathbf{T}_h^{\theta, \Bar{\theta}}(s, a, \Bar{s}, \Bar{a})$ is an optimal coupling of $T_h^{\theta}(s, a)$ and $T_h^{\Bar{\theta}}(\Bar{s}, \Bar{a})$.
    \item The observation kernel $\mathbf{Z}^{\theta, \Bar{\theta}}: \cS^2 \mapsto \Delta(\cO^2)$ is such that for any $h\in [H], s, \Bar{s}\in\cS, o,\Bar{o}\in\cO$, the joint distribution $\mathbf{Z}_h^{\theta, \Bar{\theta}}(s, \Bar{s})$ is an optimal coupling of $Z^\theta_h(s)$ and $Z^{\Bar{\theta}}_h(\Bar{s})$.
\end{itemize}

In the coupled POMDP, at any given step, if the current states are the same for both components, then the current observation would be the same with probability at least $1 - \epsilon$; if the current state and action pairs are the same for both components, then the next states would be the same with probability at least $1 - \epsilon$. Consequently, if the same policy $\pi\in\varPi$ is applied on both components, then with probability at least $(1-\epsilon)^{2H}\geq 1 - 2H\epsilon$, the two trajectories $\tau, \Bar{\tau}\in \cT$ coincide. Due to the fact that the TV distance equals the minimum mismatch probability of any coupling \citep{levin2017markov}, we conclude that $\|\Pr_{\theta}^\pi - \Pr_{\Bar{\theta}}^\pi \|_{\mathrm{TV}}\leq 2H\epsilon$.

\subsection{Proof of Lemma \ref{lem:step0} (Lemma \ref{lem:step0main}, restated)}\label{sec:step0}

First, the episode $k$ regret satisfies
\begin{align}
    &\quad~\E[V_{\theta^*}^*] - \E[V_{\theta^*}^{\Tilde{\pi}^k}]\\
    &= \E[V_{\Tilde{\theta}^k}^*] - \E[V_{\theta^*}^{\Tilde{\pi}^k}] = \E[V_{\Tilde{\theta}^k}^{\Tilde{\pi}^k} - V_{\theta^*}^{\Tilde{\pi}^k}]\\
    &=\E\left[\sum_{\tau = (o_{h}, a_{h})_{h=1}^H\in\cT } [\Pr_{\Tilde{\theta}^k}^{\Tilde{\pi}^k}(\tau) - \Pr_{\theta^*}^{\Tilde{\pi}^k}(\tau)]\sum_{h=1}^H r_h(o_h, a_h) \right]\\
    &\leq H \E\left[\sum_{\tau\in\cT} [\Pr_{\Tilde{\theta}^k}^{\Tilde{\pi}^k}(\tau) - \Pr_{\theta^*}^{\Tilde{\pi}^k}(\tau) ]_+\right] = H \E[\|\Pr_{\Tilde{\theta}^k}^{\Tilde{\pi}^k} - \Pr_{\theta^*}^{\Tilde{\pi}^k} \|_{\mathrm{TV}}]
\end{align}
where in the first identity we used the fact that $\theta^*$ has the same unconditional distribution as $\Tilde{\theta}^k$.

Using Lemma \ref{lem:confset} and the fact that conditioning on $\mathcal{D}_k$, $\theta^*$ has the same distribution as $\Tilde{\theta}^k$, we have $\ell(\Tilde{\theta}^k, \mathcal{D}_k) \geq \Bar{\ell}^*(\mathcal{D}_k) - \log(K|\Bar{\varTheta}|)$  with probability at least $1-\frac{1}{K}$. Let $\vartheta^k = \iota\left(\Tilde{\theta}^k\right)$, we have $\ell(\vartheta^k, \mathcal{D}_k) \geq \ell(\Tilde{\theta}^k, \mathcal{D}_k) - 1$ almost surely due to Quick Fact 1. Therefore, we conclude that
\begin{align}
    \Pr\left(\vartheta^k \in \Bar{\varTheta}(\mathcal{D}_k)\right) &= \Pr\left(\ell(\vartheta^k, \mathcal{D}_k) \geq \Bar{\ell}^*(\mathcal{D}_k) - \log(K|\Bar{\varTheta}|) - 1 \right)  \geq 1-\dfrac{1}{K}
\end{align}

Also note that $\|\Pr_{\Tilde{\theta}^k}^{\Tilde{\pi}^k} - \Pr_{\vartheta^k}^{\Tilde{\pi}^k}\|_{\mathrm{TV}} \leq 2H\epsilon = \frac{1}{K}$ almost surely. Therefore
\begin{align}
    \E[\|\Pr_{\Tilde{\theta}^k}^{\Tilde{\pi}^k} - \Pr_{\theta^*}^{\Tilde{\pi}^k} \|_{\mathrm{TV}}] &\leq  \E[\|\Pr_{\vartheta^k}^{\Tilde{\pi}^k} - \Pr_{\theta^*}^{\Tilde{\pi}^k} \|_{\mathrm{TV}}] + \dfrac{1}{K}\\
    &\leq \E[\|\Pr_{\vartheta^k}^{\Tilde{\pi}^k} - \Pr_{\theta^*}^{\Tilde{\pi}^k} \|_{\mathrm{TV}}\cdot \bm{1}_{\{\vartheta^k \in \Bar{\varTheta}(\mathcal{D}_k) \}} ] + \Pr(\vartheta^k \not\in \Bar{\varTheta}(\mathcal{D}_k)) + \dfrac{1}{K}\\
    &\leq \E\left[\max_{\Bar{\theta}\in \Bar{\varTheta}(\mathcal{D}_k)} \|\Pr_{\Bar{\theta}}^{\Tilde{\pi}^k} - \Pr_{\theta^*}^{\Tilde{\pi}^k} \|_{\mathrm{TV}} \right] + \dfrac{2}{K}
\end{align}
and finally
\begin{align}
    \mathrm{BReg}(\phi^{\mathrm{PS4POMDPs}}, K) &\leq \sum_{k=1}^K H\left( \E\left[ \max_{\Bar{\theta}\in \Bar{\varTheta}(\mathcal{D}_k)} \|\Pr_{\Bar{\theta}}^{\Tilde{\pi}^k} - \Pr_{\theta^*}^{\Tilde{\pi}^k} \|_{\mathrm{TV}} \right] + \dfrac{2}{K}\right)\\
    &\leq 2H + H\E\left[\sum_{k=1}^K \max_{\Bar{\theta}\in \Bar{\varTheta}(\mathcal{D}_k)} \|\Pr_{\Bar{\theta}}^{\Tilde{\pi}^k} - \Pr_{\theta^*}^{\Tilde{\pi}^k} \|_{\mathrm{TV}} \right].
\end{align}

\subsection{Proof of Lemma \ref{lem:step1} (Lemma \ref{lem:step1main}, restated)}\label{sec:step1}

Consider the distribution on the learning process conditioning on $\theta^* = \theta$. Fix $\theta'\in\Bar{\varTheta}$. Define $\mathcal{F}_{k-1}$ to be the $\sigma$-algebra generated from $(\mathcal{D}_k, {\pi}^k )$ and $Z_k = \log\left(\dfrac{\Pr_{\theta'}^{{\pi}^k} (\tau^k) }{ \Pr_{\theta}^{{\pi}^k} (\tau^k)}\right) $. Then the quantity $\psi_k$ defined in Lemma \ref{lem:logeexp} satisfies
\begin{align}
    \psi_k\left(\frac{1}{2}\right) &= \log \E_{\theta}\left[\exp\left(\frac{1}{2} \log\left(\dfrac{\Pr_{\theta'}^{{\pi}^k} (\tau^k) ) }{ \Pr_{\theta}^{{\pi}^k} (\tau^k)}\right) \right)~\Big|~\mathcal{D}_k, {\pi}^k\right]\\
    &= \log \left( \sum_{\tau\in\cT:\Pr_{\theta}^{{\pi}^k} (\tau) > 0 } \Pr_{\theta}^{{\pi}^k} (\tau)\sqrt{\dfrac{\Pr_{\theta'}^{{\pi}^k} (\tau) }{\Pr_{\theta}^{{\pi}^k} (\tau) }} \right) = \log \left(\sum_{\tau\in\cT} \sqrt{\Pr_{\theta'}^{{\pi}^k} (\tau) \Pr_{\theta}^{{\pi}^k} (\tau)} \right)\\
    &\leq \left(\sum_{\tau\in\cT} \sqrt{\Pr_{\theta'}^{{\pi}^k} (\tau) \Pr_{\theta}^{{\pi}^k} (\tau)}\right) - 1= \sum_{\tau\in\cT} \left(\sqrt{\Pr_{\theta'}^{{\pi}^k} (\tau) \Pr_{\theta}^{{\pi}^k} (\tau)} - \frac{1}{2} \Pr_{\theta'}^{{\pi}^k} (\tau) - \frac{1}{2} \Pr_{\theta}^{{\pi}^k} (\tau) \right)\\
    &= -\dfrac{1}{2} \sum_{\tau\in\cT} \left(\sqrt{\Pr_{\theta'}^{{\pi}^k}(\tau)} - \sqrt{\Pr_{\theta}^{{\pi}^k}(\tau)} \right)^2 \leq -\dfrac{1}{2}\|\Pr_{\theta'}^{{\pi}^k} - \Pr_{\theta}^{{\pi}^k} \|_{\mathrm{TV}}^2
\end{align}
where we have used the inequality $\log(t) \leq t - 1$, and the last inequality is due to Lemma \ref{lem:hellinger}.

Applying Lemma \ref{lem:logeexp}, for any $\xi>0$, with probability at least $1-e^{-\xi}$, 
\begin{align}
    \sum_{j=1}^{k-1} \left(\frac{1}{2}Z_j +  \frac{1}{2} \|\Pr_{\theta'}^{{\pi}^j} - \Pr_{\theta}^{{\pi}^j} \|_{\mathrm{TV}}^2\right)   \leq \sum_{j=1}^{k-1} \left(\frac{1}{2}Z_j - \psi_j\left(\frac{1}{2}\right) \right) < \xi  \qquad\forall k\in [K]
\end{align}
or equivalently,
\begin{align}
    \sum_{j=1}^{k-1}  \|\Pr_{\theta'}^{{\pi}^j} - \Pr_{\theta}^{{\pi}^j} \|_{\mathrm{TV}}^2 < 2\xi - \sum_{j=1}^{k-1} Z_j = 2\xi + \ell(\theta, \mathcal{D}_{k}) - \ell(\theta', \mathcal{D}_{k}) \qquad\forall k\in [K]\label{eq:tvboundworkingprogress}
\end{align}

Using the fact that TV distances are bounded by 1, adding in the $k$-th term, \eqref{eq:tvboundworkingprogress} implies
\begin{align}
    \sum_{j=1}^{k}  \|\Pr_{\theta'}^{{\pi}^j} - \Pr_{\theta}^{{\pi}^j} \|_{\mathrm{TV}}^2 < 2\xi + 1 + \ell(\theta, \mathcal{D}_{k}) - \ell(\theta', \mathcal{D}_{k})\qquad \forall k\in [K]
\end{align}

Now, if $\theta'\in \Bar{\varTheta}(\mathcal{D}_k)$, then
\begin{align}
    &\quad~ \ell(\theta, \mathcal{D}_k) -\ell(\theta', \mathcal{D}_k) \leq \Bar{\ell}^*(\mathcal{D}_k) + 1 - \ell(\theta', \mathcal{D}_k) \leq \log(K|\Bar{\varTheta}|) + 2
\end{align}
where the first inequality is due to Quick Fact 2, and the second inequality is due to the definition of $\Bar{\varTheta}(\mathcal{D}_k)$. 

Finally, we have
\begin{align}
    &\quad~\Pr_{\theta}\left(\max_{k\in [K]} \max_{\theta'\in\Bar{\varTheta}(\mathcal{D}_k)} \sum_{j=1}^{k-1}  \|\Pr_{\theta'}^{{\pi}^j} - \Pr_{\theta}^{{\pi}^j} \|_{\mathrm{TV}}^2 \geq 3\log\left(K|\Bar{\varTheta}| \right) + 3\right)\\
    &= \Pr_{\theta}\left(\exists k\in [K], \theta'\in\Bar{\varTheta}(\mathcal{D}_k),~~ \sum_{j=1}^{k-1}  \|\Pr_{\theta'}^{{\pi}^j} - \Pr_{\theta}^{{\pi}^j} \|_{\mathrm{TV}}^2 \geq 3\log\left(K|\Bar{\varTheta}| \right) + 3\right)\\
    &\leq \Pr_{\theta}\left(\exists k\in [K], \theta'\in\Bar{\varTheta},~~ \sum_{j=1}^{k-1}  \|\Pr_{\theta'}^{{\pi}^j} - \Pr_{\theta}^{{\pi}^j} \|_{\mathrm{TV}}^2 \geq 2\log\left(K|\Bar{\varTheta}| \right) + 1 + \ell(\theta, \mathcal{D}_{k}) - \ell(\theta', \mathcal{D}_{k}) \right)\\
    &\leq \sum_{\theta'\in\Bar{\varTheta}} \Pr_{\theta}\left(\exists k\in [K],~~ \sum_{j=1}^{k-1}  \|\Pr_{\theta'}^{{\pi}^j} - \Pr_{\theta}^{{\pi}^j} \|_{\mathrm{TV}}^2 \geq 2\log\left(K|\Bar{\varTheta}| \right) + 1 + \ell(\theta, \mathcal{D}_{k}) - \ell(\theta', \mathcal{D}_{k}) \right) \\
    &\leq \sum_{\theta'\in\Bar{\varTheta}} \exp\left(-\log(K|\Bar{\varTheta}|)\right) = \dfrac{1}{K}.
\end{align}

The above holds for any $\theta\in\varTheta$. Therefore, we conclude that with probability at least $1-\frac{1}{K}$,
\begin{align}
    \max_{k\in [K]}\max_{\theta'\in\Bar{\varTheta}(\mathcal{D}_k)} \sum_{j=1}^{k}  \|\Pr_{\theta'}^{{\pi}^j} - \Pr_{\theta^*}^{{\pi}^j} \|_{\mathrm{TV}}^2 < 3\log\left(K|\Bar{\varTheta}|\right) + 3.
\end{align}

\subsection{Proof of Theorem \ref{thm:breg0}}\label{app:breg0}

Define $(OA)^H$-dimensional vectors $x_k, w_k$ via
\begin{align}
    x_k(\tau) &:= \Tilde{\pi}^k(\tau)\\
    w_k(\tau) &:= \frac{1}{2}|\Pr_{\check{\theta}^k}^{-}(\tau) - \Pr_{\theta^*}^{-}(\tau)|
\end{align}
for each trajectory $\tau\in \cT$, then \eqref{eq:MLE} can be written as $\sum_{j=1}^k (w_k^T x_j)^2$ and \eqref{eq:REG} can be written as $\sum_{k=1}^K w_k^T x_k$. We have
\begin{align}
    \|x_k\|_2 &\leq \|x_k\|_1 = \sum_{\tau\in \cT } \Tilde{\pi}^k(\tau) =  O^H\\
    \|w_k\|_2 &\leq \|w_k\|_1 = \sum_{\tau\in \cT } \frac{1}{2}|\Pr_{\check{\theta}^k}^{-}(\tau) - \Pr_{\theta^*}^{-}(\tau)|= A^H \sum_{\tau\in \cT } \frac{1}{2}|\Pr_{\check{\theta}^k}^{\mathbf{unif}}(\tau) - \Pr_{\theta^*}^{\mathbf{unif}}(\tau)| \leq A^H 
\end{align}
where $\mathbf{unif}$ represents the policy of picking a uniform random action at all times.

Lemma \ref{lem:step1}, in particular, implies that $\sum_{j=1}^k (w_k^T x_j)^2\leq 3\log(K|\Bar{\varTheta}|) + 3$ with probability at least $1-\frac{1}{K}$. 

Applying Lemma \ref{lem:exchangeindexlitemain} with $\lambda = 1$, with probability $1-\frac{1}{K}$, we obtain
\begin{equation}
    \eqref{eq:REG} \leq \sqrt{(1 + \beta)(OA)^H K \log\left[1 + (OA)^{H}K\right] } 
\end{equation}
where $\beta := 3\log(K|\Bar{\varTheta}|) + 3 = \bigOtilde(HS^2A + HSO)$.

Since TV distances are bounded by 1, we also have $\eqref{eq:REG}\leq K$ with probability 1, hence 
\begin{align}
    \E[\eqref{eq:REG}] &\leq \sqrt{(1 + \beta)(OA)^H K \log\left[1 + (OA)^{H}K\right] } + 1
\end{align}

Finally, using Lemma \ref{lem:step0}, we conclude that
\begin{align}
    \mathrm{BReg}(\phi^{\mathrm{PS4POMDPs}}, K) &\leq 2H + H \E\left[\eqref{eq:REG}\right] \\
    &\leq 2H + H\sqrt{(1 + \beta)(OA)^H K \log\left[1 + (OA)^{H}K\right] } + H \\
    &= \bigOtilde\left(H\sqrt{H\beta  (OA)^H K}\right) = \bigOtilde\left(H^2\sqrt{(S^2A + SO) (OA)^H K}\right). 
\end{align}

\subsection{Proof of Theorem \ref{thm:breg1}}\label{app:breg1}

To obtain a bound polynomial in $H$, we will follow the indirect approach as illustrated in Figure \ref{fig:proofapproach} and incorporate Assumption \ref{assump:obsfullrank} along the way. The steps here follow closely that of \cite{liu2022partially}. However, through the use of an index change lemma with a tighter upper bound guarantee (our Proposition \ref{lem:exchangeindexdeluxe} in Appendix \ref{app:changeofindex}, compared to Proposition 22 by \cite{liu2022partially}), we are able to establish a regret bound for weakly $\alpha$-revealing POMDPs with improved dependency on $H, S,$ and $A$. 

Before we start the proof, we first establish a few quick results from Assumption \ref{assump:obsfullrank}:

\textbf{Quick Fact 3:} $\alpha \leq \sqrt{S}$. 

\begin{proof}
We have $\sigma_{\min}(\mathbb{Z}_h^{\theta}) \leq \|\mathbb{Z}_h^{\theta}\|_2  \leq \sqrt{S} \|\mathbb{Z}_h^{\theta}\|_1$. Furthermore, $\|\mathbb{Z}_h^{\theta}\|_1 = \max_{s\in\cS}\sum_{o\in\cO} |\mathbb{Z}_h^{\theta}(o, s)| = 1$.
\end{proof}

\textbf{Quick Fact 4:} Let $\mathbb{Z}_h^{\theta\dagger}$ be the Moore-Penrose inverse of $\mathbb{Z}_h^{\theta}$. We have $\|\mathbb{Z}_h^{\theta\dagger} \|_1 \leq \dfrac{\sqrt{S}}{\alpha}$ for $\theta\in \varTheta$.

\begin{proof} 
    $\|\mathbb{Z}_h^{\theta\dagger}\|_1 \leq \sqrt{S}\|\mathbb{Z}_h^{\theta\dagger}\|_2 = \sqrt{S}\sigma_{\max}(\mathbb{Z}_h^{\theta\dagger}) = \dfrac{\sqrt{S}}{\sigma_{\min}(\mathbb{Z}_h^{\theta})} \leq \dfrac{\sqrt{S}}{\alpha}$.
\end{proof}

Similarly to the proof of Theorem \ref{thm:breg0}, in this proof, we attempt to bound \eqref{eq:REG} with \eqref{eq:MLE}. Note that while Assumption \ref{assump:obsfullrank} is imposed on the set $\varTheta$, both \eqref{eq:REG} and \eqref{eq:MLE} involve some $\Bar{\theta}\in \Bar{\varTheta}$. The fact that $\Bar{\varTheta}$ is not a subset of $\varTheta$ prevents us from applying Assumption \ref{assump:obsfullrank}. Therefore, we need the following step. Recall that we have assumed that the quantization map $\iota$ is surjective. Let $\hat{\theta}^k\in \varTheta$ be such that $\check{\theta}^k = \iota(\hat{\theta}^k)$, then we have $\|\Pr_{\check{\theta}^k}^\pi - \Pr_{\hat{\theta}^k}^\pi \|_{\mathrm{TV}} \leq 2H\epsilon = \frac{1}{K}$. Therefore, we have
\begin{align}
    \sum_{j=1}^{k}\|\Pr_{\hat{\theta}^k}^{\Tilde{\pi}^j} - \Pr_{\theta^*}^{\Tilde{\pi}^j}\|_{\mathrm{TV}}^2 &\leq \sum_{j=1}^{k}\left(\|\Pr_{\check{\theta}^k}^{\Tilde{\pi}^j} - \Pr_{\theta^*}^{\Tilde{\pi}^j}\|_{\mathrm{TV}} + \dfrac{1}{K} \right)^2 = \sum_{j=1}^k \left(\|\Pr_{\check{\theta}^k}^{\Tilde{\pi}^j} - \Pr_{\theta^*}^{\Tilde{\pi}^j}\|_{\mathrm{TV}}^2 + \dfrac{2}{K}\|\Pr_{\check{\theta}^k}^{\Tilde{\pi}^j} - \Pr_{\theta^*}^{\Tilde{\pi}^j}\|_{\mathrm{TV}} + \dfrac{1}{K^2}\right) \\
    &\leq \left(\sum_{j=1}^k \|\Pr_{\check{\theta}^k}^{\Tilde{\pi}^j} - \Pr_{\theta^*}^{\Tilde{\pi}^j}\|_{\mathrm{TV}}^2 \right) + k\left(\dfrac{2}{K} + \dfrac{1}{K^2}\right)  \leq \eqref{eq:MLE} + 2 + \dfrac{1}{K}\label{eq:mleleqmle}
\end{align}
and
\begin{align}
    \eqref{eq:REG} \leq \sum_{j=1}^{K} \left(\|\Pr_{\hat{\theta}^j}^{\Tilde{\pi}^j} - \Pr_{\theta^*}^{\Tilde{\pi}^j}\|_{\mathrm{TV}} + \dfrac{1}{K}\right) \leq \left(\sum_{j=1}^K \|\Pr_{\hat{\theta}^j}^{\Tilde{\pi}^j} - \Pr_{\theta^*}^{\Tilde{\pi}^j}\|_{\mathrm{TV}}\right) + 1\label{eq:regleqreg}.
\end{align}

Therefore, our task for this proof is to use
\begin{align}
    \sum_{j=1}^{k}\|\Pr_{\hat{\theta}^k}^{\Tilde{\pi}^j} - \Pr_{\theta^*}^{\Tilde{\pi}^j}\|_{\mathrm{TV}}^2\tag{TR-MLE-II}\label{eq:MLE2}
\end{align}
to bound
\begin{align}
    \sum_{j=1}^K \|\Pr_{\hat{\theta}^j}^{\Tilde{\pi}^j} - \Pr_{\theta^*}^{\Tilde{\pi}^j}\|_{\mathrm{TV}}\tag{TR-REG-II}\label{eq:REG2}.
\end{align}

\textbf{Step 1: Bound \eqref{eq:REG2} with Projected Distances of Observable Operators.}

Recall that the probability of a trajectory of a POMDP can be written with matrix multiplications. Recall that $\mathbb{T}_{h, a}\in \mathbb{R}^{\cS\times\cS}$ is the probability transition matrix (where the rows represent the next state) under action $a\in\cA$ at time $h\in [H-1]$. Recall that $\mathbb{Z}_h \in \mathbb{R}^{\cO\times \cS}$ is the observation probability matrix. For each $h\in [H-1], a\in\cA, o\in\cO$, define the \emph{observable operator} matrix \citep{jin2020sample}
\begin{equation}\label{eq:defoop}
    \mathbf{B}_h^{\theta}(a, o) := 
    \mathbb{Z}_{h+1}^{\theta} \mathbb{T}_{h, a}^{\theta}\mathrm{diag}(\mathbb{Z}_h^{\theta}(o, \cdot)) \mathbb{Z}_h^{\theta\dagger}
\end{equation}
where $\mathbb{Z}_h^{\theta\dagger}$ is the Moore-Penrose inverse of $\mathbb{Z}_h^{\theta}$. Note that due to Assumption \ref{assump:obsfullrank}, $\mathbb{Z}_h^{\theta}$ has full column rank, hence $\mathbb{Z}_h^{\theta\dagger}\mathbb{Z}_h^\theta = \mathbf{I}$. We can rewrite the environment part of the probability of a trajectory as
\begin{equation}\label{eq:operatorpomdp}
    \Pr_\theta^{-}(o_{1:H}, a_{1:H}) = \mathbf{e}_{o_H}^T \mathbf{B}_{H-1}^{\theta}(a_{H-1}, o_{H-1}) \cdots \mathbf{B}_{1}^{\theta}(a_{1}, o_{1}) \mathbb{Z}_1^{\theta} b_1^{\theta}
\end{equation}

For a full trajectory $\tau = (o_{1:H}, a_{1:H})\in\cT$, let $\tau_h = (o_{1:h}, a_{1:h})$ denote its corresponding partial trajectory up to time $h$ and $\tau_{-h} = (o_{h+1:H}, a_{h+1:H})$ denote its corresponding partial trajectory from time $h+1$ to $H$. Define $\mathbf{v}_0^{\theta} = \mathbb{Z}_1^\theta b_1^\theta$ and $\mathbf{v}_h^\theta(\tau_h) := \mathbf{B}_h^\theta(a_h, o_h)\cdots \mathbf{B}_1^\theta(a_1, o_1) \mathbf{v}_0^{\theta}$. With some abuse of notation, define 
\begin{equation}\label{eq:defBHh}
    \mathbf{B}_{H:h}^{\theta}(\tau_{-(h-1)}) := 
    \begin{cases}
        \mathbf{e}_{o_H}^T&h=H\\
        \mathbf{e}_{o_H}^T\mathbf{B}_{H-1}^{\theta}(a_{H-1}, o_{H-1})\cdots \mathbf{B}_{h}^{\theta}(a_{h}, o_{h})&h<H
    \end{cases}
\end{equation}

In the rest of the step, we will prove the following claim.

\textbf{Claim}: With probability 1, we have $\eqref{eq:REG2} \leq \dfrac{\sqrt{S}}{\alpha}\sum_{h=0}^{H-1}(\text{OP-REG-}h)$, where
\begin{align}
    (\text{OP-REG-}0):= &\dfrac{1}{2}\sum_{j=1}^{K} \|\mathbf{v}_0^{\theta^*} - \mathbf{v}_0^{\hat{\theta}^j}\|_1\label{eq:opREG0}\\
    (\text{OP-REG-}h):=&\dfrac{1}{2}\sum_{j=1}^{K} \sum_{\tau_h\in (\cO\times\cA)^h } \Tilde{\pi}^j(\tau_h) \|[\mathbf{B}_{h}^{\theta^*}(a_{h}, o_{h}) - \mathbf{B}_{h}^{\hat{\theta}^j}(a_{h}, o_{h})]\mathbf{v}_{h-1}^{\theta^*}(\tau_{h-1})\|_1\qquad\forall h\in [H-1] \label{eq:opREGh}
\end{align}
are the \emph{projected operator distances}.\\~

From \eqref{eq:operatorpomdp}, using telescoping sums, for any $\pi\in\varPi$ and parameters $\theta, \theta'$ we have
\begin{align}
    &\quad~ \|\Pr_{\theta}^\pi - \Pr_{\theta'}^{\pi} \|_{\mathrm{TV}}= \dfrac{1}{2} \sum_{\tau\in\cT} \pi(\tau) |\Pr_\theta^{-}(\tau) -  \Pr_{\theta'}^{-}(\tau)| \\
    &\leq \dfrac{1}{2}\sum_{\tau\in\cT} \pi(\tau) |\mathbf{B}_{H:1}^{\theta'}(\tau)[\mathbf{v}_0^{\theta} - \mathbf{v}_0^{\theta'}] | + \\
    &+\dfrac{1}{2}\sum_{\tau\in\cT}\pi(\tau)\sum_{h=1}^{H-1} |\mathbf{B}_{H:h+1}^{\theta'}(\tau_{-h})[\mathbf{B}_{h}^{\theta}(a_{h}, o_{h}) - \mathbf{B}_{h}^{\theta'}(a_{h}, o_{h})]\mathbf{v}_{h-1}^{\theta}(\tau_{h-1}) |
\end{align}

Using Lemma \ref{lem:BHhbound}, we have
\begin{align}
    \sum_{\tau\in\cT} \pi(\tau) |\mathbf{B}_{H:1}^{\theta'}(\tau)[\mathbf{v}_0^{\theta} - \mathbf{v}_0^{\theta'}] | \leq \dfrac{\sqrt{S}}{\alpha}\|\mathbf{v}_0^{\theta} - \mathbf{v}_0^{\theta'}\|_1 
\end{align}

Also through Lemma \ref{lem:BHhbound}, for each $h \in [H-1]$ we have
\begin{align}
    &\quad~\sum_{\tau\in\cT}\pi(\tau) |\mathbf{B}_{H:h+1}^{\theta'}(\tau_{-h})[\mathbf{B}_{h}^{\theta}(a_{h}, o_{h}) - \mathbf{B}_{h}^{\theta'}(a_{h}, o_{h})]\mathbf{v}_{h-1}^{\theta}(\tau_{h-1}) |\\
    &=\sum_{\tau_h\in (\cO\times\cA)^h: \pi(\tau_h) > 0 } \pi(\tau_h) \sum_{\tau_{-h}\in (\cO\times\cA)^{H-h}} \pi(\tau_{-h}|\tau_h) |\mathbf{B}_{H:h+1}^{\theta'}(\tau_{-h})[\mathbf{B}_{h}^{\theta}(a_{h}, o_{h}) - \mathbf{B}_{h}^{\theta'}(a_{h}, o_{h})]\mathbf{v}_{h-1}^{\theta}(\tau_{h-1}) |\\
    &\leq \sum_{\tau_h\in (\cO\times\cA)^h: \pi(\tau_h) > 0 } \pi(\tau_h) \cdot\dfrac{\sqrt{S}}{\alpha} \|[\mathbf{B}_{h}^{\theta}(a_{h}, o_{h}) - \mathbf{B}_{h}^{\theta'}(a_{h}, o_{h})]\mathbf{v}_{h-1}^{\theta}(\tau_{h-1}) \|_1
\end{align}
where for each $\pi\in \varPi$ and $\tau_h\in (\cO\times\cA)^h$, $\pi(\tau_h)$ is defined in a similar way as $\pi(\tau)$ except that $\tau_h$ is a partial trajectory. For $\pi, \tau_h$ such that $\pi(\tau_h) > 0$, $\pi(\tau_{-h}|\tau_h)$ is defined as $\pi(\tau_h, \tau_{-h}) / \pi(\tau_h)$.

We conclude that \eqref{eq:REG2} can be bounded by
\begin{align}
    &\quad~\sum_{j=1}^{K} \|\Pr_{\hat{\theta}^j}^{\Tilde{\pi}^j} - \Pr_{\theta^*}^{\Tilde{\pi}^j} \|_{\mathrm{TV}}\\
    &\leq \frac{\sqrt{S}}{2\alpha} \sum_{j=1}^{K}\left( \|\mathbf{v}_0^{\theta^*} - \mathbf{v}_0^{\hat{\theta}^j}\|_1 +\sum_{h=1}^{H-1}\sum_{\tau_h\in (\cO\times\cA)^h}\Tilde{\pi}^j(\tau_h)\left\|[\mathbf{B}_{h}^{\theta^*}(a_{h}, o_{h}) - \mathbf{B}_{h}^{\hat{\theta}^j}(a_{h}, o_{h})]\mathbf{v}_{h-1}^{\theta^*}(\tau_{h-1})\right\|_1 \right)\label{eq:step21}
\end{align}
proving the claim.

\textbf{Step 2: Bound Projected Distances of Observable Operators with \eqref{eq:MLE2}.} In this step we prove the following claim

\textbf{Claim:} With probability at least $1-\frac{1}{K}$,
\begin{align}
    (\text{OP-MLE-}0):=&\dfrac{1}{2}\|\mathbf{v}_0^{\hat{\theta}^k} - \mathbf{v}_0^{\theta^*}\|_1 \leq \sqrt{\frac{\beta}{k}} \label{eq:opmle0}\\
    (\text{OP-MLE-}h):=&\sum_{j=1}^{k} \left(\dfrac{1}{2}\sum_{\tau_h} \Tilde{\pi}^j(\tau_h) \|[\mathbf{B}_{h}^{\theta^*}(a_{h}, o_{h}) - \mathbf{B}_{h}^{\hat{\theta}^k}(a_{h}, o_{h})]\mathbf{v}_{h-1}^{\theta^*}(\tau_{h-1})\|_1\right)^2 \leq \dfrac{4S}{\alpha^2}\beta \label{eq:opmleh}
\end{align}
for all $k\in [K]$, where $\beta := 3\log(K|\Bar{\varTheta}|) + 5 + \dfrac{1}{K}$.\\~

Observe that the vectors $\mathbf{v}_0^\theta \in\mathbb{R}^{\cO} $ and $\mathbf{v}_h^{\theta}(\tau_h)\in\mathbb{R}^{\cO}$ defined in Step 1 have the following probabilistic interpretation
\begin{align}
    \mathbf{v}_0^{\theta}(o_1) &= \Pr_{\theta}(o_1)\qquad\forall o_1\in\cO\\
    \pi(\tau_h)\cdot [\mathbf{v}_h^{\theta}(\tau_h)](o_{h+1}) &= \Pr_{\theta}^\pi(\tau_h, o_{h+1})\qquad\forall \tau_h \in (\cO\times\cA)^{h}, o_{h+1} \in\cO
\end{align}

Therefore, for all $\theta, \theta'\in\varTheta, \pi\in\varPi$, we have
\begin{align}
    \frac{1}{2}\|\mathbf{v}_0^{\theta} - \mathbf{v}_0^{\theta'}\|_1 &= \frac{1}{2}\sum_{o_1}|\Pr_{\theta}(o_1) - \Pr_{\theta'}(o_1)| \leq \|\Pr_{\theta}^{\pi} - \Pr_{\theta'}^{\pi} \|_{\mathrm{TV}}\\
    \frac{1}{2}\sum_{\tau_h}\pi(\tau_h) \|\mathbf{v}_h^{\theta}(\tau_h) - \mathbf{v}_h^{\theta'}(\tau_h) \|_1 &= \frac{1}{2}\sum_{\tau_h}\sum_{o_{h+1}} |\Pr_{\theta}^\pi(\tau_h, o_{h+1}) - \Pr_{\theta'}^\pi(\tau_h, o_{h+1})|\leq \|\Pr_{\theta}^{\pi} - \Pr_{\theta'}^{\pi} \|_{\mathrm{TV}}
\end{align}
where we have used the fact that the total variation distance between two marginal distributions is no more than that between the original distributions. Therefore, we have
\begin{align}
    &\quad~k\left(\frac{1}{2}\|\mathbf{v}_0^{\hat{\theta}^k} - \mathbf{v}_0^{\theta^*}\|_1\right)^2 = \sum_{j=1}^k \left(\frac{1}{2}\|\mathbf{v}_0^{\hat{\theta}^k} - \mathbf{v}_0^{\theta^*}\|_1\right)^2 \leq \eqref{eq:MLE2} \label{eq:step22:v0bound} \\
    &\quad~\sum_{j=1}^{k}\left(\frac{1}{2}\sum_{\tau_h} \Tilde{\pi}^j(\tau_h) \|\mathbf{v}_h^{\hat{\theta}^k}(\tau_h) - \mathbf{v}_h^{\theta^*}(\tau_h) \|_1\right)^2 \leq \eqref{eq:MLE2}\label{eq:step22:vhbound}
\end{align}

We proceed to bound the projected operator distance. First, for any $\tau_h\in (\cO\times\cA)^h$ we have
\begin{align}
    &\quad~\|[\mathbf{B}_{h}^{\theta^*}(a_{h}, o_{h}) - \mathbf{B}_{h}^{\hat{\theta}^k}(a_{h}, o_{h})]\mathbf{v}_{h-1}^{\theta^*}(\tau_{h-1})\|_1\\
    &\leq \|\mathbf{B}_{h}^{\theta^*}(a_{h}, o_{h}) \mathbf{v}_{h-1}^{\theta^*}(\tau_{h-1}) - \mathbf{B}_{h}^{\hat{\theta}^k}(a_{h}, o_{h}) \mathbf{v}_{h-1}^{\hat{\theta}^k}(\tau_{h-1})\|_1 + \|\mathbf{B}_{h}^{\hat{\theta}^k}(a_{h}, o_{h})[\mathbf{v}_{h-1}^{\hat{\theta}^k}(\tau_{h-1}) - \mathbf{v}_{h-1}^{\theta^*}(\tau_{h-1})]\|_1\\
    &= \|\mathbf{v}_{h}^{\theta^*}(\tau_{h}) - \mathbf{v}_{h}^{\hat{\theta}^k}(\tau_{h})\|_1 + \|\mathbf{B}_{h}^{\hat{\theta}^k}(a_{h}, o_{h})[\mathbf{v}_{h-1}^{\hat{\theta}^k}(\tau_{h-1}) - \mathbf{v}_{h-1}^{\theta^*}(\tau_{h-1})]\|_1
\end{align}
where $\tau_{h-1}\in (\cO\times\cA)^{h-1}$ is the partial trajectory made of the first $h-1$ observations and actions in $\tau_h$.

Therefore, using the inequality $(x_1 + x_2)^2 \leq 2x_1^2 + 2x_2^2$, we have
\begin{align}
    &\quad~\sum_{j=1}^{k} \left( \frac{1}{2}\sum_{\tau_h} \Tilde{\pi}^j(\tau_h) \|[\mathbf{B}_{h}^{\theta^*}(a_{h}, o_{h}) - \mathbf{B}_{h}^{\hat{\theta}^k}(a_{h}, o_{h})]\mathbf{v}_{h-1}^{\theta^*}(\tau_{h-1})\|_1\right)^2\\
    &\leq 2\underbrace{\sum_{j=1}^{k} \left(\frac{1}{2}\sum_{\tau_h} \Tilde{\pi}^j(\tau_h) \|\mathbf{v}_{h}^{\theta^*}(\tau_{h}) - \mathbf{v}_{h}^{\hat{\theta}^k}(\tau_{h})\|_1\right)^2}_{=:\xi_1} \\
    &+2\underbrace{\sum_{j=1}^{k} \left( \frac{1}{2}\sum_{\tau_h} \Tilde{\pi}^j(\tau_h) \|\mathbf{B}_{h}^{\hat{\theta}^k}(a_{h}, o_{h})[\mathbf{v}_{h-1}^{\hat{\theta}^k}(\tau_{h-1}) - \mathbf{v}_{h-1}^{\theta^*}(\tau_{h-1})]\|_1\right)^2}_{\xi_2}
    \label{eq:decomposeprojopdist}
\end{align}

From \eqref{eq:step22:vhbound} we know that $\xi_1 \leq \eqref{eq:MLE2}$. It remains to bound $\xi_2$. We have
\begin{align}
    &\quad~\frac{1}{2}\sum_{\tau_h} \Tilde{\pi}^j(\tau_h) \|\mathbf{B}_{h}^{\hat{\theta}^k}(a_{h}, o_{h})[\mathbf{v}_{h-1}^{\hat{\theta}^k}(\tau_{h-1}) - \mathbf{v}_{h-1}^{\theta^*}(\tau_{h-1})]\|_1\\
    &= \frac{1}{2} \sum_{\tau_{h-1}} \Tilde{\pi}^j(\tau_{h-1})\sum_{a_h, o_h} \Tilde{\pi}_h^j(a_h|\tau_{h-1}, o_h)  \|\mathbf{B}_{h}^{\hat{\theta}^k}(a_{h}, o_{h})[\mathbf{v}_{h-1}^{\hat{\theta}^k}(\tau_{h-1}) - \mathbf{v}_{h-1}^{\theta^*}(\tau_{h-1})]\|_1\\
    &\stackrel{(*)}{\leq}  \dfrac{\sqrt{S}}{2\alpha} \sum_{\tau_{h-1}} \Tilde{\pi}^j(\tau_{h-1})  \|\mathbf{v}_{h-1}^{\hat{\theta}^k}(\tau_{h-1}) - \mathbf{v}_{h-1}^{\theta^*}(\tau_{h-1})\|_1
\end{align}
where in (*) we have used Lemma \ref{lem:sumBbound}.

Therefore, using \eqref{eq:step22:vhbound}, we have
\begin{align}
    \xi_2 \leq \sum_{j=1}^{k} \left(\frac{\sqrt{S}}{2\alpha}\sum_{\tau_{h-1}} \Tilde{\pi}^j(\tau_{h-1})  \|\mathbf{v}_{h-1}^{\hat{\theta}^k}(\tau_{h-1}) - \mathbf{v}_{h-1}^{\theta^*}(\tau_{h-1})\|_1\right)^2\leq \dfrac{S}{\alpha^2} \eqref{eq:MLE2}\label{eq:opmleleqmle1}
\end{align}
and consequently
\begin{align}
    &\quad~\sum_{j=1}^{k} \left(\frac{1}{2}\sum_{\tau_h} \Tilde{\pi}^j(\tau_h) \|[\mathbf{B}_{h}^{\theta^*}(a_{h}, o_{h}) - \mathbf{B}_{h}^{\hat{\theta}^k}(a_{h}, o_{h})]\mathbf{v}_{h-1}^{\theta^*}(\tau_{h-1})\|_1\right)^2 \\
    &\leq 2(\xi_1 + \xi_2) \leq 2\left(1+\dfrac{S}{\alpha^2}\right)\eqref{eq:MLE2} \leq \dfrac{4S}{\alpha^2}\eqref{eq:MLE2}.\label{eq:opmleleqmle2}
\end{align}

From Lemma \ref{lem:step1} and \eqref{eq:mleleqmle}, we know that with probability at least $1-\frac{1}{K}$, we have $\eqref{eq:MLE2} \leq 3\log(K|\Bar{\varTheta}|) + 5 + \dfrac{1}{K} =\beta$. The claim then follows from \eqref{eq:step22:v0bound} and \eqref{eq:opmleleqmle2}.

\textbf{Step 3: Bound (OP-REG) with Results on (OP-MLE) via Index Change Lemma.}
In the final step, for each $h=0,1,\cdots,H-1$, we utilize Proposition \ref{lem:exchangeindexdeluxe} to provide an upper bound of each (OP-REG-$h$) via the high-probability bound on (OP-MLE-$h$). 

Suppose that both \eqref{eq:opmle0} and \eqref{eq:opmleh} are true. First, we have
\begin{align}
    (\text{OP-REG-}0)&=\sum_{k=1}^{K} \dfrac{1}{2}\|\mathbf{v}_0^{\hat{\theta}^k} - \mathbf{v}_0^{\theta^*}\|_1\leq \sum_{k=1}^K \sqrt{\dfrac{\beta}{k}} \leq 2\sqrt{\beta K} .\label{eq:opreg0bound}
\end{align}

Now, fix $h\in[H-1]$, we will use (OP-MLE-$h$) to bound (OP-REG-$h$). 

Define the following quantities: For $k\in[K], a\in \cA, o, \Tilde{o}\in\cO, \tau_{h-1}\in (\cO\times\cA)^{h-1}$, define $ w_{k, a, o, \Tilde{o}}, x_{k, a, o, \tau_{h-1}} \in\mathbb{R}^{\cS}$ by
\begin{align}
    w_{k, a, o, \Tilde{o}}^T &:= \frac{1}{2}\mathbf{e}_{\Tilde{o}}^T [\mathbf{B}_{h}^{\theta^*}(a, o) - \mathbf{B}_{h}^{\hat{\theta}^k}(a, o)] \mathbb{Z}_h^{\theta^*}\\
    x_{k, a, o, \tau_{h-1}} &:=\Tilde{\pi}^k(\tau_h)\mathbb{Z}_h^{\theta^*\dagger} \mathbf{v}_{h-1}^{\theta^*}(\tau_{h-1})
\end{align}
where $\tau_h\in (\cO\times\cA)^h$ is a trajectory such that $a_h=a, o_h=o$, and the first $h-1$ steps are made of $\tau_{h-1}$. Then, (OP-MLE-$h$) can be written as $$\sum_{j=1}^{k} \left(\sum_{(a, o)\in \cA\times\cO} \sum_{\Tilde{o}\in\cO} \sum_{\tau_{h-1}\in (\cO\times\cA)^{h-1} } |w_{k, a, o, \Tilde{o}}^T x_{j, a, o, \tau_{h-1}}|\right)^2.$$
Note that we have used the fact that $\mathbf{v}_{h-1}^{\theta^*}(\tau_{h-1})$ is in the (column) span of $\mathbb{Z}_h^{\theta^*}$ (and hence $\mathbf{v}_{h-1}^{\theta^*}(\tau_{h-1}) = \mathbb{Z}_h^{\theta^*} \mathbb{Z}_h^{\theta^* \dagger} \mathbf{v}_{h-1}^{\theta^*}(\tau_{h-1}) $).


Furthermore, for each $k\in [K]$, we have
\begin{align}
    &\quad~\sum_{(a, o)\in\cA\times\cO}\sum_{\tau_{h-1}\in (\cO\times\cA)^{h-1}}\|x_{k, a, o, \tau_{h-1}}\|_1 \\
    &= \sum_{(a, o)\in\cA\times\cO}\sum_{\tau_{h-1}\in (\cO\times\cA)^{h-1}} \|\Tilde{\pi}^k(\tau_h) \mathbb{Z}_h^{\theta^*\dagger} \mathbf{v}_{h-1}^{\theta^*}(\tau_{h-1})\|_1\\
    &= \sum_{(a, o)\in\cA\times\cO}  \sum_{\tau_{h-1}\in (\cO\times\cA)^{h-1}} \Tilde{\pi}^k_h(a|\tau_{h-1}, o) \|\Tilde{\pi}^k(\tau_{h-1}) \mathbb{Z}_h^{\theta^*\dagger} \mathbf{v}_{h-1}^{\theta^*}(\tau_{h-1})\|_1\\
    &= \sum_{(a, o)\in\cA\times\cO}  \sum_{\tau_{h-1}\in (\cO\times\cA)^{h-1}} \Tilde{\pi}^k_h(a|\tau_{h-1}, o) \sum_{s_h\in\cS} \Pr_{\theta^*}^{\Tilde{\pi}^k}(\tau_{h-1}, s_h ) \\
    &= \sum_{o\in \cO}  \sum_{\tau_{h-1}\in (\cO\times\cA)^{h-1}} \sum_{s_h\in\cS} \Pr_{\theta^*}^{\Tilde{\pi}^k}(\tau_{h-1}, s_h )=O.
\end{align}

We also have
\begin{align}
    &\quad~\sum_{(a, o)\in\cA\times\cO}\sum_{\Tilde{o}\in\cO} \|w_{k, a, o, \Tilde{o}} \|_1\\
    &= \sum_{s\in\cS} \sum_{(a, o)\in\cA\times\cO} \left\|\frac{1}{2} [\mathbf{B}_{h}^{\theta^*}(a, o) - \mathbf{B}_{h}^{\hat{\theta}^k}(a, o)] \mathbb{Z}_h^{\theta^*} \mathbf{e}_s \right\|_{1}\\
    &\leq \sum_{s\in\cS}\dfrac{1}{2}\left(\sum_{(a, o)\in\cA\times\cO}\left\|\mathbf{B}_{h}^{\theta^*}(a, o)  \mathbb{Z}_h^{\theta^*} \mathbf{e}_s \right\|_{1} + \sum_{(a, o)\in\cA\times\cO}\left\|\mathbf{B}_{h}^{\hat{\theta}^k}(a, o)  \mathbb{Z}_h^{\theta^*} \mathbf{e}_s \right\|_{1} \right)\\
    &\stackrel{(*)}{\leq} \sum_{s\in\cS}A\dfrac{\sqrt{S}}{\alpha} \|\mathbb{Z}_h^{\theta^*} \mathbf{e}_s\|_1 = \dfrac{S^{1.5}A}{\alpha}
\end{align}
where in (*) we have used Lemma \ref{lem:sumBbound}.

Applying Proposition \ref{lem:exchangeindexdeluxe} with $\lambda = \frac{S}{\alpha^2}$, we have
\begin{align}
    (\text{OP-REG-}h)&=\dfrac{1}{2}\sum_{k=1}^{K} \sum_{\tau_h } \Tilde{\pi}^k(\tau_h) \|[\mathbf{B}_{h}^{\theta^*}(a_h, o_h) - \mathbf{B}_{h}^{\hat{\theta}^k}(a, o)]\mathbf{v}_{h-1}^{\theta^*}(\tau_{h-1})\|_1\\
    &=\sum_{k=1}^{K} \sum_{(a, o)\in \cA\times\cO} \sum_{\Tilde{o}\in\cO} \sum_{\tau_{h-1}\in (\cO\times\cA)^{h-1} } |w_{k, a, o, \Tilde{o}}^T x_{k, a, o, \tau_{h-1}}| \\
    &\leq \sqrt{\left(\dfrac{S}{\alpha^2}+\dfrac{4S}{\alpha^2} \beta\right)SAO^2K\log\left(1 + SAO^2K\right)}\\
    &\leq 3\alpha^{-1}SO\sqrt{A\beta K \log\left(1 + SAO^2K\right)}.
\end{align}

Using Step 1, we conclude that with probability at least $1-\frac{1}{K}$,
\begin{align}
    \eqref{eq:REG2}
    &\leq \dfrac{\sqrt{S}}{\alpha} \left((\text{OP-REG-0}) + \sum_{h=1}^{H-1}(\text{OP-REG-}h) \right)  \\
    &\leq \dfrac{\sqrt{S}}{\alpha} \left(2\sqrt{\beta K} +  (H-1)\cdot 3\alpha^{-1}SO\sqrt{A\beta K \log\left(1 + SAO^2K\right)}  \right)\\
    &\leq 3\alpha^{-2}H SO \sqrt{SA\beta K \log(1 + SAO^2K) }\label{eq:REGorder}.
\end{align}

Since TV distances are bounded by $1$, we also have $\eqref{eq:REG2}\leq K$ with probability 1. Therefore, we have 
\begin{equation}
    \E\left[\eqref{eq:REG2}\right] \leq 3\alpha^{-2}H SO \sqrt{SA\beta K \log(1 + SAO^2K) } + 1.
\end{equation}
Finally, using Lemma \ref{lem:step0} and \eqref{eq:regleqreg} we have
\begin{align}
    \mathrm{BReg}(\phi^{\mathrm{PS4POMDPs}}, K) &\leq 2H + H \E\left[\eqref{eq:REG}\right] \\
    &\leq  3H + H \E\left[\eqref{eq:REG2}\right] \\
    &= \bigO\left(\alpha^{-2}H^2 S O \sqrt{SA\beta K\log(1 + SAO^2K)}  \right)\\
    &= \bigOtilde\left(\alpha^{-2}H^{2} S^2 O\sqrt{HA(S A + O)K}  \right)
\end{align}
where in the last line we used the fact that $\beta = \bigOtilde(\log(|\Bar{\varTheta}|)) = \bigOtilde(HS^2 A + HSO)$.

\section{Miscellaneous Auxiliary Results}
\begin{lemma}\label{lem:logeexp}
    Let $\{Z_k\}_{k=1}^K$ be a real-valued random process adapted to the filtration $\{\mathcal{F}_k\}_{k=0}^K$. Suppose that $\psi_k(\lambda) = \log\E[\exp(\lambda Z_k)|\mathcal{F}_{k-1}]$ is well-defined for some $\lambda\in \mathbb{R}$. Then, for any $x > 0$, 
    \begin{equation}
        \Pr\left( \max_{k\in [K]}\sum_{j=1}^k [\lambda Z_j - \psi_j(\lambda) ]  < x \right) \geq 1 - e^{-x}
    \end{equation}
\end{lemma}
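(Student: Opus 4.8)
The plan is to recognize the partial sum inside the probability as the logarithm of an exponential martingale and then apply a maximal inequality for nonnegative martingales. Concretely, I would introduce the process
\begin{equation}
    M_k := \exp\left(\sum_{j=1}^k [\lambda Z_j - \psi_j(\lambda)]\right),\qquad M_0 := 1,
\end{equation}
so that, by monotonicity of the exponential, the event $\{\max_{k\in[K]}\sum_{j=1}^k[\lambda Z_j - \psi_j(\lambda)] \geq x\}$ coincides exactly with $\{\max_{k\in[K]} M_k \geq e^x\}$. Establishing the tail bound for $\max_k M_k$ therefore suffices.

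The first step is to check that $(M_k)_{k=0}^K$ is a nonnegative martingale with respect to $(\mathcal{F}_k)$. Nonnegativity is immediate, and integrability follows from the hypothesis that $\psi_k(\lambda)$ is well-defined (so the conditional moment generating function is finite). For the martingale property I would use that $\psi_k(\lambda)=\log\E[\exp(\lambda Z_k)\mid\mathcal{F}_{k-1}]$ is by construction $\mathcal{F}_{k-1}$-measurable and hence can be pulled out of the inner conditional expectation:
\begin{align}
    \E[M_k\mid\mathcal{F}_{k-1}] &= M_{k-1}\,\E[\exp(\lambda Z_k - \psi_k(\lambda))\mid\mathcal{F}_{k-1}]\\
    &= M_{k-1}\,e^{-\psi_k(\lambda)}\,\E[\exp(\lambda Z_k)\mid\mathcal{F}_{k-1}] = M_{k-1},
\end{align}
where the final identity is just the definition $\E[\exp(\lambda Z_k)\mid\mathcal{F}_{k-1}] = e^{\psi_k(\lambda)}$. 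In particular $\E[M_k]=\E[M_0]=1$ for every $k$.

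The second step is to invoke Ville's maximal inequality for nonnegative supermartingales (equivalently, Doob's maximal inequality for the nonnegative submartingale $M_k$), which with threshold $a=e^x>0$ yields
\begin{equation}
    \Pr\left(\max_{k\in[K]} M_k \geq e^x\right) \leq \frac{\E[M_0]}{e^x} = e^{-x}.
\end{equation}
Translating back through the event identification gives $\Pr(\max_{k\in[K]}\sum_{j=1}^k[\lambda Z_j - \psi_j(\lambda)] \geq x) \leq e^{-x}$, and taking complements delivers the claimed bound $1-e^{-x}$.

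I do not expect any substantive obstacle. The only points needing care are the $\mathcal{F}_{k-1}$-measurability of $\psi_k(\lambda)$, which is what permits extracting it from the conditional expectation in the martingale computation, and the well-definedness hypothesis, which guarantees $M_k$ is genuinely integrable so the maximal inequality applies. This is the standard exponential-martingale building block underlying the time-uniform concentration invoked in the proof of Lemma \ref{lem:step1}.
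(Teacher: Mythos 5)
Your proposal is correct and follows essentially the same route as the paper: both define the exponential martingale $M_k = \exp\bigl(\sum_{j=1}^k [\lambda Z_j - \psi_j(\lambda)]\bigr)$ with $M_0=1$, verify $\E[M_k\mid\mathcal{F}_{k-1}]=M_{k-1}$ by pulling out the $\mathcal{F}_{k-1}$-measurable $\psi_k(\lambda)$, and apply Doob's (Ville's) maximal inequality at threshold $e^x$ before taking logarithms and complements. No gaps.
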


\begin{proof}
Define $M_0 = 1$ and $M_k = \exp\left(\sum_{j=1}^k [\lambda Z_j - \psi_j(\lambda) ] \right)$, then 
\begin{align}
    \E[M_k|\mathcal{F}_{k-1}] &= \exp\left(\sum_{j=1}^{k-1} [\lambda Z_j - \psi_j(\lambda) ] \right) \cdot \E\left[\exp(\lambda Z_k - \psi_k(\lambda)) | \mathcal{F}_{k-1} \right]\\
    &= M_{k-1}\dfrac{ \E[\exp(\lambda Z_k) | \mathcal{F}_{k-1}]}{\exp(\psi_k(\lambda))} = M_{k-1}.
\end{align}

Therefore, $M_k$ is a martingale with mean $\E[M_k] = \E[M_0] = 1$. By Doob's Martingale Inequality, we have
\begin{equation}
    \Pr\left(\max_{k\in [K]} M_k \geq e^{x}\right) \leq \E[M_K]e^{-x} = e^{-x} \qquad\forall x > 0
\end{equation}

The lemma is then established by taking the logarithm of both sides of the inequality in the condition.
\end{proof}

\begin{lemma}\label{lem:hellinger}
    Let $p, q\in\Delta_d$ be two probability distributions on the finite set $[d]$, then $\sum_{i=1}^d (\sqrt{p_i} - \sqrt{q}_i)^2 \geq \|p-q\|_{\mathrm{TV}}^2$.
\end{lemma}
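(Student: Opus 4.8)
The plan is to recognize the left-hand side as (twice) the squared Hellinger distance and establish the standard domination $\|p-q\|_{\mathrm{TV}}^2 \le \sum_{i=1}^d (\sqrt{p_i}-\sqrt{q_i})^2$ by a single application of the Cauchy--Schwarz inequality after a convenient factorization. The crucial algebraic observation is that each coordinate difference factors as
\begin{equation}
    |p_i - q_i| = \left|\sqrt{p_i} - \sqrt{q_i}\right|\cdot\left(\sqrt{p_i} + \sqrt{q_i}\right),
\end{equation}
which converts the $\ell_1$-type quantity defining total variation into a product of a ``Hellinger factor'' and an ``auxiliary factor'' whose sum of squares is easily controlled.

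Concretely, I would proceed as follows. First, recalling that $\|p-q\|_{\mathrm{TV}} = \tfrac{1}{2}\sum_{i=1}^d |p_i - q_i|$, substitute the factorization above and apply Cauchy--Schwarz to the two factors:
\begin{equation}
    \|p-q\|_{\mathrm{TV}} \le \frac{1}{2}\sqrt{\sum_{i=1}^d \left(\sqrt{p_i}-\sqrt{q_i}\right)^2}\;\sqrt{\sum_{i=1}^d \left(\sqrt{p_i}+\sqrt{q_i}\right)^2}.
\end{equation}
Next I would bound the auxiliary factor. Expanding gives $\sum_{i} (\sqrt{p_i}+\sqrt{q_i})^2 = \sum_i p_i + \sum_i q_i + 2\sum_i \sqrt{p_i q_i} = 2 + 2\sum_i \sqrt{p_i q_i}$, and a second use of Cauchy--Schwarz yields $\sum_i \sqrt{p_i q_i} \le \sqrt{\sum_i p_i}\sqrt{\sum_i q_i} = 1$, so the auxiliary factor is at most $4$. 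Substituting this bound leaves
\begin{equation}
    \|p-q\|_{\mathrm{TV}} \le \frac{1}{2}\cdot 2 \cdot \sqrt{\sum_{i=1}^d \left(\sqrt{p_i}-\sqrt{q_i}\right)^2} = \sqrt{\sum_{i=1}^d \left(\sqrt{p_i}-\sqrt{q_i}\right)^2},
\end{equation}
and squaring both sides gives exactly the claimed inequality.

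There is no substantial obstacle here: this is a classical relationship between the Hellinger and total variation distances, and the entire argument is two invocations of Cauchy--Schwarz together with the normalization $\sum_i p_i = \sum_i q_i = 1$. The only point requiring a moment of care is the initial factorization trick, which is what makes Cauchy--Schwarz directly applicable; once that is in place the remaining steps are routine. (One could alternatively cite the standard chain $\|p-q\|_{\mathrm{TV}} \le \sqrt{2}\,H(p,q)$, but since the proof is short I would include it in full for self-containedness.)
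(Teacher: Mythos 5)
Your proof is correct and follows essentially the same route as the paper: factor $|p_i-q_i|=|\sqrt{p_i}-\sqrt{q_i}|(\sqrt{p_i}+\sqrt{q_i})$ and apply Cauchy--Schwarz, then bound the auxiliary factor $\sum_i(\sqrt{p_i}+\sqrt{q_i})^2$ by $4$. The only cosmetic difference is in that last step, where you use a second Cauchy--Schwarz via $\sum_i\sqrt{p_iq_i}\leq 1$ while the paper uses the termwise bound $(\sqrt{p_i}+\sqrt{q_i})^2\leq 2p_i+2q_i$; both are immediate and yield the same constant.
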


\begin{proof}
    We have
    \begin{align}
        \|p-q\|_{\mathrm{TV}}^2 &= \frac{1}{4} \left(\sum_{i=1}^d |p_i - q_i|\right)^2 = \frac{1}{4}\left(\sum_{i=1}^d|(\sqrt{p_i} + \sqrt{q_i})(\sqrt{p_i} - \sqrt{q_i})|\right)^2 \\
        &\leq \dfrac{1}{4}\left(\sum_{i=1}^d (\sqrt{p_i} + \sqrt{q_i})^2  \right) \left(\sum_{i=1}^d (\sqrt{p_i} - \sqrt{q_i})^2  \right)\\
        &\leq \dfrac{1}{4}\left(\sum_{i=1}^d (2p_i + 2q_i)  \right) \left(\sum_{i=1}^d (\sqrt{p_i} - \sqrt{q_i})^2  \right) = \sum_{i=1}^d (\sqrt{p_i} - \sqrt{q_i})^2
    \end{align}
\end{proof}

\begin{lemma}\label{lem:confset}
    Let $\Bar{\varTheta}$ be as specified in Proposition \ref{lem:barvartheta} and $\ell, \Bar{\ell}^*$ be as defined in \eqref{eq:def:llh} and \eqref{eq:def:maxllh}. For each $k\in [K]$, with probability at least $1-\delta$, we have $\ell(\theta^*, \mathcal{D}_k) \geq \Bar{\ell}^*(\mathcal{D}_k) - \log(|\Bar{\varTheta}|/\delta)$.
\end{lemma}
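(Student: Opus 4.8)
The plan is to prove the inequality conditionally on $\theta^* = \theta$ for an arbitrary fixed $\theta\in\varTheta$ and then integrate against the prior $\nu^1$. The engine of the argument is a likelihood-ratio supermartingale together with a union bound over the \emph{finite} surrogate set $\Bar{\varTheta}$ — it is precisely the finiteness of $\Bar{\varTheta}$ and the union bound that produce the $\log|\Bar{\varTheta}|$ term, which is why Proposition \ref{lem:barvartheta} was set up in the first place.

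Fix $\theta'\in\Bar{\varTheta}$ and work under $\Pr_\theta$. Let $\mathcal{F}_{j-1} := \sigma(\mathcal{D}_j, \pi^j)$ be the $\sigma$-algebra generated by the data and the chosen policy up to (but not including) the trajectory of episode $j$, matching the filtration used in the proof of Lemma \ref{lem:step1}, so that $\pi^j$ is $\mathcal{F}_{j-1}$-measurable while $\tau^j$ is not. Define the likelihood ratio
\begin{equation}
    W_k := \exp\big(\ell(\theta', \mathcal{D}_k) - \ell(\theta, \mathcal{D}_k)\big) = \prod_{j=1}^{k-1} \frac{\Pr_{\theta'}^{\pi^j}(\tau^j)}{\Pr_{\theta}^{\pi^j}(\tau^j)}.
\end{equation}
Since $\tau^j$ is drawn from $\Pr_\theta^{\pi^j}$ given $\mathcal{F}_{j-1}$ and the policy $\pi^j$ is shared by both parameters, the one-step conditional expectation satisfies $\E_\theta\big[\Pr_{\theta'}^{\pi^j}(\tau^j)/\Pr_\theta^{\pi^j}(\tau^j)\mid\mathcal{F}_{j-1}\big] = \sum_{\tau:\,\Pr_\theta^{\pi^j}(\tau)>0}\Pr_{\theta'}^{\pi^j}(\tau) \leq 1$. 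The event $\{\Pr_\theta^{\pi^j}(\tau^j)=0\}$ has $\Pr_\theta$-probability zero, so $W_k$ is well defined a.s., and trajectories with $\Pr_{\theta'}^{\pi^j}(\tau^j)=0$ only decrease $W_k$. Iterating this bound over $j$ by the tower property gives $\E_\theta[W_k]\leq 1$.

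Next I would apply Markov's inequality with $t=|\Bar{\varTheta}|/\delta$:
\begin{equation}
    \Pr_\theta\!\left(\ell(\theta', \mathcal{D}_k) - \ell(\theta, \mathcal{D}_k) \geq \log\tfrac{|\Bar{\varTheta}|}{\delta}\right) = \Pr_\theta\!\left(W_k \geq t\right) \leq \frac{\E_\theta[W_k]}{t} \leq \frac{\delta}{|\Bar{\varTheta}|}.
\end{equation}
A union bound over the finite set $\Bar{\varTheta}$ then shows that, with $\Pr_\theta$-probability at least $1-\delta$, every $\theta'\in\Bar{\varTheta}$ obeys $\ell(\theta', \mathcal{D}_k) < \ell(\theta, \mathcal{D}_k) + \log(|\Bar{\varTheta}|/\delta)$; taking the maximum over $\theta'$ yields $\Bar{\ell}^*(\mathcal{D}_k) \leq \ell(\theta, \mathcal{D}_k) + \log(|\Bar{\varTheta}|/\delta)$, which is the claim conditional on $\theta^*=\theta$. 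Since this holds under $\Pr_\theta$ for every $\theta\in\varTheta$, integrating over the prior via $\int \Pr_\theta(\,\cdot\,)\,\mathrm{d}\nu^1(\theta)\geq 1-\delta$ gives the unconditional statement. (Alternatively, one could invoke Lemma \ref{lem:logeexp} with $\lambda=1$, where $\psi_j(1)\leq 0$, to obtain the even stronger bound uniform in $k$; the single-$k$ Markov argument suffices here.) The only genuinely delicate point is the supermartingale setup — specifying the filtration so that $\pi^j$ is predictable while $\tau^j$ is not, and verifying $\E_\theta[W_k]\leq 1$ despite the algorithm's randomization and possible null trajectories; everything afterward (Markov, union bound, integrating out the prior) is routine.
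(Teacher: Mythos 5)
Your proposal is correct and follows essentially the same route as the paper's proof: a likelihood-ratio process with conditional expectation at most one (shown via the tower property, summing $\Pr_{\theta'}^{\pi^j}$ over trajectories in the support of $\Pr_{\theta}^{\pi^j}$), then Markov's inequality and a union bound over the finite set $\Bar{\varTheta}$. The only cosmetic difference is that you condition on $\theta^*=\theta$ up front and integrate against the prior at the end, whereas the paper keeps $\theta^*$ inside the conditioning throughout; the two are equivalent.
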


\begin{proof}
    For any $\Bar{\theta}\in\Bar{\varTheta}$ and any $j\in [K]$, we have
    \begin{align}
         \E\left[ \dfrac{\Pr_{\Bar{\theta}}^{\pi^{j}}(\tau^{j})}{\Pr_{\theta^*}^{\pi^{j}}(\tau^{j}) }~\Big|~\mathcal{D}_{j}, \theta^*, \pi^j \right] = \sum_{\Tilde{\tau}\in\cT : \Pr_{\theta^*}^{\pi^j} (\Tilde{\tau}) > 0 } \Pr_{\theta^*}^{\pi^j} (\Tilde{\tau}) \dfrac{\Pr_{\Bar{\theta}}^{\pi^{j}}(\Tilde{\tau})}{\Pr_{\theta^*}^{\pi^{j}}(\Tilde{\tau}) } \leq 1
    \end{align}

    Therefore,
    \begin{align}
        &\quad~\E\left[\exp\left(\ell(\Bar{\theta}, \mathcal{D}_k) - \ell(\theta^*, \mathcal{D}_k) \right) \right] \\
        &= \E\left[\prod_{j=1}^{k-1} \dfrac{\Pr_{\Bar{\theta}}^{\pi^j}(\tau^j)}{\Pr_{\theta^*}^{\pi^j}(\tau^j) } ~\Big|~ \theta^*\right] = \E\left[\E\left[ \prod_{j=1}^{k-1} \dfrac{\Pr_{\Bar{\theta}}^{\pi^j}(\tau^j)}{\Pr_{\theta^*}^{\pi^j}(\tau^j) } ~\Big|~ \mathcal{D}_{k-1}, \theta^*\right] \right] \\
        &= \E\left[\prod_{j=1}^{k-2} \dfrac{\Pr_{\Bar{\theta}}^{\pi^j}(\tau^j)}{\Pr_{\theta^*}^{\pi^j}(\tau^j) } \E\left[ \dfrac{\Pr_{\Bar{\theta}}^{\pi^{k-1}}(\tau^{k-1})}{\Pr_{\theta^*}^{\pi^{k-1}}(\tau^{k-1}) }~\Big|~\mathcal{D}_{k-1}, \theta^* \right] \right] \leq \E\left[\prod_{j=1}^{k-2} \dfrac{\Pr_{\Bar{\theta}'}^{\pi^j}(\tau^j)}{\Pr_{\theta^*}^{\pi^j}(\tau^j) }  \right] \leq ... \leq 1
    \end{align}

    Using Markov Inequality, we have for any $\delta > 0$
    \begin{equation}
        \Pr\left(\ell(\Bar{\theta}, \mathcal{D}_k) - \ell(\theta^*, \mathcal{D}_k) \geq \log\left(\dfrac{1}{\delta}\right)\right) \leq \delta.
    \end{equation}

    Using Union Bound, we obtain
    \begin{equation}
        \Pr\left(\max_{\Bar{\theta} \in\Bar{\varTheta}} \ell(\Bar{\theta}, \mathcal{D}_k) - \ell(\theta^*, \mathcal{D}_k) \geq \log\left(\dfrac{1}{\delta}\right)\right) \leq |\Bar{\varTheta}| \delta.
    \end{equation}

    The lemma is then obtained by replacing $\delta$ with $\delta / |\Bar{\varTheta}|$.
\end{proof}

\begin{lemma}\label{lem:sumBbound}
Assume that Assumption \ref{assump:obsfullrank} is true and $\mathbf{B}_h^{\theta}$ is as defined in \eqref{eq:defoop}. Let $x \in [0, 1]^{\cA\times\cO}$ be such that $\sum_{a\in\cA}x(a, o) = 1$ for all $o\in\cO$. For any $\theta\in\varTheta$ and any $\mathbf{w}\in\mathbb{R}^{\cO}$, we have
\begin{equation}
    \sum_{a, o} x(a, o) \|\mathbf{B}_h^{\theta}(a, o) \mathbf{w} \|_1 \leq \dfrac{\sqrt{S}}{\alpha}\|\mathbf{w}\|_1
\end{equation}
\end{lemma}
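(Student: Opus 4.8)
The plan is to reduce everything to the $\ell_1\to\ell_1$ contraction property of column-stochastic matrices, together with Quick Fact 4. First I would set $\mathbf{u} := \mathbb{Z}_h^{\theta\dagger}\mathbf{w}\in\mathbb{R}^{\cS}$, so that by \eqref{eq:defoop} we have $\mathbf{B}_h^{\theta}(a,o)\mathbf{w} = \mathbb{Z}_{h+1}^{\theta}\mathbb{T}_{h,a}^{\theta}\,\mathrm{diag}(\mathbb{Z}_h^{\theta}(o,\cdot))\,\mathbf{u}$. The whole argument then bounds the left-hand side by $\|\mathbf{u}\|_1$ and invokes Quick Fact 4 only at the very end.

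The key observation is that both $\mathbb{Z}_{h+1}^{\theta}$ (an $O\times S$ observation matrix) and $\mathbb{T}_{h,a}^{\theta}$ (the $S\times S$ transition matrix, with rows indexing the next state) are column-stochastic, hence each has $\ell_1$-induced operator norm equal to $1$; consequently $\|\mathbb{Z}_{h+1}^{\theta}\mathbb{T}_{h,a}^{\theta}\mathbf{v}\|_1\le\|\mathbf{v}\|_1$ for every $\mathbf{v}\in\mathbb{R}^{\cS}$. Applying this with $\mathbf{v} = \mathrm{diag}(\mathbb{Z}_h^{\theta}(o,\cdot))\mathbf{u}$ gives $\|\mathbf{B}_h^{\theta}(a,o)\mathbf{w}\|_1\le \sum_{s\in\cS}Z_h^{\theta}(o|s)\,|u_s|$, and crucially this bound is independent of the action $a$.

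Next I would carry out the double sum. Since $\sum_{a\in\cA} x(a,o)=1$ for each $o$, the action sum collapses, leaving $\sum_{o\in\cO}\sum_{s\in\cS} Z_h^{\theta}(o|s)\,|u_s|$. Because $\mathbb{Z}_h^{\theta}$ is again column-stochastic, $\sum_{o\in\cO} Z_h^{\theta}(o|s)=1$ for each $s$, so this equals $\sum_{s\in\cS}|u_s| = \|\mathbf{u}\|_1 = \|\mathbb{Z}_h^{\theta\dagger}\mathbf{w}\|_1$. Finally, Quick Fact 4 gives $\|\mathbb{Z}_h^{\theta\dagger}\mathbf{w}\|_1\le\|\mathbb{Z}_h^{\theta\dagger}\|_1\,\|\mathbf{w}\|_1\le(\sqrt{S}/\alpha)\|\mathbf{w}\|_1$, which is exactly the claimed bound.

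There is no genuine obstacle here; the result is essentially bookkeeping. The only point that needs care is keeping the stochasticity conventions straight: one must verify that the relevant sums are over the correct index (the columns, i.e.\ the conditioning variable) for each of $\mathbb{Z}_{h+1}^{\theta}$, $\mathbb{T}_{h,a}^{\theta}$, and $\mathbb{Z}_h^{\theta}$, and observe that the diagonal factor $\mathrm{diag}(\mathbb{Z}_h^{\theta}(o,\cdot))$ is precisely what, after summing over $o$ and using the column-stochasticity of $\mathbb{Z}_h^{\theta}$, recombines to yield the clean $\|\mathbf{u}\|_1$ rather than an extra factor.
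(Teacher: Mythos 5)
Your proposal is correct and follows essentially the same route as the paper: peel off $\mathbb{Z}_h^{\theta\dagger}$, use column-stochasticity of $\mathbb{Z}_{h+1}^{\theta}$ and $\mathbb{T}_{h,a}^{\theta}$ to reduce $\|\mathbf{B}_h^{\theta}(a,o)\mathbf{w}\|_1$ to $\sum_s Z_h^{\theta}(o|s)\,|[\mathbb{Z}_h^{\theta\dagger}\mathbf{w}](s)|$, collapse the $a$- and $o$-sums, and finish with Quick Fact 4. The paper phrases the contraction step via $\bm{1}^T$ applied to the nonnegative columns $\mathbb{Z}_{h+1}^{\theta}\mathbb{T}_{h,a}^{\theta}\mathrm{diag}(\mathbb{Z}_h^{\theta}(o,\cdot))\mathbf{e}_s$ rather than via the $\ell_1$-induced operator norm, but this is only a cosmetic difference.
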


\begin{proof}
    Fix $s\in\cS$, let $\mathbf{e}_s\in\mathbb{R}^{\cS}$ be its corresponding indicator vector. Since $\mathbb{Z}_{h+1}^\theta \mathbb{T}_{h, a}^{\theta}\mathrm{diag}(\mathbb{Z}_h^{\theta}(o, \cdot))$ is a non-negative matrix, we have
    \begin{align}
        &\quad~\|\mathbb{Z}_{h+1}^\theta \mathbb{T}_{h, a}^{\theta}\mathrm{diag}(\mathbb{Z}_h^{\theta}(o, \cdot)) \mathbf{e}_s \|_1\\
        &=\bm{1}^T \mathbb{Z}_{h+1}^\theta \mathbb{T}_{h, a}^{\theta}\mathrm{diag}(\mathbb{Z}_h^{\theta}(o, \cdot)) \mathbf{e}_s \\
        &=\bm{1}^T \mathbb{T}_{h, a}^{\theta}\mathrm{diag}(\mathbb{Z}_h^{\theta}(o, \cdot)) \mathbf{e}_s  = \bm{1}^T \mathrm{diag}(\mathbb{Z}_h^{\theta}(o, \cdot)) \mathbf{e}_s = \mathbb{Z}_h^{\theta}(o, s)\\
    \end{align}
    Therefore, for any $\mathbf{w}\in\mathbb{R}^{\cO}$
    \begin{align}
        &\quad~\sum_{a, o}x(a, o) \|\mathbf{B}_h^{\theta}(a, o) \mathbf{w} \|_1 = \sum_{a, o}x(a, o) \|\mathbb{Z}_{h+1}^\theta \mathbb{T}_{h, a}^{\theta}\mathrm{diag}(\mathbb{Z}_h^{\theta}(o, \cdot))(\mathbb{Z}_h^{\theta\dagger} \mathbf{w}) \|_1\\
        &\leq \sum_{a, o}x(a, o) \sum_{s\in\cS} \left|[\mathbb{Z}_h^{\theta\dagger} \mathbf{w}](s)\right|\cdot \|\mathbb{Z}_{h+1}^\theta \mathbb{T}_{h, a}^{\theta}\mathrm{diag}(\mathbb{Z}_h^{\theta}(o, \cdot)) \mathbf{e}_s\|_1\\
        &= \sum_{a, o}x(a, o) \sum_{s\in\cS} \left|[\mathbb{Z}_h^{\theta\dagger} \mathbf{w}](s)\right|\cdot \mathbb{Z}_h^{\theta}(o, s)= \sum_{o}\left(\sum_{a}x(a, o)\right) \sum_{s\in\cS} \left|[\mathbb{Z}_h^{\theta\dagger} \mathbf{w}](s)\right|\cdot \mathbb{Z}_h^{\theta}(o, s) \\
        &= \sum_{s\in\cS} \left|(\mathbb{Z}_h^{\theta\dagger} \mathbf{w})_s\right|\sum_{o} \mathbb{Z}_h^{\theta}(o, s)  = \sum_{s\in\cS} \left|[\mathbb{Z}_h^{\theta\dagger} \mathbf{w}](s)\right|=\|\mathbb{Z}_h^{\theta\dagger} \mathbf{w}\|_1\leq \dfrac{\sqrt{S}}{\alpha} \|\mathbf{w}\|_1
    \end{align}
    where we used Quick Fact 4 in the last inequality.
\end{proof}

\begin{lemma}\label{lem:BHhbound}
Assume that Assumption \ref{assump:obsfullrank} is true and $\mathbf{B}_{H:h+1}^{\theta}$ is as defined in \eqref{eq:defBHh}. Fix $0\leq h < H$.
For any policy $\pi\in \varPi$, any $\tau_h\in (\cO\times\cA)^{h}$ such that $\pi(\tau_h) > 0$, any $\theta\in\varTheta$, and any $\mathbf{w}\in\mathbb{R}^{\cO}$, we have
\begin{equation}
    \sum_{\tau_{-h}\in (\cO\times\cA)^{H-h} } \pi(\tau_{-h}|\tau_h) |\mathbf{B}_{H:h+1}^{\theta}(\tau_{-h}) \mathbf{w} | \leq \dfrac{\sqrt{S}}{\alpha}\|\mathbf{w}\|_1.
\end{equation}
\end{lemma}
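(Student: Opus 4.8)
The plan is to control the left-hand side by exploiting the near-cancellation of consecutive observable operators together with the column-stochasticity of the transition and observation matrices, so that the factor $\sqrt{S}/\alpha$ is paid exactly once rather than once per time step. The key structural fact is that, under Assumption \ref{assump:obsfullrank}, each $\mathbb{Z}_{h'}^{\theta}$ has full column rank, so $\mathbb{Z}_{h'}^{\theta\dagger}\mathbb{Z}_{h'}^{\theta}=\mathbf{I}$, and in the product defining $\mathbf{B}_{H:h+1}^{\theta}(\tau_{-h})=\mathbf{e}_{o_H}^T\mathbf{B}_{H-1}^{\theta}\cdots\mathbf{B}_{h+1}^{\theta}$ every adjacent pair $\mathbb{Z}_{h'+1}^{\theta\dagger}\mathbb{Z}_{h'+1}^{\theta}$ telescopes to the identity.

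Concretely, for $h\le H-2$ this collapses the operator product to
\[
\mathbf{B}_{H:h+1}^{\theta}(\tau_{-h}) = \mathbf{e}_{o_H}^T\,\mathbb{Z}_H^{\theta}\Big[\textstyle\prod_{h'=H-1}^{h+1}\mathbb{T}_{h',a_{h'}}^{\theta}\,\mathrm{diag}(\mathbb{Z}_{h'}^{\theta}(o_{h'},\cdot))\Big]\mathbb{Z}_{h+1}^{\theta\dagger},
\]
the product being ordered with decreasing $h'$, so that only a single pseudoinverse $\mathbb{Z}_{h+1}^{\theta\dagger}$ survives; I set $\mathbf{u}:=\mathbb{Z}_{h+1}^{\theta\dagger}\mathbf{w}\in\mathbb{R}^{\cS}$. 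I would also observe that $\mathbf{B}_{H:h+1}^{\theta}(\tau_{-h})$ does not depend on $a_H$, so summing the last policy factor $\pi_H(a_H\mid\cdot)$ over $a_H$ contributes a factor $1$ and leaves the summation effectively over $o_{h+1:H}$ and $a_{h+1:H-1}$. The boundary case $h=H-1$ is immediate, since there $\mathbf{B}_{H:H}^{\theta}(\tau_{-(H-1)})=\mathbf{e}_{o_H}^T$.

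The core is then to prove $\sum_{\tau_{-h}}\pi(\tau_{-h}\mid\tau_h)\,|\mathbf{B}_{H:h+1}^{\theta}(\tau_{-h})\mathbf{w}|\le\|\mathbf{u}\|_1$ by a layer-by-layer $\ell_1$-contraction. For $g=h+1,\dots,H$, define the partial mass $Q_g:=\sum_{o_{h+1:g-1},a_{h+1:g-1}}\big(\prod_{h'=h+1}^{g-1}\pi_{h'}(a_{h'}\mid\cdot)\big)\big\|\prod_{h'=g-1}^{h+1}\mathbb{T}_{h',a_{h'}}^{\theta}\mathrm{diag}(\mathbb{Z}_{h'}^{\theta}(o_{h'},\cdot))\,\mathbf{u}\big\|_1$, so that $Q_{h+1}=\|\mathbf{u}\|_1$. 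Passing from $Q_g$ to $Q_{g+1}$ introduces one new factor, whose effect I bound with three elementary facts: (i) $\mathbb{T}_{g,a_g}^{\theta}$ is column-stochastic, hence $\ell_1$-nonexpansive on signed vectors; (ii) $\sum_{o_g}\mathrm{diag}(\mathbb{Z}_g^{\theta}(o_g,\cdot))=\mathbf{I}$ since the columns of $\mathbb{Z}_g^{\theta}$ sum to one; and (iii) $\sum_{a_g}\pi_g(a_g\mid\cdot)=1$. Together these give $Q_{g+1}\le Q_g$, hence $Q_H\le Q_{h+1}=\|\mathbf{u}\|_1$; applying the column-sum property of $\mathbb{Z}_H^{\theta}$ to the final sum $\sum_{o_H}|\mathbf{e}_{o_H}^T\mathbb{Z}_H^{\theta}\mathbf{q}|$ contracts by one more step to $\|\mathbf{q}\|_1$. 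Then Quick Fact 4 converts $\|\mathbf{u}\|_1=\|\mathbb{Z}_{h+1}^{\theta\dagger}\mathbf{w}\|_1\le(\sqrt{S}/\alpha)\|\mathbf{w}\|_1$, and in the base case $h=H-1$ the left side is exactly $\|\mathbf{w}\|_1\le(\sqrt{S}/\alpha)\|\mathbf{w}\|_1$ by Quick Fact 3.

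The main obstacle is precisely avoiding the exponential blow-up $(\sqrt{S}/\alpha)^{H-h}$: a naive induction that peels one operator per step and re-invokes Lemma \ref{lem:sumBbound} at each layer would pay a factor $\sqrt{S}/\alpha$ every time step. The resolution is the cancellation $\mathbb{Z}^{\theta\dagger}\mathbb{Z}^{\theta}=\mathbf{I}$, which confines all pseudoinverses to the single surviving $\mathbb{Z}_{h+1}^{\theta\dagger}$ and renders every intermediate layer a genuine, constant-free $\ell_1$-contraction. The only delicate bookkeeping I anticipate is the conditioning inside the policy products $\pi_{h'}(a_{h'}\mid\tau_{h'-1},o_{h'})$, which couple observations and actions across time; this is handled cleanly because at each layer the summation over the newest observation and action decouples from the already-accumulated vector, which is exactly what makes the monotonicity $Q_{g+1}\le Q_g$ go through.
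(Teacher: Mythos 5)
Your proposal is correct and follows essentially the same route as the paper: the decisive step in both is the telescoping cancellation $\mathbb{Z}^{\theta\dagger}\mathbb{Z}^{\theta}=\mathbf{I}$, which isolates a single surviving pseudoinverse $\mathbb{Z}_{h+1}^{\theta\dagger}$ so that Quick Fact 4 is invoked exactly once, with the base case $h=H-1$ handled via Quick Fact 3. The only (cosmetic) difference is that the paper establishes the $\ell_1$-nonexpansiveness of the remaining operator product by interpreting $\pi(\tau_{-h}|\tau_h)\,|\Tilde{\mathbf{B}}_{H:h+1}^{\theta}(\tau_{-h})\mathbf{e}_{s_{h+1}}|$ as the conditional trajectory probability $\Pr_{\theta}^{\pi(\tau_h,\cdot)}(\tau_{-h}|s_{h+1})$ and summing to one, whereas you prove the same bound by an explicit layer-by-layer contraction $Q_{g+1}\leq Q_g$; both arguments are valid and rest on the same column-stochasticity facts.
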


\begin{proof}
    Fix $\tau_h\in (\cO\times\cA)^{h}$. First, when $h=H-1$, we have $\mathbf{B}_{H:h+1}^{\theta}(\tau_{-h}) = \mathbf{e}_{o_H}^T$ and
    \begin{align}
        \sum_{\tau_{-h}\in (\cO\times\cA)^{H-h} } \pi(\tau_{-h}|\tau_h) |\mathbf{B}_{H:h+1}^{\theta}(\tau_{-h}) \mathbf{w} | = \sum_{a_H, o_H} \pi_h(a_H|\tau_{H-1}, o_H) |\mathbf{w}(o_H)| = \sum_{o_H} |\mathbf{w}(o_H)| = \|\mathbf{w}\|_1.
    \end{align}

    For the rest of the proof, assume $h < H-1$. First, note that
    \begin{equation}
        \mathbf{B}_{H:h+1}^{\theta}(\tau_{-h}) = \underbrace{\mathbf{e}_{o_H}^T \mathbb{Z}_H^{\theta}\mathbb{T}_{H-1, a_{H-1}}^{\theta} \mathrm{diag}(\mathbb{Z}_{H-1}^{\theta}(o_{H-1}, \cdot)) \cdots \mathbb{T}_{h+1, a_{h+1}}^{\theta} \mathrm{diag}(\mathbb{Z}_{h+1}^{\theta}(o_{h+1}, \cdot))}_{=:\Tilde{\mathbf{B}}_{H:h+1}^{\theta}(\tau_{-h}) }\mathbb{Z}_{h+1}^{\theta\dagger}
    \end{equation}

    For any $s_{h+1}\in\cS$, we have
    \begin{align}
        \pi(\tau_{-h}|\tau_h) |\Tilde{\mathbf{B}}_{H:h+1}^{\theta}(\tau_{-h}) \mathbf{e}_{s_{h+1}} | = \pi(\tau_{-h}|\tau_h) \Pr_{\theta}^{-}(\tau_{-h}|s_{h+1})  = \Pr_{\theta}^{\pi(\tau_h, \cdot)}(\tau_{-h}|s_{h+1}) 
    \end{align}
    where $\pi(\tau_h, \cdot)$ is the partial policy for timestamps $h+1$ to $H$ obtained by fixing the historical trajectory $\tau_h$.

    Therefore, for any $\mathbf{w}\in\mathbb{R}^{\cO}$,
    \begin{align}
        &\quad~\sum_{\tau_{-h}} \pi(\tau_{-h}|\tau_h) |\mathbf{B}_{H:h+1}^{\theta}(\tau_{-h}) \mathbf{w} | =\sum_{\tau_{-h}} \pi(\tau_{-h}|\tau_h) |\Tilde{\mathbf{B}}_{H:h+1}^{\theta}(\tau_{-h}) (\mathbb{Z}_h^{\theta\dagger} \mathbf{w} ) |\\
        &\leq \sum_{\tau_{-h}} \sum_{s_{h+1}\in\cS} \pi(\tau_{-h}|\tau_h) |\Tilde{\mathbf{B}}_{H:h+1}^{\theta}(\tau_{-h}) \mathbf{e}_{s_{h+1}} | \cdot \left|(\mathbb{Z}_h^{\theta\dagger} \mathbf{w} )_{s_{h+1}}\right|=\sum_{\tau_{-h}} \sum_{s_{h+1}\in\cS} \Pr_{\theta}^{\pi(\tau_h, \cdot)}(\tau_{-h}|s_{h+1})  \left|(\mathbb{Z}_h^{\theta\dagger} \mathbf{w} )_{s_{h+1}}\right| \\
        &= \sum_{s_{h+1}\in\cS}\left(\sum_{\tau_{-h}} \Pr_{\theta}^{\pi(\tau_h, \cdot)}(\tau_{-h}|s_{h+1})\right)\cdot  \left|(\mathbb{Z}_h^{\theta\dagger} \mathbf{w} )_{s_{h+1}}\right| = \| \mathbb{Z}_h^{\theta\dagger} \mathbf{w}\|_1\leq \dfrac{\sqrt{S}}{\alpha}\|\mathbf{w}\|_1
    \end{align}
    where we used Quick Fact 4 in the last inequality.

\end{proof}

\section{Learning Multi-Agent POMDPs}\label{app:mapomdps}
In this appendix, we extend our results to learning problems on multi-agent POMDPs (MA-POMDPs), which are models that involve multiple learning agents with the same objective but different information about the system.
A multiple-agent POMDP can be characterized by a tuple $(I, \cS, \cA, \cO, H, b_1, P, Z, r)$, where $I\in \mathbb{N}$ is the number of agents; $\cS$ is the state space with $|\cS| = S$; $\cA = \prod_{i=1}^I\cA^i$ is the joint action space with $|\cA| = A$; $\cO = \prod_{i=1}^I\cO^i$ is the set of joint observations with $|\cO| = O$; $H\in\mathbb{N}$ is the horizon length; $b_1 \in\Delta(\cS)$ is the distribution of the initial state; $T = (T_h)_{h=1}^{H-1}, T_h: \cS\times\cA\mapsto \Delta(\cS)$ is the state transition kernel; $Z = (Z_h)_{h=1}^{H}, Z_h: \cS\mapsto \Delta(\cO)$ is the joint observation kernel; $r = (r_h)_{h=1}^{H}, r_h: \cO\times\cA \mapsto [0, 1]$ is the instantaneous reward function.

We assume that $I, \cS, \cA, \cO, H, r$ are known to the learning agents. The quantities $b_1, P$ and $Z$ are (in general) unknown to the agents. In the same way as in Section \ref{sec:prelim}, we assume that $b_1, P, Z$ are parameterized by a parameter $\theta\in \varTheta$. The parameter $\theta$ has a prior distribution $\nu^1\in \Delta(\varTheta)$.

In one episode, agent $i$'s individual policy is characterized by a collection of mappings $\pi_{\cdot; i} = (\pi_{h; i})_{h=1}^H, \pi_h^i: (\cO^{i}\times \cA^i)^{h-1}\times \cO^i \mapsto \cA^i$, that depends only on its individual information. A joint policy $\pi = (\pi_{\cdot; i})_{i\in [I]}$ consists of the individual policies of all agents. 
Let $\varPi$ denote the space of all deterministic joint policies. 

At the beginning of each learning episode, the learning agents share all of their action and observation history with each other. 
Each agent then picks an individual policy based on the collective history (with possible randomization over policies) and uses it during the episode. We assume that the agents have access to a common randomness source, and hence their random policy choices can be correlated.
The Bayesian regret is defined in the same way as in Section \ref{sec:prelim}, where we compare the expected total reward under the given (joint) learning algorithm against the best joint policy with respect to the true MA-POMDP parameter.

\begin{algorithm}[!ht]
   \caption{PS4MAPOMDPs: Agent $i$}
   \label{algo:mapsrl}
\begin{algorithmic}
   \STATE \textbf{Input:} Prior $\nu^1\in\Delta(\varTheta)$; Number of episodes $K$
	 \FOR{$k = 1$ to $K$} 
	    \STATE Use common randomness source to sample $\Tilde{\theta}^k \sim \nu^k$
	    \STATE Invoke the MA-POMDP solving oracle to obtain a policy $\Tilde{\pi}^k\in \arg\max_{\pi\in\varPi}(V_{\Tilde{\theta}^k}^{\pi})$ 
        \STATE Apply $\Tilde{\pi}^{k}_{\cdot; i}$ in the $k$-th episode
        \STATE {At the end of $k$-th episode, } Share the local trajectory $\tau^{k, i} = (a_{1:H; i}^{k}, o_{1:H; i}^{k})$ with other agents
        \STATE 
	    Use $\tau^k = (\tau^{k, j})_{j\in [I]}$ to compute new posterior $\nu^{k+1} \in\Delta(\varTheta)$ using \eqref{eq:nubayes}
	\ENDFOR
\end{algorithmic}
\end{algorithm}

The Posterior Sampling for Multi-Agent POMDPs (PS4MAPOMDPs) algorithm works in a similar way to its single-agent counterpart: At the beginning of each episode $k$, a common sample $\Tilde{\theta}^k$ is drawn based on the latest posterior distribution based on the \emph{collective} action and observation history, then the agents collectively invoke an MA-POMDP solving oracle to obtain a joint policy $\Tilde{\pi}^k$. Note that the above steps can be done separately by each agent {using the common randomness source}. Then each agent $i\in [I]$ uses its individual policy $\Tilde{\pi}^k_{\cdot; i}$ to take an action during the episode. 

Under the assumption of action and observation sharing at the beginning of each episode, the MA-POMDP learning problem can be seen as a single-agent problem where the learning agent knows all the actions and observations but is artificially restricted to use only policies from $\varPi$, the set of joint policies. Per Remark \ref{remark:restrict}, we obtain the following result.

\begin{proposition}\label{prop:multiagent}
    The Bayesian regret of the PS4MAPOMDPs algorithm $\phi$ applied to a general MA-POMDP learning problem satisfies
    \begin{equation}
        \mathrm{BReg}(\phi, K) \leq \bigOtilde\left(H^2\sqrt{(S^2A + SO) (OA)^H K}\right). 
    \end{equation}
    Furthermore, if Assumption 1 holds for the joint observation kernel $Z$, then 
    \begin{equation}
        \mathrm{BReg}(\phi, K) \leq \bigOtilde\left(\alpha^{-2}H^{2} S^2 O\sqrt{HA(S A + O)K}  \right).
    \end{equation}
\end{proposition}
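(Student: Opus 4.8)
The plan is to prove Proposition \ref{prop:multiagent} by reducing the MA-POMDP learning problem to the single-agent setting and then invoking the restricted-policy version of Theorems \ref{thm:breg0} and \ref{thm:breg1} recorded in Remark \ref{remark:restrict}. First I would make the reduction sketched in the preceding text precise. Because the agents exchange their full local action-observation histories at the start of every episode and draw $\Tilde{\theta}^k$ from a shared randomness source, the collective data $\mathcal{D}_k$, the sampled parameter $\Tilde{\theta}^k$, and hence the solver output $\Tilde{\pi}^k \in \arg\max_{\pi\in\varPi} V_{\Tilde{\theta}^k}^{\pi}$ are common to all agents; each agent then executes its own component $\Tilde{\pi}^k_{\cdot; i}$, so the joint policy $\Tilde{\pi}^k$ is applied. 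Treating the joint observation $o_h = (o_{h;i})_i \in \cO$ and the joint action $a_h = (a_{h;i})_i \in \cA$ as the observation and action of a single fictitious agent, PS4MAPOMDPs is exactly PS4POMDPs run on a single-agent POMDP with $S$, $O$, $A$ states, observations, and actions, the sole difference being that the planner is constrained to the restricted class $\varPi$ of product-form joint policies. Since the MA-POMDP regret and $V^*_\theta$ are both defined relative to the best joint policy, this is precisely the setting covered by Remark \ref{remark:restrict}.

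Granting the reduction, I would apply Remark \ref{remark:restrict} to transfer the conclusion of Theorem \ref{thm:breg0} without any assumption, and of Theorem \ref{thm:breg1} whenever Assumption \ref{assump:obsfullrank} holds for the joint observation kernel $Z$. Substituting the joint sizes $S$, $A$, $O$ into the two single-agent bounds gives $\bigOtilde(H^2\sqrt{(S^2A+SO)(OA)^H K})$ in general and the $\RegBound$ bound under Assumption \ref{assump:obsfullrank}, the latter being algebraically identical to the form $\bigOtilde(\alpha^{-2}H^{2} S^2 O\sqrt{HA(S A + O)K})$ stated in the proposition.

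The real content --- and the step I would check most carefully --- is confirming that every ingredient behind Theorems \ref{thm:breg0} and \ref{thm:breg1} is insensitive to shrinking the policy class to $\varPi$. The regret decomposition (Lemma \ref{lem:step0main}) uses only the posterior-sampling identity $\E[V_{\theta^*}^*] = \E[V_{\Tilde{\theta}^k}^*]$, which still holds when $V^*$ denotes the restricted optimum and the oracle returns a best element of $\varPi$. The confidence-set guarantee (Lemma \ref{lem:step1main}, through Lemma \ref{lem:confset} and the martingale of Lemma \ref{lem:logeexp}) is stated for an arbitrary adapted policy sequence, so restricting the realizable $\pi^j$ to $\varPi$ only helps. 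In the general bound the trajectory vectors satisfy $\|x_k\|_1 = \sum_{\tau}\Tilde{\pi}^k(\tau) = O^H$ and $\|w_k\|_1 \le A^H$ for any policy. The one place where policy structure is used is the weakly-revealing analysis: Lemma \ref{lem:sumBbound} needs the conditional weights $x(a,o) = \Tilde{\pi}_h^j(a\mid \tau_{h-1}, o)$ to sum to one over $a$ for each joint observation $o$, and I would verify that a product-form policy indeed satisfies $\sum_{a\in\cA} \prod_i \pi_{h;i}(a_i\mid\cdot) = \prod_i \sum_{a_i} \pi_{h;i}(a_i\mid\cdot) = 1$, so the projected-operator machinery of Steps 1--3 in Appendix \ref{app:breg1} goes through verbatim with the joint observable operators $\mathbf{B}_h^\theta(a,o)$. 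Thus the main obstacle is conceptual rather than computational: making the reduction airtight and verifying that the restricted planner preserves both the posterior-sampling identity and the per-observation normalization used in the operator bounds, after which no calculation beyond reindexing is needed.
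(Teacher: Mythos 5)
Your proposal is correct and takes essentially the same route as the paper, which proves the proposition in one step by viewing the MA-POMDP (with full history sharing and common randomness) as a single-agent POMDP over joint actions and observations with the policy class restricted to product-form joint policies, and then invoking Remark \ref{remark:restrict}. Your additional verification that the regret decomposition, the confidence-set machinery, and the normalization $\sum_{a\in\cA}\prod_i \pi_{h;i}(a_i\mid\cdot)=1$ needed for Lemma \ref{lem:sumBbound} all survive the restriction is exactly the content that Remark \ref{remark:restrict} asserts, spelled out in more detail than the paper provides.
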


Note that Assumption \ref{assump:obsfullrank} in multi-agent POMDP setting does not imply that individual observations are necessarily informative to a degree. It only requires the \emph{joint observation} to be informative. A special case that satisfies Assumption \ref{assump:obsfullrank} is 
the Dec-MDP model where the joint observation uncovers the underlying state perfectly.

\section{Experimental Details}
\subsection{Environment: Finite Horizon Tiger}\label{app:tiger}
We consider the following POMDP adapted from the widely used Tiger example \cite{cassandra1994acting}. In this POMDP, a contestant is faced with two doors. Behind one door is a tiger and behind the other door is a treasure. The contestant at each time faces three choices: opens either door and end the game, or listen (for the roar of the tiger) and deter the decision. However, listening is not accurate and has a cost. Unlike the classical infinite horizon version of Tiger, we consider a version with a finite horizon $H$, and the game ends instead of resets after the contestant opens a door. The experimental results are based on the version where $H = 10, \beta = 0.99$. The full description $(\cS, \cA, \cO, H, b_1^{\theta}, T^{\theta}, Z^{\theta}, r)$ is provided as follows:

\begin{itemize}
    \item \textbf{State space:} $\cS = \{\text{TL}, \text{TR}, \text{CD}, \text{CA}, \text{End} \}$ (tiger left; tiger right; contestant dead; contestant alive; game over). 

    \item \textbf{Action space:} $\cA = \{\text{listen}, \text{OL}, \text{OR}\}$ (open left; open right)

    \item \textbf{Observation space:} $\cO = \{\text{HL}, \text{HR}, \text{CD},\text{CA}, \text{End} \}$ (hear left; hear right; contestant dead; contestant alive; game over)

    \item \textbf{Parameter to be learnt:} A single parameter $\theta\in [0, 0.5]$ for the observation kernel. The initial distribution and transition kernels are known to the learning agent.

    \item \textbf{Initial state distribution:} $b_1(\text{TL}) = b_1(\text{TR}) = 0.5$

    \item \textbf{Transition:} $\mathbb{T}_h(s' | s, a ) = \mathbb{T}(s' | s, a )$ is stationary and deterministic
    \begin{itemize}
        \item End is an absorbing state, i.e. $\mathbb{T}(\text{End}|\text{End}, *) = 1$
        \item CD or CA always leads to End: $\mathbb{T}(\text{End}|\text{CD}, * ) = \mathbb{T}(\text{End}|\text{CA}, * ) = 1$ 
        \item $\mathbb{T}(\text{TL} | \text{TL}, \text{listen} ) = \mathbb{T}(\text{TR} | \text{TR}, \text{listen} ) = 1$ 
        \item $\mathbb{T}(\text{CD} | \text{TL}, \text{OL} ) = \mathbb{T}(\text{CD} | \text{TR}, \text{OR} ) = 1$
    
        $\mathbb{T}(\text{CA} | \text{TR}, \text{OL} ) = \mathbb{T}(\text{CA} | \text{TL}, \text{OR} ) = 1$
    \end{itemize}

    \item \textbf{Observation Kernel:} $\mathbb{Z}_h(o|s) = \mathbb{Z}(o|s)$ is stationary
    \begin{itemize}
        \item $\mathbb{Z}(\text{HL}| \text{TL}) = \mathbb{Z}(\text{HR}| \text{TR}) = 0.5+\theta$ for $h\in [H]$
        \item $\mathbb{Z}(\text{HR}| \text{TL}) = \mathbb{Z}(\text{HR}| \text{TR}) = 0.5-\theta$ for $h\in [H]$
        \item $\mathbb{Z}(\text{CD}|\text{CD}) = \mathbb{Z}(\text{CA}|\text{CA}) = \mathbb{Z}(\text{End} |\text{End}) = 1 $
    \end{itemize}

    \item \textbf{Reward function:} $r_h(o, a), h\in [H]$
    \begin{equation}
        r_h(o, a) = -100 \beta^{h-1} \bm{1}_{\{o=\text{CD}\}} + 10 \beta^{h-1} \cdot \bm{1}_{\{o=\text{CA}\}}  -1\cdot\beta^{h} \cdot \bm{1}_{\{a=\text{listen}\}}
    \end{equation}
    where $\beta \in (0, 1)$ is a discount factor.

\end{itemize}

Note that the action of opening door at time $H$ bears no consequence in this model.

\paragraph{Results.}
In Figure~\ref{fig:expresults_tiger}, we see the cumulative expected regret (or regret, for short) is clearly growing sub-linearly in all three \texttt{Tiger} instances. The sublinearity is further confirmed since when scaled down by $K$, the regret converges to zero. Then, when scaled by $\sqrt{K}$, the regret seems to converge to a constant thus indicating that it is scaling as $\sqrt{K}$.

\begin{figure}[!ht]
    \centering
    \includegraphics[width=\linewidth]{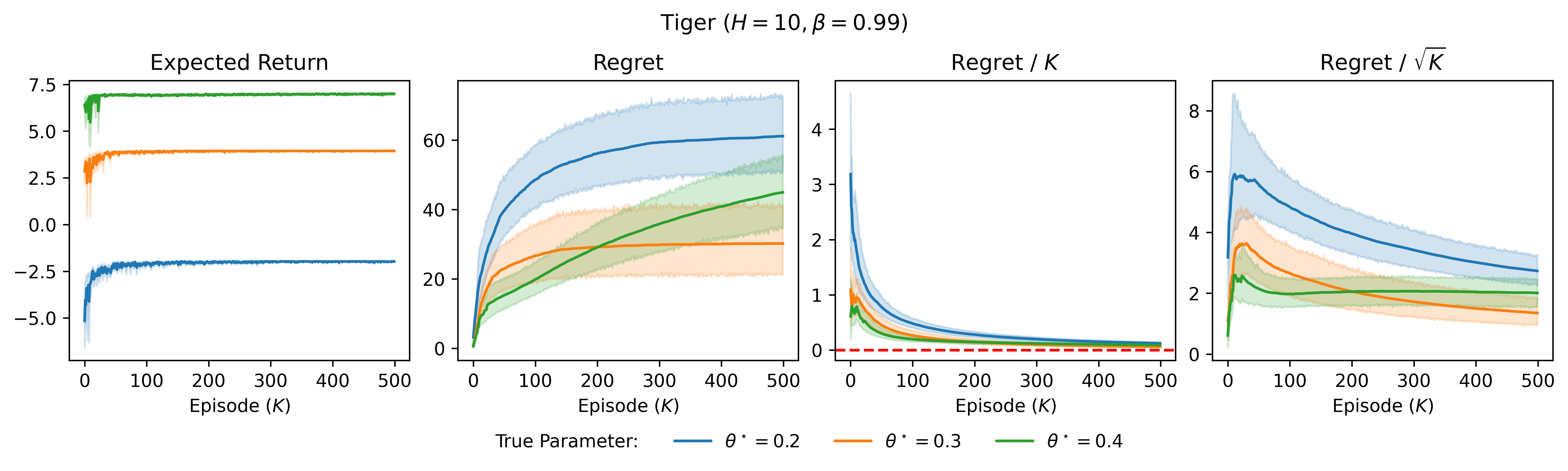}
    \caption{Performance of \texttt{PS4POMDPs} on the \texttt{Tiger} problem. The expected return for a policy is computed exactly by consider evaluating the policy on the corresponding belief-MDPs. All results are averaged over 20 independent runs (with randomly generated random seeds).}
    \label{fig:expresults_tiger}
\end{figure}

Figure \ref{fig:expresultdetailed} provides a more comprehensive look at how the algorithm learns the underlying POMDP with $\theta^* = 0.3$ in one particular run. 
We see that after some initial warm-up, the algorithm is able to quickly learn the true $\theta$ and the posterior samples $\Tilde{\theta}^k$ (shown in the lower plot in Figure \ref{fig:expresultdetailed}) become close to the true $\theta$. As a result, the policy obtained from the POMDP solver is nearly (or exactly) optimal.
\begin{figure}[!ht]
    \centering
    \includegraphics[height=4.5cm]{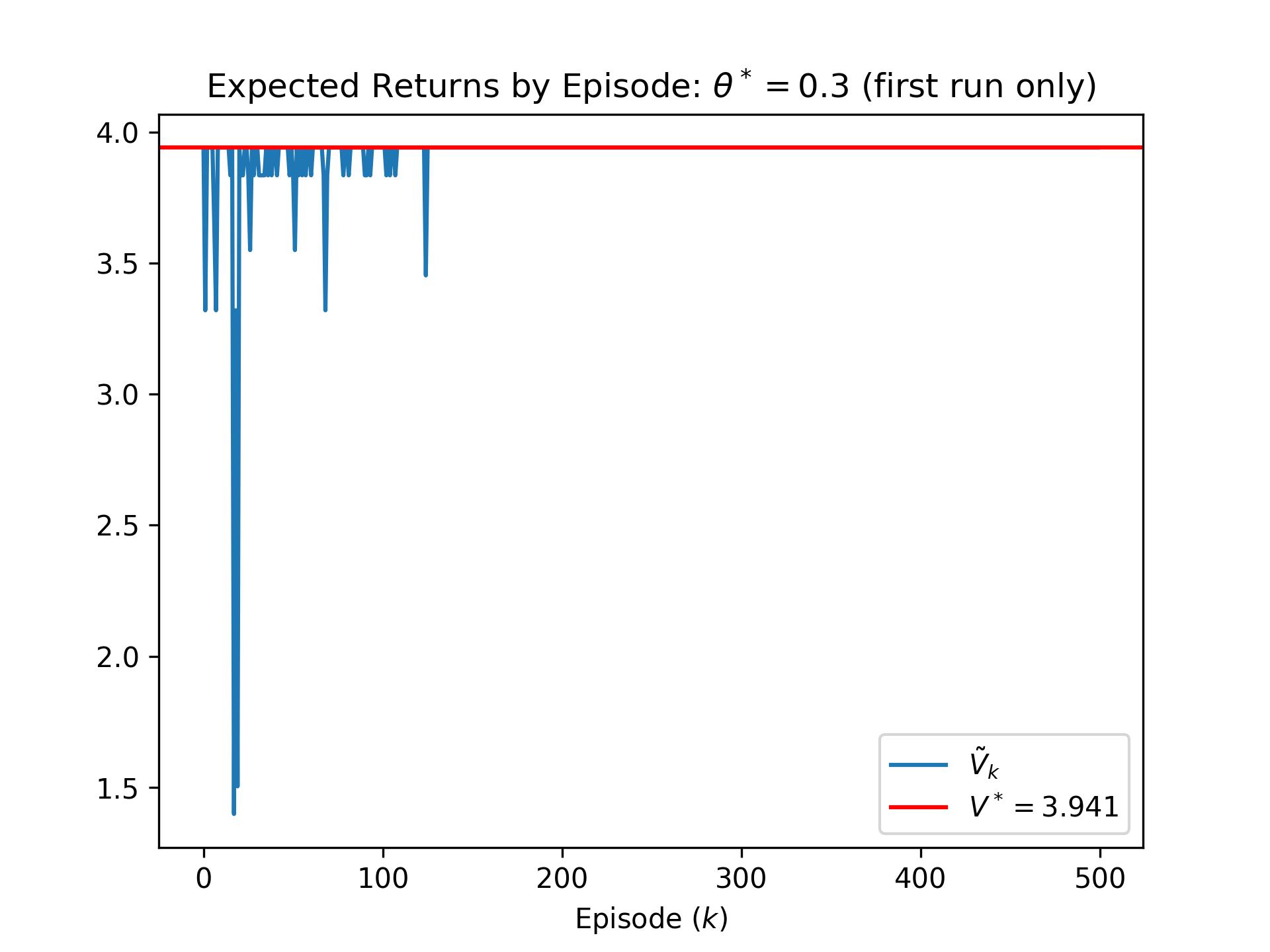}
    \includegraphics[height=4.5cm]{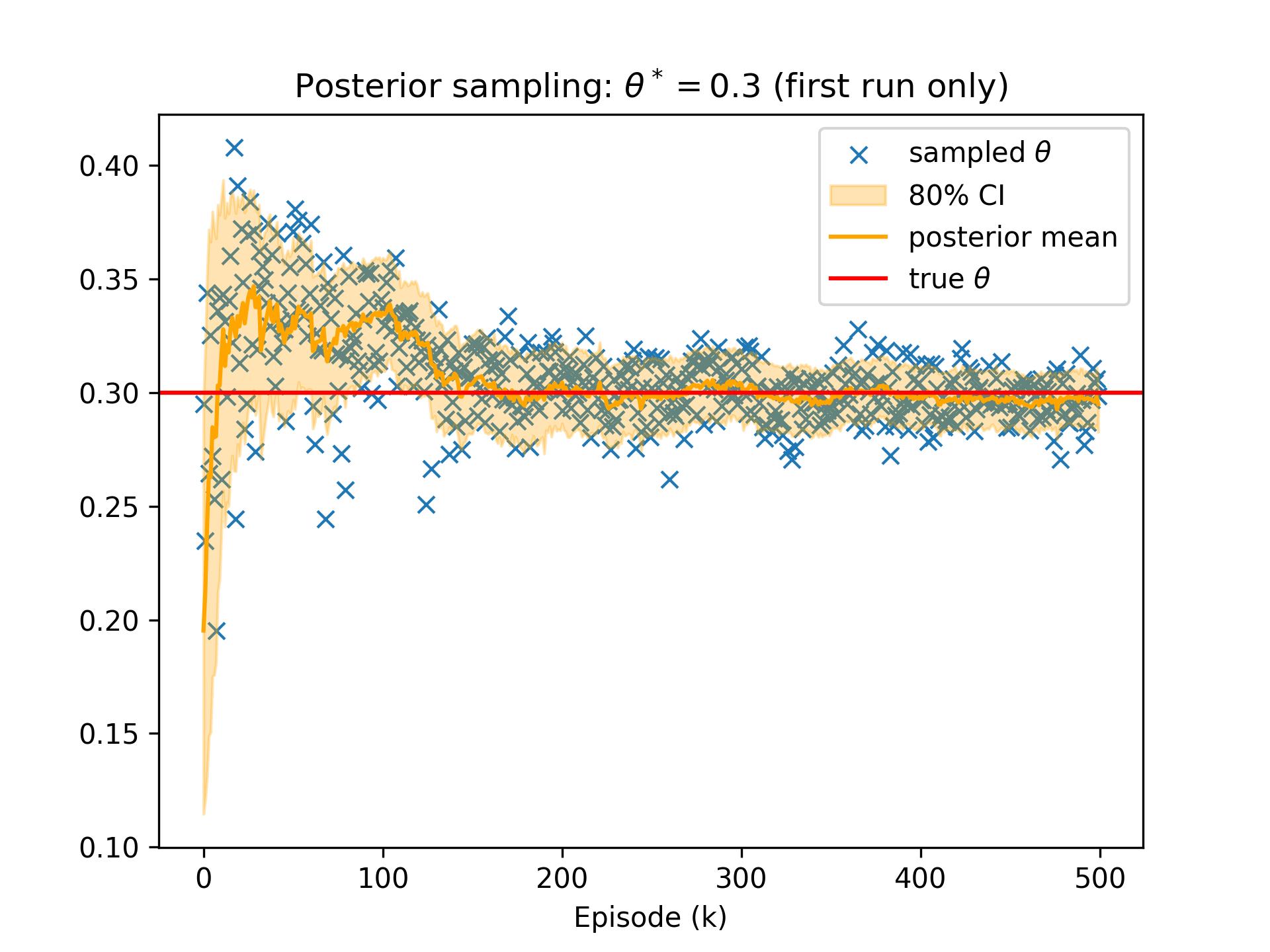}
    \caption{Detailed results for the first run of \texttt{PS4POMDPs} in \texttt{Tiger} with $\theta^* = 0.3$}
    \label{fig:expresultdetailed}
\end{figure}

\subsection{Environment: RiverSwim POMDP}\label{app:riverswim}
\paragraph{RiverSwim (POMDP) Formulation.} 
We formulate the \texttt{RiverSwim} domain \citep{strehl2008analysis, osband2013more} as a POMDP by introducing sensor errors. A \texttt{RiverSwim} POMDP instance with $L$ states is defined by the following components:
\begin{itemize}
    \item \textbf{State space:} $\mathscr{S} = \{1, 2, \dots, L\}$, where $L$ is the length of the river.
    \item \textbf{Action space:} $\mathscr{A} = \{\text{Left}, \text{Right}\}$.
    \item \textbf{Observation space:} $\mathscr{O} = \{1, 2, \dots, L\}$.
    \item \textbf{Initial state distribution:} $b_1(1) = 1$, i.e., $s_1 = 1$ almost surely.
    \item \textbf{Transition kernel:} $T_h(s'|s,a) = T(s'|s,a)$ is stationary with parameters $p=(p_1, p_2, p_3) \in \Delta(3)$ as described in the transition diagram (Figure. \ref{fig:river_swim_transitions}).
    \begin{figure}[h]
        \centering
        \includegraphics[width=0.8\textwidth]{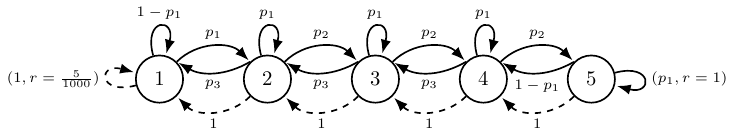}
        \caption{Transition diagram for \texttt{RiverSwim} with $L=5$ and parameterized transition kernel. Continuous and dashed arrows correspond to actions ``Right" and ``Left", respectively. (Modified from \citet{osband2013more})}
        \label{fig:river_swim_transitions}
    \end{figure}

    \item \textbf{Observation kernel:} $Z_h(o|s) = Z(o|s)$ is stationary with parameters $q=(q_1, q_2, q_3)\in \Delta(3)$:
        \begin{itemize}
            \item $Z(1|1) = 1 = Z(L|L)$
            \item $Z(3 | 2) = 1 - q_1 = Z(L-1 | L-2)$
            \item 
            For $s = 2, ..., L-2$:
            \begin{align}
                Z(o | s) = \begin{cases}
                    q_1 & \textrm{if } o=s, \\
                    q_2 & \textrm{if } o=s+1, \\
                    q_3 & \textrm{if } o=s-1.
                \end{cases}     
            \end{align}
        \end{itemize}
    In other words, the algorithm may encounter a sensor error and miscalculate it's distance to the target state $L$ when it is in the middle of the river. However, it sees its true position when it is at the starting state $1$ or when it reaches the target state $L$.

    \item \textbf{Reward function:} $r_h(o, a) = r(o,a)$ is stationary and depends only on the observation:
    \begin{equation}
        r(o, a) = \frac{5}{1000} \cdot \bm{1}_{\{o=1\}} +  1 \cdot \bm{1}_{\{o=L\}}
    \end{equation}
\end{itemize}

\paragraph{Simulated Problem Instance.}
For experiment, we use a \texttt{RiverSwim} POMDP instance with $L=6$, $p = (0.6, 0.35, 0.05)$, and $q = (0.6, 0.2, 0.2)$ and horizon $H=40$. Without the observation kernel, this environment setup is identical to \texttt{RiverSwim} MDP considered by \citet{osband2013more}. Whereas, in our POMDP environment, an algorithm will acquire an incorrect estimate of its location (i.e., state) 40\% of the time when it is in the middle of the river.

\paragraph{PS4POMDPs Variants.}
 We compare three variants of \texttt{PS4POMDPs} with different levels of prior knowledge:
 \begin{enumerate}
     \item \texttt{PS4POMDPs-KnownStruct} is given the parameterization of \texttt{RiverSwim} and thus only need to infer the parameters $p, q \in \Delta(3)$. 
     \item \texttt{PS4POMDPs-KnownObsStruct} only knows the parameterization of the observation kernel but need to infer the transition kernel from the space of all transition kernels. 
     \item \texttt{PS4POMDPs} is the generic version that uses no prior knowledge about the structure of the POMDP environment. It performs posterior inference on the space of all transition and observation kernels. 
 \end{enumerate}
 In each case, we use a diffuse Dirichlet prior with pseudo-count 1 for each parameter to be learned. 

\paragraph{Results.}
Figure \ref{fig:expresults_all_riverswim} shows the performance of \texttt{PS4POMDPs} on the \texttt{RiverSwim} problem. In all cases, the agent identifies a near-optimal policy within a few episodes. The regret again appears to be sublinear for the general \texttt{PS4POMDPs} and \texttt{PS4POMDPs-KnownObsStruct}. Although \texttt{PS4POMDPs-KnownStruct} seems to incur linear regret, the magnitude of regret is small. In fact, \texttt{PS4POMDPs-KnownStruct} shows excellent sample efficiency as it can find a near-optimal policy within merely two or three episodes. It incurs a small per-round regret (on average) due to approximation errors originated from MCMC (HMC) and the planning algorithm. Nevertheless, in Figure \ref{fig:expresults_params_riverswim}, we observe that \texttt{PS4POMDPs-KnownStruct} can identify the ground-truth parameters quickly and with high accuracy.

\begin{figure}[!ht]
    \centering
    \includegraphics[width=\linewidth]{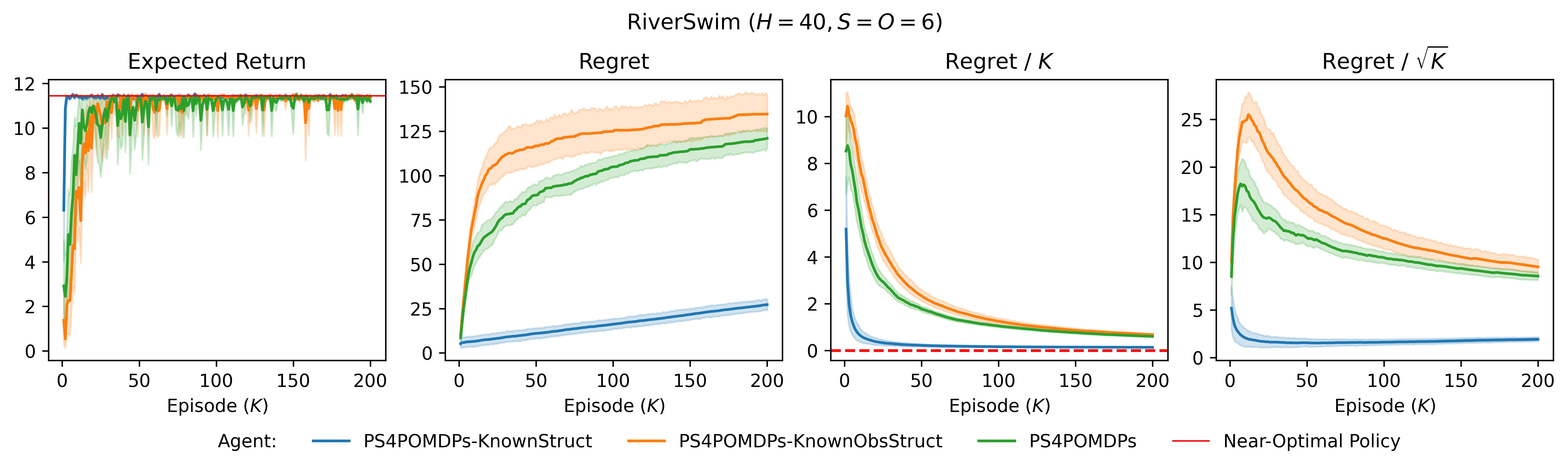}
    \caption{Performance of \texttt{PS4POMDPs} with different prior knowledge on the \texttt{RiverSwim} problem with $H=40, L=6$, and parameters $p = (0.6, 0.35, 0.05)$ and $ q= (0.6, 0.2 ,0.2)$. The expected return is computed via the Monte Carlo method (with 1000 samples) for the algorithm's policies and the near-optimal policy (red) found by SARSOP given the true model. Results are averaged over 20 independent runs and the shaded region shows 95\% confidence intervals.}
    \label{fig:expresults_all_riverswim}
\end{figure}

\begin{figure}[!ht]
    \centering
    \includegraphics[width=0.6\linewidth]{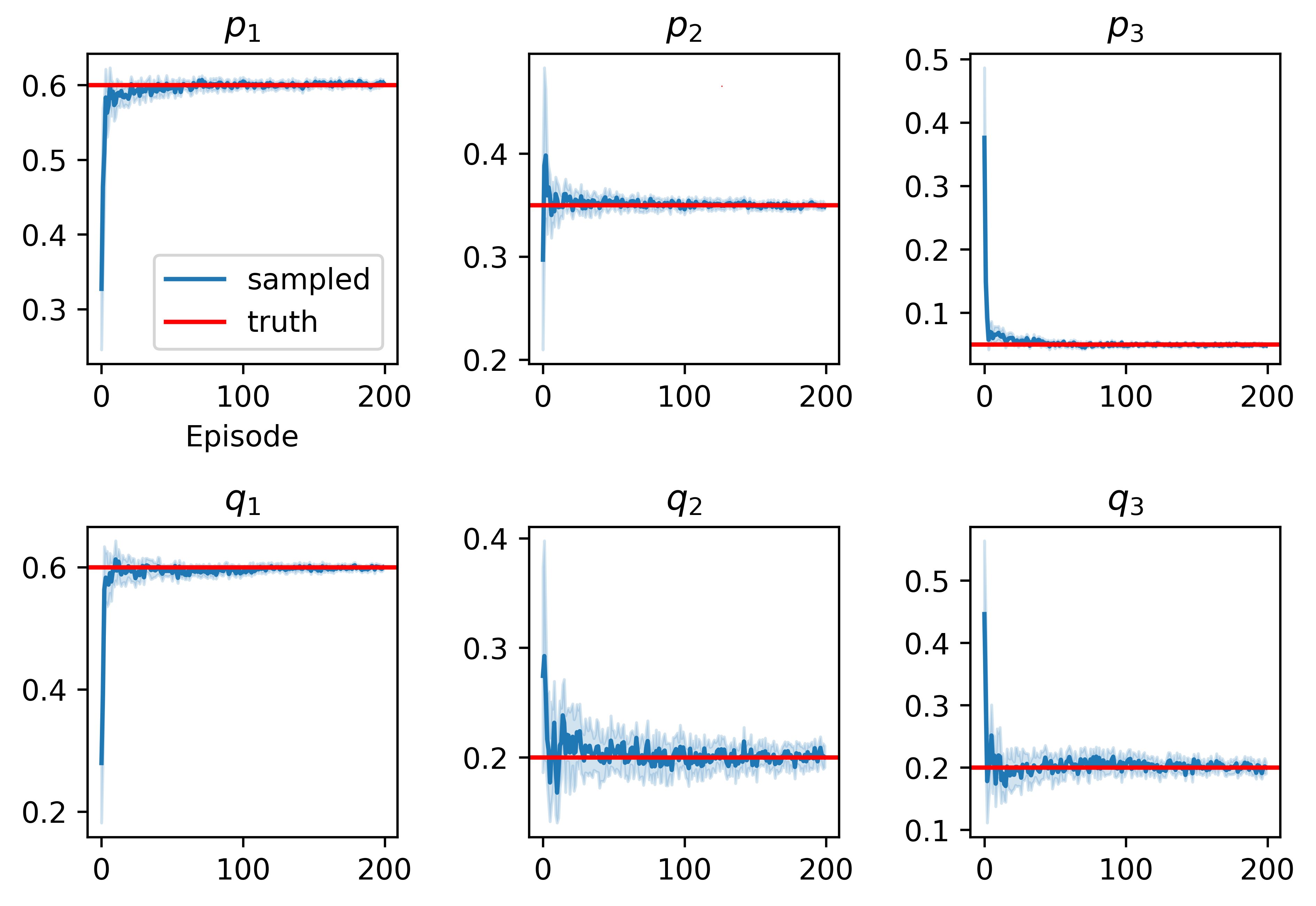}
    \caption{Sampled parameters of \texttt{PS4POMDPs-KnownStruct} in the \texttt{RiverSwim} problem with $H=40, L=6$, and ground-truth parameters $p = (0.6, 0.35, 0.05)$ and $ q= (0.6, 0.2 ,0.2)$. The shaded region shows 95\% confidence intervals across 20 independent runs.}
    \label{fig:expresults_params_riverswim}
\end{figure}

\paragraph{Additional Experiment with Random POMDPs.} We conduct further experiments with randomly generated POMDPs (\texttt{RandomPOMDP}s) derived from \texttt{RiverSwim}. Each \texttt{RandomPOMDP} has $H=40, S=10, A=10, O=10$ and is equipped with the same observation kernel and reward model as \texttt{RiverSwim}. Whereas, each \texttt{RandomPOMDP} has a stationary transition kernel randomly sampled from the (uniform) prior over all transition kernels. The results are summarized in Figure \ref{fig:expresults_all_randpomdp}. Since the  parameter space for posterior inference is considerably larger, convergence requires more episodes. However, for each random POMDP instance, we still see a clear sign of convergence in the expected return, and when scaled by $1/\sqrt{K}$, the regret appears to be converging to a constant.

\begin{figure}[!ht]
    \centering
    \includegraphics[width=\linewidth]{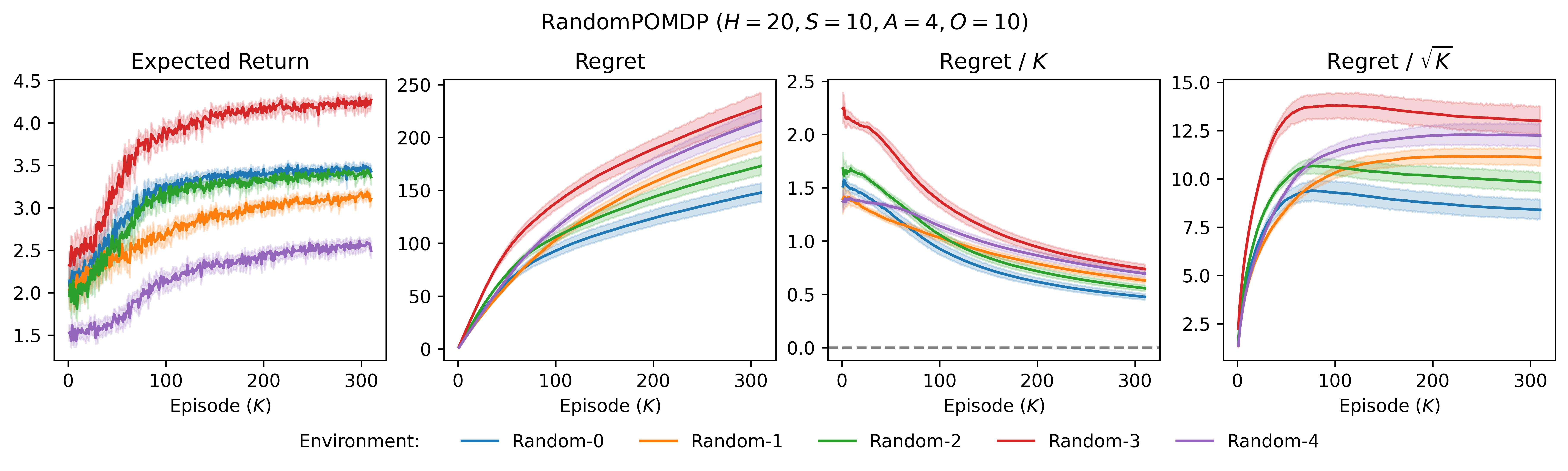}
    \caption{Performance of \texttt{PS4POMDPs} on five \texttt{RandomPOMDP}s with $H=20, S=10, A=4, O=10$. }
    \label{fig:expresults_all_randpomdp}
\end{figure}

\end{document}